\acrodef{SAM}{Safe Action Model Learning}
\acrodef{N-SAM}{Numeric Safe Action Models Learning}
\acrodef{UQV}{universally quantified variables}
\acrodef{PAC}{Probably Approximately Correct}
\acrodef{PMnr}{PlanMiner}
\newtheorem{definition}{Definition}
\newtheorem{theorem}{Theorem}[section]
\newtheorem{lemma}[theorem]{Lemma}
\newtheorem{example}{Example}
\newtheorem{proposition}{Proposition}
\newtheorem{corollary}{Corollary}
\newcommand{\pac}{\ac{PAC}\xspace}
\newcommand{\sam}{\ac{SAM}\xspace}
\newcommand{\nsam}{\ac{N-SAM}\xspace}
\newcommand{\algname}{N-SAM$^*$\xspace}
\newcommand{\tuple}[1]{\ensuremath{\left \langle #1 \right \rangle }}
\newcommand{\pre}{\textit{pre}\xspace}
\newcommand{\eff}{\textit{eff}\xspace}
\newcommand{\name}{\textit{name}\xspace}
\newcommand{\realm}{\ensuremath{M^*}\xspace}
\newcommand{\pbl}{pb-literal\xspace}
\newcommand{\pbls}{pb-literals\xspace}
\newcommand{\bei}{BEI\xspace}
\newcommand{\moveslow}{\textit{move-slow}}
\newcommand{\params}{\textit{params}}
\newcommand{\norm}[2]{\left \lVert #1 \right \rVert_{#2}}
\newcommand{\liftl}{\ensuremath{\ell}} 
\newcommand{\lifta}{\mathsf{\alpha}}
\newcommand{\pbf}{pb-function\xspace}
\newcommand{\pbfs}{pb-functions\xspace}
\newcommand{\pmn}{\ac{PMnr}\xspace}
\journal{Artificial Intelligence}
\begin{document}

\begin{frontmatter}



\title{Learning Safe Numeric Planning Action Models}


\author[label1]{Argaman Mordoch}
\author[label1]{Shahaf S. Shperberg}
\author[label1]{Roni Stern}
\author[label2]{Brendan Juba}
\affiliation[label1]{organization={Software and Information Systems Engineering, Ben-Gurion University of the Negev},city={Be'er Sheva},
country={Israel}}
\affiliation[label2]{organization={Department of Computer Science and Engineering, Washington University},city={St. Louis},
country={United States}}

\begin{abstract}
A significant challenge in applying planning technology to real-world problems lies in obtaining a planning model that accurately represents the problem's dynamics.
Obtaining a planning model is even more challenging in mission-critical domains, where a trial-and-error approach to learning how to act is not an option. 
In such domains, the action model used to generate plans must be \emph{safe}, in the sense that plans generated with it must be applicable and achieve their goals. 
In this work, we present \nsam, an action model learning algorithm capable of learning safe numeric preconditions and effects.
We prove that \nsam runs in linear time in the number of observations and, under certain conditions, is guaranteed to return safe action models. 
However, to preserve this safety guarantee, \nsam must observe a substantial number of examples for each action before including it in the learned model.
We address this limitation of \nsam and propose \algname, an extension to the \nsam algorithm that always returns an action model where every observed action is applicable at least in some states, even if it was observed only once.
\algname does so without compromising the safety of the returned action model. We prove that \algname is optimal in terms of sample complexity compared to any other algorithm that guarantees safety. 
\nsam and \algname are evaluated over an extensive benchmark of numeric planning domains, and their performance is compared to a state-of-the-art numeric action model learning algorithm. 
We also provide a discussion on the impact of numerical accuracy on the learning process.
\end{abstract}

\begin{keyword}
Numeric Planning \sep Action Model Learning \sep Convex Functions \sep Safety



\end{keyword}

\end{frontmatter}

\section{Introduction}
Automated domain-independent planning is a long-standing goal of Artificial Intelligence (AI) research. 
Most existing domain-independent planning algorithms rely on the availability of a \emph{domain model}, written in some description language such as the Planning Domain Definition Language (PDDL)~\cite{aeronautiques1998pddl} or its later extensions~\cite{fox2003pddl2,fox2002pddl+}. 
A domain model in domain-independent planning usually includes an \emph{action model} that specifies which action can be applied and, for each action, the preconditions to apply it and its effects on the environment. 
Manually formulating an action model for real-world problems is notoriously challenging and can be error-prone.
Consequently, algorithms have been developed to automatically learn action models from observations~\cite[inter alia]{cresswell2011generalised,aineto2019learning,yang2007learning}. 

However, a plan generated using a learned action model may not be \emph{sound}, i.e., it may include actions that cannot be applied or do not achieve their intended effects according to the real (unknown) action model. 
The \sam algorithm~\cite{stern2017efficientAndSafe,juba2021safe} addresses this concern. 
It returns a \emph{safe} action model, which is defined as an action model that, under certain conditions, guarantees plans generated with it are sound w.r.t. the real (unknown) action model. 
\sam only supports learning action models for classical planning domains, but it has been extended to support action models with stochastic effects~\cite{juba2022learning}, partial observability~\cite{le2024learning}, and conditional effects~\cite{mordoch2024safe}.

In this work, we present and expand the \nsam algorithm~\citep{mordoch2023learning}, a safe action model learning algorithm that can learn action models in settings where actions may have numeric preconditions and effects. 
We show that the sample complexity of \nsam is not optimal.
Given an action $a$ and a set of $n$ state variables, \nsam requires observing at least $n+1$ affine independent states in which $a$ was executed before including $a$ in the returned action model. 
Specifically, if \nsam lacks this initial set of observations, it deems the action inapplicable in every state. 
This limits the applicability of \nsam, especially in large domains with many state variables.

To overcome this limitation, we propose \algname, an enhanced version of \nsam. 
\algname includes every observed action in the action model it returns, even if it is only observed once, without compromising \nsam's safety guarantees. 
\algname runs in time that is linear in the number of observations, returns a safe action model, and achieves optimal sample complexity. 
We empirically compare \nsam and \algname on a set of benchmark domains. 
The results show that \algname learns an action model that is as good, and in two domains significantly better than \nsam.

We also compare our algorithms to PlanMiner --- a state-of-the-art numeric action model learning algorithm.
We show that in domains that contain significantly more numeric attributes, \nsam and \algname, outperform PlanMiner. 

Finally, we discuss the impact of numerical imprecision on the learning process.
We provide empirical results demonstrating how varying numerical precision affects the ability to solve planning problems.


\section{Preliminaries}
We focus on planning problems in domains with deterministic action outcomes and fully observable states, represented using a mix of Boolean and continuous state variables. 
Such problems can be modeled using the PDDL2.1~\citep{fox2003pddl2} language.
A \emph{domain} is defined by a tuple $D=\tuple{F, X, A, M}$, 
where $F$ is a finite set of Boolean variables, referred to as fluents;
$X$ is a set of numeric variables referred to as functions;
$A$ is a set of actions; and $M$ is an action model for these actions, as described below. 
A state is the assignment of values to all variables in $F \cup X$. 
For a state variable $v\in F\cup X$, we denote by $s(v)$ the value assigned to $v$ in state $s$. 
Every action $a\in A$ is defined by a tuple $\tuple{\name(a), \params(a)}$,
representing the action's name and parameters. 
An action model $M$ is a pair of functions, $\pre_{M}$ and $\eff_{M}$, that map actions in $A$ to their preconditions and effects, respectively. 
The preconditions of action $a$, denoted $\pre_M(a)$, consist of a set of assignments over the Boolean fluents and a set of conditions over the functions, specifying the states in which $a$ can be applied. 
These conditions are of the form $(\xi, Rel, k)$, where $\xi$ is an arithmetic expression over $X$, $Rel\in\{\leq, <, =, >, \geq\}$, and $k$ is a number. 
We say that an action $a$ is applicable in a state $s$ under the action model $M$, denoted $app_M(a,s)$, if $s$ satisfies $\pre_M(a)$. 
The effects of action $a$, denoted $\eff_M(a)$, are a set of assignments over $F$ and $X$ representing how the state changes after applying $a$. 
An assignment over a Boolean fluent is either \texttt{True} or \texttt{False}.  
An assignment over a function $x \in X$ is a tuple of the form $\tuple{x, op, \xi}$,
where $\xi$ is a numeric expression over $X$, and $op$ one of the following operations: increase (``+=''), decrease (``-=''), or assign (``:='').\footnote{Technically, \nsam also supports the \textit{scale-up} and \textit{scale-down} operations but we omit them from this discussion since their usage is extremely rare.}
Applying $a$ in $s$ according to the action model $M$, denoted $a_M(s)$, is a state that differs from $s$ only according to the assignments in $\eff_M(a)$. 
Unless a distinction between action models is required, we omit the subscript and use $a(s)$, $\pre(a)$, and $\eff(a)$ instead of $a_M(s)$, $\pre_M(a)$, and $\eff_m(a)$.

A planning problem is defined by the tuple $\tuple{D, s_0, G}$, where $D$ is a domain, $s_0$ is the initial state, and $G$ is the set of problem goals.  
The problem goals $G$ consist of assignments of values to a subset of the Boolean fluents and a set of conditions over the numeric functions. 
A solution to a planning problem is a \emph{plan}, i.e., a sequence of actions $\{a_0,a_1,...,a_n\}$ such that $a_0$ is applicable in $s_0$ and $a_n(a_{n-1}(...a_0(s_0)...))$ results in a state $s_G$ in which $G$ is satisfied. 
A \emph{state transition} is represented as a tuple $\tuple{s, a, s'}$, where $s$ denotes the state before the execution of $a$, and $s'$ is the result of applying $a$ in $s$, i.e., $s' = a(s)$.  
The states $s$ and $s'$ are commonly referred to as the \emph{pre-state} and \emph{post-state}, respectively.
A \textit{trajectory} is a list of state transitions.

Planning domains and problems are often defined in a \emph{lifted} manner, where actions, fluents, and functions are parameterized, and their parameters may have \emph{types}. 
Grounded actions, fluents, and functions are pairs of the form $\tuple{\nu,b_{\nu}}$ where $\nu$ can be an action, fluent, or function, and $b_{\nu}$ is a function that maps parameters of $\nu$ to concrete objects of compatible types.
The term \emph{literal} refers to either a fluent or its negation.
The concepts of binding, lifting, and grounding for fluents naturally extend to literals.
A state is an assignment of values to all grounded literals and functions. 
A plan is a sequence of \emph{grounded actions}. 
The preconditions and effects of an action in a lifted domain contain \emph{parameter-bound} literals (\pbls) and functions (\pbfs). 
A \pbl for a lifted action $\lifta$ is a pair $\tuple{\liftl,b_{\liftl,\lifta}}$ where $\liftl$ is a lifted literal and $b_{\liftl,\lifta}$ is a function that maps each parameter of $\liftl$ to a parameter in $\lifta$. \pbfs are similarly defined. 
We denote by $L(\lifta)$ and $X(\lifta)$ the set of \pbls and \pbfs, respectively, that can be bound to $\lifta$, i.e.,
all the \pbls and \pbfs with parameters that match the parameters of $\lifta$.

\begin{example}
    The Farmland domain~\cite{scala2016heuristics} includes two lifted functions: \textit{(x ?b - farm)}, representing the number of workers in the farm $b$, and \textit{(cost)}, representing the cost of executing an action. 
    One of the actions in this domain, given in Figure~\ref{list:farmland-move-slow-action}, is \moveslow.  
    \moveslow\ has two parameters, two preconditions, and two effects. 
    The preconditions are $(>=\; (x\; ?f1)\; 1)$, which states that executing the action requires at least one worker in farm $?f1$;
    and $(adj\; ?f1\; ?f2)$, stating that moving workers from $?f1$ to $?f2$, requires the farms to be adjacent. 
    The effects of the action decrease $(x\; ?f1)$ by one and increase $(x\; ?f2)$ by one. 
    The sets $L(\moveslow)$, and $X(\moveslow)$ are defined as follows:
    $L(\moveslow)=\{\text{(adj ?f1 ?f2)}, \text{(adj ?f2 ?f1)}, \\\text{(not (adj ?f1 ?f2))}, \text{(not (adj ?f2 ?f1))}\}$
    and $X(\moveslow)=\{\text{(x ?f1)},\\ \text{(x ?f2)},\;\text{(cost)}\}$.
    \label{ex:moveslow}
\end{example}


\begin{figure}[ht]
\begin{center}
\begingroup
    \fontsize{10pt}{10pt}\selectfont
\begin{Verbatim}[commandchars=\\\{\}]
(\textcolor{blue}{:action} move-slow
\textcolor{blue}{:parameters} (?f1 - farm ?f2 - farm)
\textcolor{blue}{:precondition} (and (>= (x ?f1) 1) (adj ?f1 ?f2))
\textcolor{blue}{:effect} (and (decrease (x ?f1) 1) (increase (x ?f2) 1)))
\end{Verbatim}
\endgroup
\vspace{-0.3cm}
\caption{The \textit{move-slow} action in the Farmland domain.}
\vspace{-0.5cm}
\label{list:farmland-move-slow-action}
\end{center}
\end{figure}


\subsection{Numeric Planning Algorithms}
Significant progress has been made in the field of numeric planning. 
In this section, we present some of the most notable algorithms capable of solving numeric planning tasks.

Metric-FF~\citep{hoffmann2003metric} extends the FF algorithm~\citep{hoffmann2001ff} originally designed for classical planning to linear numeric planning. 
The classical FF heuristic estimates the number of actions required to reach the goal from a given state $s$, assuming that delete effects are ignored.
Metric-FF's heuristic works on planning problems where the goal comparisons are in the form of $comp \in \{\geq,>\}$, i.e., the problem is \emph{monotonic}.
Under this restriction, Metric-FF transfers the classical FF heuristic by ignoring numeric \textit{decrease} effects, e.g., $x-=y$. 
For non-monotonic planning problems, Metric-FF first converts the problems to Linear Normal Form (LNF) and then applies the heuristic.

ENHSP~\citep{scala2016heuristics,scala2016interval,scala2017landmarks,li2018effect} generalizes the interval-based relaxation technique, which approximates reachable values using upper and lower bounds. 
Unlike Metric-FF, which only supports linear conditions and effects, and limited cyclic dependencies among numeric variables, ENHSP supports non-linear conditions and effects, complex cyclic dependencies, and multiple mathematical functions.
ENHSP includes multiple heuristics such as Additive Interval-Based Relaxation (AIBR) and the numeric Landmark heuristic. 

The Numeric Fast-Downward (NFD) algorithm~\citep{aldinger2017interval} adapts the $h_{max}, h_{add}$, and the $h_{FF}$ heuristics to interval-based numeric relaxation framework. 
NFD also supports non-uniform action costs.  
Several extensions have been implemented in NFD, including techniques for identifying structural symmetries in numeric planning~\citep{shleyfman2023symmetry}, identifying lower and upper bounds on numeric variables in linear planning~\citep{kuroiwa2023bound} and the numeric LM cut heuristic~\citep{kuroiwa2022lmcut}.

Beyond traditional planning algorithms, recent research has explored the integration of machine learning techniques to enhance numeric planning. 
Notable examples are Numeric-ASNets~\citep{wang2024learning} and GOOSE~\citep{chen2024graph}.
Numeric-ASNets ($\nu$-ASNets) builds upon ASNets~\citep{toyer2020asnets}, employing an architecture based on action and state layers. 
The action layers represent grounded actions, while the state layers encode Boolean fluents, numeric functions, and comparisons. 
The network structure models data flow between states and actions, with a key feature being \emph{weight sharing}—that is, modules representing the same lifted property share weights. 
This weight-sharing mechanism enables $\nu$-ASNets to generalize from small training instances to larger and more complex planning problems.

GOOSE is a data-efficient and interpretable machine learning model for learning to solve numeric planning problems. 
The algorithm consists of two stages: (1) encoding the problem to a \textit{Numeric Instance Learning Graph} ($\nu ILG$), and (2) running a Weisfeiler-Lehman (WL) kernel algorithm on a Graph Neural Network (GNN) created from the features of the $\nu ILG$. 
The authors also introduce two heuristics for their GNN, namely, cost-to-go and ranking estimates, which were used to train the model and compute the loss function.

All the aforementioned approaches assume that a planning domain is available prior to the planning process. 
However, this assumption may not hold in some cases. 
To address such cases, many action model learning approaches have been proposed throughout the years.
Next, we present several prominent examples of action model learning algorithms.

\subsection{Action Model Learning} 
Different algorithms have been proposed for learning planning action models. 
Some action-model learning algorithms, such as LOCM~\citep{cresswell2011generalised} and LOCM2~\citep{cresswell2013acquiring}, analyze observed plan sequences instead of state transitions. SLAF~\citep{amir2008learning} can learn action models from partially observable state transitions. 
FAMA~\citep{aineto2019learning}, frames the task of learning an action model as a planning problem, ensuring that the returned action model is consistent with the provided observations.
NOLAM~\citep{Lamanna24}, can learn action models from noisy trajectories. 
LatPlan~\citep{asai2018classical} and ROSAME-I~\citep{xi2024neuro} learn propositional action models from visual inputs.

However, none of the presented algorithms provides execution soundness guarantees, i.e., that plans created with the learned action model are applicable in the real action model. 
The \sam learning framework~\citep{stern2017efficientAndSafe,juba2021safe,juba2022learning,le2024learning,mordoch2024safe} addresses this gap by providing the following guarantee: the learned action model is \emph{safe} in the sense that plans generated with it are guaranteed to be applicable and yield the predicted states. 

The above action model learning algorithms learn action models when the state variables contain only Boolean fluents. 
To the best of our knowledge, NLOCM~\citep{gregory2016domain} and PlanMiner~\citep{segura2021discovering} are the only algorithms that support learning numeric properties in action models.
NLOCM learns action models for classical planning that include action costs. However, it does not learn numeric preconditions or effects unrelated to action costs, and therefore is not applicable to general numeric planning.
PlanMiner learns numeric action models from partially known and noisy plan traces, using machine learning methods, specifically symbolic regression and classification.
The above algorithms, while being innovative, do not provide safety guarantees (similar to previously discussed methods).

In this work, we present \nsam and \algname, action model learning algorithms that learn numeric action models while providing safety guarantees. 
Both algorithms use the \sam learning~\citep{juba2021safe} algorithm to learn the Boolean part of the action model they return. Therefore, we provide a brief explanation of this algorithm here for completeness.

\subsubsection{The \sam Learning Algorithm}\label{sec:sam-reminder}
\sam begins by assuming that all the literals are part of the actions' preconditions and that the effects are empty. 
Then, for every state transition $\tuple{s,a,s'}$ it observes, it applies the following inductive rules:
\begin{itemize}
    \item Rule 1 [not a precondition].  $\forall l=\tuple{\liftl,b_{\liftl}} \notin s: \tuple{\ell,b_{\liftl,\lifta}} \notin \pre(\lifta)$
    \item Rule 2 [not an effect].  $\forall l=\tuple{\liftl,b_{\liftl}} \notin s': \tuple{\ell,b_{\liftl,\lifta}} \notin \eff(\lifta)$
    \item Rule 3 [must be an effect].  $\forall l=\tuple{\liftl,b_{\liftl}} \in s'\setminus s: \tuple{\ell,b_{\liftl,\lifta}} \in \eff(\lifta)$
\end{itemize} 
Rule 1 states that if a literal is not in the state before applying $a$, its parameter-bound version cannot be a precondition. 
Rule 2 states that if a literal is not in the state after applying $a$, its parameter-bound version cannot be an effect. 
Rule 3 states that if a literal is in the difference between the post-state and pre-state, its parameter-bound version must be an effect. 
By applying the above inductive rules, \sam removes redundant preconditions and adds newly observed effects.

\section{Problem Definition}
\label{sec:problem-def}

We consider a problem solver tasked with solving a numeric planning problem $\tuple{D=\tuple{F,X,A},\mathcal{T}, s_0, G}$. 
The main challenge is that the problem solver does not receive the action model \realm. 
Instead, it receives trajectories of \textit{successfully executed} plans $\mathcal{T}$, extracted from a distribution of problems within the same domain $D$.
A human operator, random exploration, or some other domain-specific process could have generated these trajectories. 
We assume the problem solver has full observability of these trajectories, which means it knows the value of every variable in every state in every trajectory $t\in\mathcal{T}$, as well as the name and parameters of every action in the trajectories. 


Our approach to solving this problem comprises two steps. (1) \emph{learning} an action model $M$ using the given trajectories $\mathcal{T}$, and (2) \emph{planning} using the learned action model $M$, i.e., using an off-the-shelf PDDL 2.1 planner to find a plan for the planning problem $\tuple{\tuple{F,X,A,M}, s_0, G}$. 
Recall that since the learned action model may differ from the actual action model, planning with it raises both safety and completeness risks.  
The relative importance of each risk is application dependent. 
This work emphasizes addressing the safety risk, which is crucial in applying our method to mission-critical applications or applications in which plan failure is very costly. 
To this end, we aim to learn a \emph{safe action model}~\citep{stern2017efficientAndSafe,juba2021safe}. 

\begin{definition}[Safe Action Model]
\label{def:safe_action_model}
An action model $M'$ is safe with respect to an action model $M$ iff for every state $s$ 
and action $a$ it holds that 
\begin{equation}
app_{M'}(a,s)\rightarrow (app_M(a,s) \wedge (a_M(s)=a_{M'}(s))) 
    \label{eq:safe_action_model}
\end{equation}
\end{definition}

\noindent That is, for any action $a$ and state $s$ which the action is applicable according to $M$, it is also applicable according to $M'$, and the resulting state of applying $a$ on $s$ according to $M$ is exactly the same as $a_{M'}(s)$.
In the context of numeric planning, this safety definition is applicable only if the learning process is free of numerical errors. 
For simplicity, we assume in the rest of this paper that the learning process does not induce numerical error, and discuss the effects of numeric errors in Section~\ref{sec:numeric-precision}.

\subsection{Theoretical Analysis}
\label{sec:negative-results}

The problem of learning numeric preconditions and effects of a safe action model can be mapped to known results in the \pac Learning literature. This allows us to establish learnability results in our settings and identify which assumptions may enable efficient learning. 
To formalize the notion of efficient learning in our setting, we introduce the following terminology. 
We say that an action model solves a given problem if a complete planner using that action model will be able to solve it. 
We say that an action model learning algorithm is $(1-\epsilon)$-complete if it returns an action model that can solve a given problem with probability at least $1-\epsilon$.\footnote{This, of course, assumes the training and testing problems are drawn from the same distribution of problems.}  


\subsubsection{Learning Preconditions} Learning the preconditions of actions can be viewed as the problem of learning a Boolean-valued function, where the training examples are the given trajectories that include that action. 
The given trajectories were created by successfully executing plans in the domain. In every trajectory transition, the preconditions of the executed actions have necessarily been satisfied. 
Thus, learning the preconditions of actions is a special case of learning a Boolean-valued function from only positive examples~\citet{kearns1994learning}. 
\begin{definition}[PAC Learning from Positive Examples]
Let $\mathcal{X}$ denote our set of instances. Let $\mathcal{C}$ and $\mathcal{H}$ be sets of Boolean-valued functions on $\mathcal{X}$. The problem of \emph{PAC learning $\mathcal{C}$ with $\mathcal{H}$ using positive examples} is as follows: we are given parameters $\epsilon,\delta \in (0,1)$, and access to examples from $\mathcal{X}$ sampled from a distribution $P$ supported on $\{x\in\mathcal{X}:c(x)=1\}$ for some $c\in\mathcal{C}$. 
With probability $1-\delta$, we must return $h\in\mathcal{H}$ such that
(1) \emph{$h$ has no false positives:} $h(\mathcal{X})\subseteq c(\mathcal{X})$, and 
(2) \emph{$h$ is $1-\epsilon$ accurate:} $\Pr_{x\in P}[h(x)=c(x)]\geq 1-\epsilon$, where $\Pr_{x\in P}[h(x)=c(x)]$ means the probability that $h(x)=c(x)$ for an example $x\in\mathcal{X}$ sampled from a distribution $P$.    
If an algorithm runs in time polynomial in the representation size of members of $\mathcal{X}$, the representation size of $c$, $1/\epsilon$, and $1/\delta$, then we say that the algorithm is \emph{efficient}.
If $\mathcal{C}=\mathcal{H}$, this is known as the \emph{proper} variant of the problem. Otherwise, the learner is said to be \emph{improper}.
\end{definition}

Next, we reduce the problem of PAC learning from positive examples to the problem of learning preconditions of a safe action model. 
\begin{proposition}\label{safe-to-pos-pac}
Suppose there exists a safe action model learning algorithm $\mathcal{A}$ for domains with preconditions from $\mathcal{C}$ that produces preconditions from $\mathcal{H}$ such that $\mathcal{A}$ is guaranteed to be $1-\epsilon$-complete with probability $1-\delta$ when given at least $m(\epsilon,\delta)$ trajectories as input for some function $m$. 
Then there is a PAC learning algorithm for learning $\mathcal{C}$ with $\mathcal{H}$ using $m(\epsilon,\delta)$ positive examples. Moreover, if the safe action model learning algorithm is efficient, so is the PAC learning algorithm.
\end{proposition}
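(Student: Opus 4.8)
The plan is to prove the statement by a reduction: given an instance of PAC learning $\mathcal{C}$ with $\mathcal{H}$ from positive examples (target $c\in\mathcal{C}$, distribution $P$, and parameters $\epsilon,\delta$), I would build a single-action planning domain in which the true precondition of the action is exactly $c$, run the assumed safe learner $\mathcal{A}$ on trajectories manufactured from the positive examples, and return the learned precondition as the PAC hypothesis $h\in\mathcal{H}$ (valid since $\mathcal{A}$ produces preconditions from $\mathcal{H}$). The core correspondence is: instances $x\in\mathcal{X}$ are identified with (numeric) states $s_x$; the concept $c$ plays the role of the precondition $\pre(a)$ of a single action $a$; a positive example ($c(x)=1$) corresponds to a state in which $a$ is applicable; and ``$\mathcal{A}$ is complete on a problem'' will be engineered to coincide with ``$h$ deems $a$ applicable at $s_x$.''

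First I would fix the constructed domain: one action $a$, a designated goal fluent $g$ that is false in every $s_x$ and whose only possible cause is an effect of $a$, and $\pre(a)=c$. For each positive example $x$ I build the length-one trajectory $\tuple{s_x, a, s_x'}$, where $s_x'$ is $s_x$ with $g$ set true; this is a legitimate successful execution precisely because $x$ being positive guarantees $c$ holds at $s_x$, so $a$ is genuinely applicable. Feeding these $m(\epsilon,\delta)$ trajectories to $\mathcal{A}$ yields a safe model $M'$, and I set $h$ to be $\pre_{M'}(a)$, reading $h(x)=1$ as $app_{M'}(a,s_x)$. For the test side I use the family of problems $\tuple{s_x, \{g\}}$ obtained by drawing $x\sim P$; since $a$ is the unique way to establish $g$, such a problem is solvable under a given model iff that model makes $a$ applicable at $s_x$.

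Next I would verify the two PAC guarantees. The \emph{no false positives} condition is immediate from safety (Definition~\ref{def:safe_action_model}): $app_{M'}(a,s_x)\rightarrow app_M(a,s_x)$, i.e.\ $h(x)=1\Rightarrow c(x)=1$, so $h(\mathcal{X})\subseteq c(\mathcal{X})$ holds deterministically. For accuracy, note $P$ is supported on positives, so $c\equiv 1$ there and hence $h(x)=c(x)$ reduces to $h(x)=1$. By construction every test problem $\tuple{s_x,\{g\}}$ is solvable under the true model $M$ (since $c(x)=1$ and, via safety, applying $a$ yields $a_M(s_x)=a_{M'}(s_x)$, reaching $g$), so the hypothesis ``$\mathcal{A}$ is $1-\epsilon$-complete with probability $1-\delta$'' translates, under the test distribution induced by $P$, into: with probability $1-\delta$ over the input, $\Pr_{x\sim P}[\text{problem }x\text{ solvable with }M'] = \Pr_{x\sim P}[h(x)=1]\geq 1-\epsilon$. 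Combined with no false positives, this is exactly $\Pr_{x\sim P}[h(x)=c(x)]\geq 1-\epsilon$ with probability $1-\delta$, as required. Efficiency transfers because the only overhead is the linear-size encoding of positive examples as trajectories and the read-off of $h$ from $M'$.

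I expect the main obstacle to be making the ``solvable iff applicable'' equivalence airtight rather than the probabilistic bookkeeping. Specifically, I must argue that solvability under the learned model $M'$ is governed purely by the precondition and is never accidentally decided by a mislearned effect: this is where safety does the real work, ensuring that whenever $M'$ admits $a$ at $s_x$ its predicted outcome agrees with $a_M(s_x)$ and therefore genuinely reaches $g$, so no spurious solvability or unsolvability is introduced. A secondary point requiring care is confirming that the manufactured single-transition trajectories are admissible inputs to $\mathcal{A}$ (they are consistent with a model whose preconditions lie in $\mathcal{C}$) and that the induced test-problem distribution is a legitimate instantiation of the ``given problem'' distribution underlying the completeness guarantee.
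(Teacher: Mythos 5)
Your proposal is correct and follows essentially the same reduction as the paper: your goal fluent $g$ is exactly the paper's auxiliary Boolean attribute $t$, your one-step trajectories $\tuple{s_x,a,s_x'}$ match its $(x,0)\to(x,1)$ construction, and both arguments derive no-false-positives from safety and $1-\epsilon$ accuracy from the fact that reaching the goal forces the single action to be applicable. Your added care on the ``solvable iff applicable'' equivalence (ruling out mislearned effects via the $a_M(s)=a_{M'}(s)$ clause of safety) is a point the paper's proof passes over more quickly, but it is a refinement of the same argument, not a different route.
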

\begin{proof}
Consider a domain with one action $a$ and states given by $\mathcal{X}$ extended by one Boolean attribute, $t$. 
We provide the following trajectories to our safe action model learner: 
given an example $x$ sampled from $P$ for the problem of learning from positive examples, we construct the trajectory with $(x,0)$ as the first state, i.e., with $t=0$, followed by the action $a$ and $(x,1)$ as the final state. The associated goal is ``$t=1$.''
We obtain an action model from the algorithm run with $\delta$ and $\epsilon$ and given $m(\epsilon,\delta)$ examples, and return its precondition for $a$ with $t$ set to $0$ as our solution $h$ for PAC learning. Note that the trajectories are consistent with an action model in which $c\in\mathcal{C}$ (the correct hypothesis for the positive-example PAC learning problem) is the precondition of $a$ and the only effect of $a$ is to set $t=1$.
$h$ must have no false positives because the precondition for $a$ must be safe: if there exists $x\in \mathcal{X}$ such that $h(x)=1$ but $c(x)=0$, our action model would permit $a$ for some $x$ where its precondition is violated, thus violating safety. Similarly, $h$ must be $1-\epsilon$-accurate: our action model learner guarantees that with probability $1-\delta$, it is $1-\epsilon$ complete. 
Observe that in our distribution over examples, $t=0$ initially, so to satisfy the goal, the plan must include the action $a$.
Hence, the precondition for $a$ must be satisfied with probability at least $1-\epsilon$ on $P$, or else the action model would fail with probability greater than $\epsilon$. Hence, $h$ is indeed as required for PAC learning. The ``moreover'' part is immediate from the construction.
\end{proof}
\vspace{-0.3cm}
The above reduction allows us to identify which classes of preconditions cannot be efficiently learned by the family of Boolean-valued functions that cannot be efficiently learned from positive examples. 
\begin{corollary}[\citep{goldberg1992pac}]
The family of preconditions given by single linear inequalities with at most two variables cannot be safely learned by any $\mathcal{H}$. 
\end{corollary}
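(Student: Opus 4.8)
The plan is to obtain the corollary as an immediate consequence of the contrapositive of Proposition~\ref{safe-to-pos-pac}, combined with the cited hardness result of \citet{goldberg1992pac}. Proposition~\ref{safe-to-pos-pac} shows that a safe action model learner for preconditions drawn from a class $\mathcal{C}$, producing hypotheses in $\mathcal{H}$, with a finite sample-complexity bound $m(\epsilon,\delta)$, yields a PAC learner for $\mathcal{C}$ with $\mathcal{H}$ from positive examples. The key observation I would stress is that the reduction preserves the hypothesis class: the precondition returned by the safe learner is literally the hypothesis $h$ used by the induced PAC learner. Hence the assertion ``learnable using some $\mathcal{H}$'' transfers verbatim between the two problems, and so does its negation.

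First I would instantiate $\mathcal{C}$ as the class of single linear inequalities over at most two variables, i.e., halfplanes in $\mathbb{R}^2$ together with degenerate one-variable cases. Then I would argue by contradiction: suppose that for some hypothesis class $\mathcal{H}$ there were a safe learner $\mathcal{A}$ for this $\mathcal{C}$, producing preconditions from $\mathcal{H}$, that is $1-\epsilon$-complete with probability $1-\delta$ given $m(\epsilon,\delta)$ trajectories. Applying Proposition~\ref{safe-to-pos-pac} to $\mathcal{A}$ produces a PAC learner for halfplanes with that same $\mathcal{H}$ from $m(\epsilon,\delta)$ positive examples. This contradicts \citet{goldberg1992pac}, who establish that the class of single linear inequalities in two variables admits no PAC learner from positive examples, for any $\mathcal{H}$. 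Since $\mathcal{H}$ was arbitrary, no safe learner for this $\mathcal{C}$ can exist with any $\mathcal{H}$, which is exactly the claim.

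The hard part is entirely outsourced to the cited result, so I do not expect to reprove it; were I forced to, the obstacle would be the information-theoretic argument underlying it, and I would only sketch the intuition rather than redo the construction. The intuition is that positive examples alone cannot pin down the orientation of the separating line: every positive sample lies strictly inside the target halfplane, so the samples constrain the boundary only to be \emph{some} supporting line of their convex hull, and a whole range of orientations stays consistent. To avoid false positives, any hypothesis must remain inside the true (unknown) halfplane, yet to achieve $1-\epsilon$ coverage it must press right up against that boundary — and these two requirements cannot be reconciled when the boundary's orientation is underdetermined. Turning this into a family of concepts and distributions on which \emph{every} $\mathcal{H}$ provably fails is precisely the technical content of \citet{goldberg1992pac}, and it is the step I would invoke rather than reconstruct.
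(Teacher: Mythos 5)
Your proposal is correct and follows the same route the paper intends: the corollary is obtained by instantiating the contrapositive of Proposition~\ref{safe-to-pos-pac} with $\mathcal{C}$ the class of single linear inequalities in at most two variables, and invoking the cited impossibility of PAC learning this class from positive examples with any hypothesis class $\mathcal{H}$. Your observation that the reduction preserves the hypothesis class (so non-learnability transfers verbatim over all $\mathcal{H}$) is exactly the point that makes the corollary immediate, and rightly leaves the hardness result itself to the citation.
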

\noindent For example, the inequality $x\leq y$ cannot be learned by any safe action model learning algorithm.

\begin{corollary}[\citep{kivinen1995learning}]
The family of preconditions given by the disjunction of two univariate inequalities cannot be safely learned by any $\mathcal{H}$.
\end{corollary}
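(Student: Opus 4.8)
The plan is to obtain this corollary as an immediate application of the contrapositive of Proposition~\ref{safe-to-pos-pac}, exactly as in the preceding corollary but with a different hardness result plugged in. Proposition~\ref{safe-to-pos-pac} shows that any (efficient) safe action model learner for domains whose preconditions are drawn from a class $\mathcal{C}$, outputting hypotheses from a class $\mathcal{H}$, can be converted into an (efficient) PAC learner for $\mathcal{C}$ with $\mathcal{H}$ from positive examples. Contrapositively, if $\mathcal{C}$ admits \emph{no} efficient positive-example PAC learner for \emph{any} hypothesis class $\mathcal{H}$, then no $\mathcal{H}$ can be used to safely learn $\mathcal{C}$. So it suffices to instantiate $\mathcal{C}$ and cite the matching positive-example lower bound.

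First I would set $\mathcal{C}$ to be the family of state sets defined by a disjunction $(x_i\ Rel_1\ k_1)\vee(x_j\ Rel_2\ k_2)$ of two inequalities, each over a single function variable, and reuse verbatim the reduction in the proof of Proposition~\ref{safe-to-pos-pac}: the single-action domain augmented by one Boolean attribute $t$, in which the true precondition of $a$ is the target $c\in\mathcal{C}$, the sole effect of $a$ sets $t=1$, and the goal is $t=1$. Under this embedding a positive example $x\sim P$ (a state satisfying $c$) becomes the legal trajectory with pre-state $(x,0)$, action $a$, and post-state $(x,1)$; since $t=0$ initially, every plan that reaches the goal must apply $a$. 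Consequently $1-\epsilon$-completeness of the learned model is exactly the requirement that the learned precondition $h$ be $1-\epsilon$-accurate on $P$, while safety is exactly the no-false-positives requirement $h(\mathcal{X})\subseteq c(\mathcal{X})$. Hence an efficient safe learner for this $\mathcal{C}$ is an efficient positive-example PAC learner for it, preserving $\epsilon$, $\delta$, and sample size.

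Then I would invoke the lower bound of~\citet{kivinen1995learning}, which establishes that the family of disjunctions of two univariate inequalities cannot be efficiently PAC learned from positive examples, and---crucially---that this impossibility is \emph{improper}, holding for every hypothesis class $\mathcal{H}$. Composing this with the reduction gives the claim: were some $\mathcal{H}$ to safely (and efficiently) learn this precondition family, we would obtain an efficient positive-example PAC learner for a class ruled out by~\citet{kivinen1995learning}, a contradiction.

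The step I expect to be the main obstacle is matching the quantifier over $\mathcal{H}$. Because a safe learner may output hypotheses in any representation---in particular outside $\mathcal{C}$---a merely \emph{proper} hardness result would leave a loophole, and only an \emph{improper} positive-example lower bound suffices to conclude ``by any $\mathcal{H}$.'' I would therefore confirm that the cited bound is stated (or can be strengthened) in the improper, representation-independent form, and verify that its target concept class coincides with disjunctions of two single-variable conditions $(x,Rel,k)$ as used in our setting, so that the embedding from Proposition~\ref{safe-to-pos-pac} is faithful and the reduction transfers the lower bound intact.
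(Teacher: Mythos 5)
Your proposal is correct and matches the paper's intended argument exactly: the paper offers no separate proof for this corollary, deriving it immediately from Proposition~\ref{safe-to-pos-pac} (in contrapositive form) combined with the representation-independent positive-example hardness result of \citet{kivinen1995learning}, which is precisely your route. Your added check that the cited lower bound must hold improperly --- for every hypothesis class $\mathcal{H}$ --- is a sound and worthwhile verification, and it is indeed how the paper's ``by any $\mathcal{H}$'' quantifier is justified.
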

\noindent For example, the inequality disjunction $x + 3y < 7$ \textit{or} $7x - 5y > 13$ cannot be learned by any safe action model learning algorithm.

In particular, classes of preconditions $\mathcal{C}$ that contain the above representations as special cases cannot be safely learned. 
The strongest class of Boolean-valued function that \emph{is} known to be learnable is ``axis-aligned boxes,'' i.e., conjunctions of univariate inequalities \citep{natarajan1991probably}.

\subsubsection{Learning Effects}
The problem of learning effects is essentially similar to regression under the ``sup norm loss'': we demand a bound on the maximum error that holds with high probability. 
We can characterize the sample complexity of learning effects easily when the errors are considered under the $\ell_\infty$-norm, and observe that since all $\ell_q$-norms are equivalent up to polynomial factors in the dimension, this, in turn, characterizes which families of effects are learnable for all $\ell_q$ norms.
\begin{theorem}(cf.\ \citet[Theorem 3]{anthony1996valid})
Let $\mathcal{A}$ be a class of functions mapping $X$ to $X$, such that the true effects function $A^*$ is in $\mathcal{A}$, and let $\mathcal{A}'_\epsilon$ be the set of Boolean-valued functions of the form $\{A'(s)=I[\|A(s)-A^*(s)\|_\infty\leq \epsilon]:A\in\mathcal{A}\}$. Let $d$ be the VC-dimension of $\mathcal{A}'_\epsilon$. Suppose training and test problems are drawn from a common distribution $D$. Then $\Omega(\frac{1}{\delta_1}(d+\log\frac{1}{\delta_2}))$ training trajectories from $D$ are necessary to identify $A\in\mathcal{A}$ that satisfy $\|A(s)-A^*(s)\|_\infty\leq \epsilon$ with probability $1-\delta_1$ on test trajectories with probability $1-\delta_2$ over the training trajectories. In particular, if $d=\infty$, then $\mathcal{A}$ is not learnable.
\end{theorem}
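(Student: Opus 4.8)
The plan is to reduce the effects-learning problem to the realizable-case PAC learning problem for the induced Boolean concept class $\mathcal{A}'_\epsilon$, and then invoke the standard VC-dimension lower bound on PAC sample complexity (the cited Anthony--Bartlett result). The first observation is that $A^*$ itself lies in $\mathcal{A}$, so the constant-$1$ function belongs to $\mathcal{A}'_\epsilon$: indeed $I[\|A^*(s)-A^*(s)\|_\infty\leq\epsilon]=1$ for every $s$. Consequently, outputting some $A\in\mathcal{A}$ with $\|A(s)-A^*(s)\|_\infty\leq\epsilon$ on a $(1-\delta_1)$-fraction of the test distribution is exactly the same as outputting an $A'\in\mathcal{A}'_\epsilon$ that agrees with this target constant-$1$ concept with probability at least $1-\delta_1$. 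Thus, any learner for our effects problem that succeeds with probability $1-\delta_2$ over the training trajectories is, after this relabeling, a realizable PAC learner for $\mathcal{A}'_\epsilon$ with accuracy parameter $\delta_1$ and confidence parameter $\delta_2$. Since (in the single-action construction used earlier for Proposition~\ref{safe-to-pos-pac}) each trajectory supplies exactly one state $s$ drawn from $D$ together with the value $A^*(s)$, the number of trajectories and the number of PAC examples coincide up to a constant factor.

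Next I would instantiate the worst case. Because $d=\mathrm{VCdim}(\mathcal{A}'_\epsilon)$, there is a set $S=\{s_1,\dots,s_d\}$ that $\mathcal{A}'_\epsilon$ shatters; for every labeling $b\in\{0,1\}^d$ we may fix a witness $A_b\in\mathcal{A}$ with $I[\|A_b(s_i)-A^*(s_i)\|_\infty\leq\epsilon]=b_i$. I would then place the distribution $D$ on $S$ following the classical Ehrenfeucht--Haussler--Kearns--Valiant construction: almost all mass on a single ``easy'' point and the remaining mass, of order $\delta_1$, spread over the other $d-1$ ``hard'' points, so that a training sample of size $o(d/\delta_1)$ leaves a constant fraction of the hard points unobserved. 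The key point is that on any unobserved hard point $s_j$ the learner cannot determine whether the underlying truth is within $\epsilon$ there, because the shattering witnesses realize both possibilities while being indistinguishable on the observed points; committing to a single output $A$ therefore forces an error exceeding $\epsilon$ at $s_j$ with constant probability, which accumulates to test error exceeding $\delta_1$ with probability exceeding $\delta_2$. The additive $\log(1/\delta_2)$ term arises in the usual way from the probability that the hard points are missed by every sample.

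The main obstacle, and the step that needs the most care, is that the concept class $\mathcal{A}'_\epsilon$ is defined relative to the very function $A^*$ that the learner is trying to recover, whereas the standard lower bound randomizes the target. I would handle this by letting the adversary draw the true effects function at random among the shattering witnesses that agree on the observed coordinates; the learner sees only $A^*$'s exact values on sampled states, so non-identifiability of $A^*$ at the unsampled hard points is precisely what drives the error. One must then check that this randomization keeps every candidate truth inside $\mathcal{A}$ and leaves $\mathrm{VCdim}(\mathcal{A}'_\epsilon)$ unchanged, so that the quantity $d$ in the bound is well defined for the instance we build. Once this indistinguishability is established, averaging over the random target yields the $\Omega\!\big(\tfrac{1}{\delta_1}(d+\log\tfrac{1}{\delta_2})\big)$ lower bound, and the final clause is immediate: if $d=\infty$ the shattered set can be taken arbitrarily large, so no finite number of trajectories suffices for a worst-case instance, and hence $\mathcal{A}$ is not learnable.
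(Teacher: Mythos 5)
There is a genuine gap, and it sits exactly where you flagged your ``main obstacle.'' Note first that the paper's own proof does not prove the lower bound at all: it gives only the easy reduction from sup-norm regression to effects learning (one-step trajectories for a single-action domain with pre-states $x_i$ and post-states $f^*(x_i)$) and then delegates the entire probabilistic content to Theorem 3 of \citet{anthony1996valid}. You instead attempt to inline a proof of that cited theorem by adapting the Ehrenfeucht--Haussler--Kearns--Valiant construction to the regression setting, and that adaptation does not go through from the hypotheses you have. The EHKV argument requires candidate targets that are \emph{exactly indistinguishable on the sampled points}, i.e., that take identical real values there and differ only on unseen hard points. But shattering of $\mathcal{A}'_\epsilon$ only controls the Boolean pattern of ``within $\epsilon$ of $A^*$'' at $s_1,\dots,s_d$; it says nothing about the actual values of the witnesses $A_b$, and in the effects-learning problem the learner observes exact values $A^*(s)$, not Boolean labels. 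Your phrase ``shattering witnesses that agree on the observed coordinates'' presupposes precisely the structure that shattering does not supply: two witnesses with the same Boolean pattern on the observed coordinates may take different values there, so a single labeled example can identify the target outright.

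Indeed, the per-instance statement your argument would establish is false. Take $A^*\equiv 0$ and, for each $b\in\{0,1\}^d$, let $A_b$ equal $\eta_b$ off the hard points and equal $\eta_b$ or $2\epsilon+\eta_b$ at $s_i$ according to whether $b_i=1$ or $b_i=0$, with the $\eta_b\in(0,\epsilon/2)$ pairwise distinct. Then $\mathcal{A}'_\epsilon$ shatters $\{s_1,\dots,s_d\}$, so $d$ is as large as you like, yet any two functions in $\{A^*\}\cup\{A_b\}$ differ at \emph{every} point, so one trajectory identifies the truth exactly and the learner succeeds immediately --- no $\Omega(d/\delta_1)$ dependence. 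Hence the theorem can only be read as a worst-case (``there exists a hard class/target/distribution with this $d$'') statement, and a correct proof must \emph{construct} a specific hard family in which the candidate truths coincide in value on the heavy point and on all but the unseen hard points (this is what the construction behind the cited Anthony et al.\ result provides), rather than argue from an arbitrary shattered set of the given class. Relatedly, your opening reduction to ``realizable PAC learning of the constant-$1$ concept in $\mathcal{A}'_\epsilon$'' cannot by itself invoke the standard VC lower bound: that bound randomizes the target concept over the class, whereas here the Boolean target is fixed (constant $1$) and the unknown is $A^*$, i.e., the correspondence between hypotheses $A$ and induced Boolean functions $A'$. You correctly identified this mismatch, but the proposed randomization over shattering witnesses does not close it, for the reason above; the final ``$d=\infty$ implies non-learnability'' clause inherits the same gap.
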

\begin{proof}
The sup norm regression problem can be reduced to learning of effects as follows: given a training set $\{(x_i,f^*(x_i))\}_{i=1}^m$, construct one-step trajectories for a planning domain with a single action, initial states given by $x_i$, and post-states given by $f^*(x_i)$. Then an estimate of the effect $f$ that is $\epsilon$-close to $f^*$ with probability $1$-$\delta_1$ indeed yields a solution to the original regression problem. The bound thus follows from Theorem 3 of \citet{anthony1996valid}. 
\end{proof}

Thus, we see that some restrictions on the family of effects are necessary for learnability. Fortunately, unlike preconditions, these restrictions are relatively mild. For example, for linear functions in $k$ dimensions, the VC-dimension of the corresponding $\mathcal{A}'_\epsilon$ is $O(k^2)$ \citep[Prop.\ 18]{anthony1996valid}.

\subsection{Assumptions}
\label{sec:assumptions}
Based on the theoretical results presented above on the learnability of action models, we limit our attention to scenarios that satisfy the following assumptions: 
\begin{enumerate}
    \item The actions’ preconditions are conjunctions of conditions over Boolean and numeric state variables. 
    \item The conditions over the numeric state variables in actions' preconditions are linear inequalities.
    \item The numeric expressions defining actions' effects are linear combinations of state variables.
\end{enumerate}

While these assumptions restrict the types of domains we consider, they still cover a variety of applications. 
Furthermore, these assumptions hold in most of the domains used within the past numeric planning competitions~\citep{fox2003pddl2,scala2017landmarks,taitler20242023}. 
Next, we propose Numeric SAM (\nsam), an action model learning algorithm for numeric domains that, under the above assumptions, is guaranteed to output a safe action model.

\section{Numeric SAM (N-SAM)}

\begin{figure}[ht]
    \centering
    \includegraphics[width=\columnwidth]{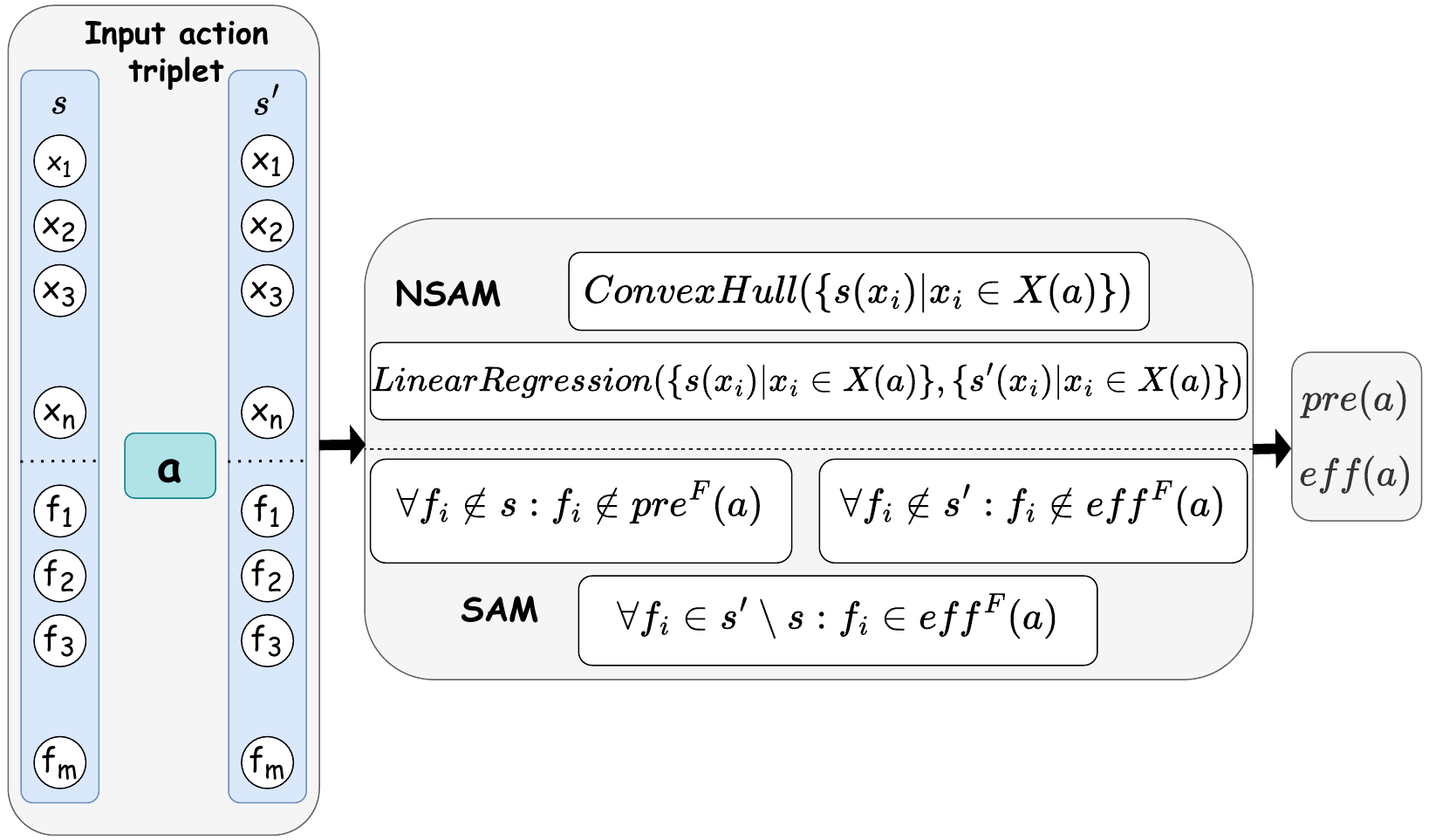}
    \caption{Graphical illustration of the Numeric SAM algorithm. The Boolean part of the action model is learned using the \sam algorithm. 
    The numeric preconditions are learned by applying the convex hull algorithm, and the numeric effects are learned using the linear regression algorithm.}
    \label{fig:nsam-algorithm}
\end{figure}

\begin{algorithm}[tb]
\small
\caption{The \nsam Algorithm}\label{alg:nsam-algorithm}
\begin{algorithmic}[1]
\State \textbf{Input}: The observed trajectories $\mathcal{T}$ 
\State \textbf{Output}: a safe action model and a list of actions that are unsafe to use.
\State $A_{\textit{unsafe}}\gets\emptyset$
\For{$\lifta \in A$} 
    \State $\pre^{F}(\lifta) \gets $ all \pbls bound to $\lifta$ \label{alg:nsam-init-bool1}
    \State $\eff^{F}(\lifta)  \gets \emptyset$\label{alg:nsam-init-bool2}
    \State $DB_{pre}(\lifta);\; DB_{post}(\lifta) \gets \emptyset$ \label{alg:nsam-init-num}
\EndFor
\For{$\tuple{s, a, s'}\in\mathcal{T}$}
    \State Apply \sam inductive rules to refine $\pre^{F}$ and $\eff^{F}$\label{alg:nsam-sam-part}
    \State Add $v=\{s(x)|x\in X(\lifta)\}$ to $DB_{pre}(\lifta)$\label{alg:nsam-add-to-db1}
    \State Add $v'=\{s'(x)|x\in X(\lifta)\}$ to $DB_{post}(\lifta)$\label{alg:nsam-add-to-db2}
\EndFor
\For{$\lifta \in A$}
    \State $N_{indp}\gets$ calcMatrixRank($DB_{pre}(\lifta)$)\label{alg:calc-num-independent-points}
    \If{$N_{indp} \geq |X(\lifta)| + 1$}
        \State$\pre^{X}(\lifta) \gets $ ConvexHull$(DB_{pre}(\lifta))$\label{ch_pre_state}
        \State $\eff^{X}(\lifta) \gets $ LinearRegression$(DB_{pre}(\lifta),DB_{post}(\lifta))$\label{regression-post-state}
    \Else
        \State Add $\lifta$ to $A_{\textit{unsafe}}$.\label{alg:remove-unsafe-action}
    \EndIf
\EndFor
\State $\pre \gets \pre^{F} \cup \pre^{X}$ 
\State $\eff \gets \eff^{F} \cup \eff^{X}$ 
\State \Return  $\pre,\eff, A_{\textit{unsafe}}$
\end{algorithmic}
\end{algorithm}

\nsam is an action model learning algorithm from the \sam learning framework designed to learn safe lifted action models with Boolean and numeric state variables. 
The \nsam algorithm is outlined in Algorithm~\ref{alg:nsam-algorithm}.
\nsam learns an action model that includes all actions observed in the given state transitions $\mathcal{T}$.\footnote{The order of the state transitions in the trajectory is irrelevant to the \nsam algorithm.} 
For each action $\lifta\in A$, \nsam initializes $\pre^{F}(\lifta)$ to include all the \pbls bound to $\lifta$ and initializes $\eff^{F}(\lifta)$ to be an empty set (lines~\ref{alg:nsam-init-bool1}-\ref{alg:nsam-init-bool2}). 
It also initializes two tables, $DB_{pre}(\lifta)$ and $DB_{post}(\lifta)$, representing the values of all the \pbfs that can be bound to $\lifta$, i.e., $X(\lifta)$ (line~\ref{alg:nsam-init-num}). 
For each state transition $\tuple{s, a, s'}$, \nsam iterates over the states' literals and applies \sam's inductive rules (line~\ref{alg:nsam-sam-part}) to remove incorrect Boolean preconditions and add missing Boolean effects. 
For each numeric function $x$ in the state $s$, \nsam constructs the vector $v=\{s(x)|x\in X(\lifta)\}$, which represents the values of the \pbfs bound to $\lifta$ appearing in $s$, and adds it to $DB_{\pre}(\lifta)$.
Similarly, it adds $v'=\{s'(x)|x\in X(\lifta)\}$ to $DB_{post}(\lifta)$ (lines~\ref{alg:nsam-add-to-db1}-\ref{alg:nsam-add-to-db2}).
After processing all state transitions and collecting data,
\nsam calculates the number of linearly-independent vectors in $DB_{pre}(\lifta)$, using matrix rank, and sets this value in the variable $N_{indp}$ (line~\ref{alg:calc-num-independent-points}).
If $N_{indp} \geq |X(\lifta)| + 1$, then there are enough linearly independent points in the dataset to learn the action's preconditions and effects.
Otherwise, \nsam marks the action as unsafe, so planners using the learned action model know not to use it (line~\ref{alg:remove-unsafe-action}). 
Next, we describe the process of learning numeric preconditions and effects.

\paragraph{Learning numeric preconditions}
Recall that the preconditions learning process is a special case of learning from positive examples.
\citet{natarajan1991probably} observed that when learning from positive examples, the optimal hypothesis is the intersection of all consistent candidate hypotheses. 
In our case, this is precisely the \emph{convex hull} of the observed points.
Thus,
\nsam computes the convex hull of the points in $DB_{\pre}(\lifta)$ and sets the preconditions of $\lifta$ as the set of linear inequalities that define the convex hull (line~\ref{ch_pre_state}). These inequalities can be obtained with off-the-shelf tools.\footnote{We used the convex hull algorithm available in the SciPy library in our implementation.} 

\paragraph{Learning numeric effects}
Under the linear effects assumption, the change in any variable $x\in DB_{post}(\lifta)$ is a linear combination of the values of $DB_{\pre}(\lifta)$. 
Thus, we learn the effects of an action using standard linear regression.
In more detail, for every variable $x\in DB_{post}(\lifta)$ and given state transition $\tuple{s,a,s'}$ \nsam creates an equation of the form:
\begin{equation}\label{eq:sys}
    s'(x) = w_0+\sum_{x'\in X(\lifta)} w_{x'}\cdot s(x')
\end{equation}
\nsam finds the unique solution to this set of equations and obtains the values of $w_0$ and $w_{x'}$ for all $x'\in DB_{\pre}(\lifta)$.\footnote{In our implementation, we use least-squares linear regression to obtain these weights.}
Correspondingly, \nsam sets 
$x := w_0+\sum_{x'\in DB_{\pre}(\lifta)} w_{x'}\cdot x'$ as an effect of $\lifta$ (line~\ref{regression-post-state}).

Finally, \nsam returns the learned action model, which consists of the union of numeric and Boolean preconditions and effects.
Figure~\ref{fig:nsam-algorithm} illustrates \nsam's learning process.

\subsection{Extension to Nonlinear Domains} 
\nsam can also learn preconditions and effects with polynomials of some low degree, e.g., quadratic or cubic polynomials.
To enable \nsam to learn action models with polynomial preconditions and effects, we assume that a human operator provides the maximal polynomial degree, $d$, along with the trajectories $\mathcal{T}$.
Although specifying $d$ requires additional human input, we argue that this effort is minimal compared to the complexity of manually defining numeric preconditions and effects.
Given the maximal polynomial degree $d$ and an action $\lifta \in A$, we modify $DB_{\pre}(\lifta)$ to include all possible monomials of the numeric functions up to the degree $d$.
In other words, if the original set of numeric functions had size $K = |X(\lifta)|$, now it expands to $O(K^d)$ functions, accounting for all possible monomials, composing \pbfs relevant to $\lifta$, up to degree $d$.
For example, assume a state transition $\tuple{s, a, s'}$ with the numeric functions $x$ and $y$ appearing in $s$ with the values $3$ and $5$, respectively. Also, assume that the polynomial degree $d=2$. 
\nsam will add the monomials $\{x,y,xy, x^2,y^2\}$ with their respective values $\{3,5,15,9,25\}$ to $DB_{\pre}(\lifta)$.

Next, we apply \nsam to this transformed representation, obtaining a set of linear inequalities and effects in the expanded domain. 
Notably, any polynomial function in the original domain can be expressed as a linear function in the expanded feature space, and vice versa. 
This transformation allows \nsam to operate under its original assumptions while capturing polynomial functions.

\subsection{Theoretical Properties}
\label{sec:theoretical-properties}
\noindent Next, we analyze the theoretical properties of the \nsam algorithm.

\subsubsection{Safety}
\label{sec:nsam-safe}
Next, we prove that the action model learned by \nsam, denoted $M_{\nsam}$ is safe with respect to the real action model \realm.
Since \citet{juba2021safe} already proved that applying the \sam learning algorithm returns a safe action model, it is enough to prove that the numeric preconditions and effects are safe. Proving the safety of the effects learned by \nsam is straightforward, as they are obtained via linear regression and we have enough linearly independent samples to obtain a unique solution. Thus, to prove \nsam returns a safe action model, we only need to prove that the preconditions are learned in a safe manner. 



\begin{theorem}
\label{preconds-from-convex-safe}
For every action $\lifta$, every state $s$ satisfying the numeric preconditions of $\lifta$ according to $M_{\nsam}$ also satisfies the numeric preconditions of $\lifta$ according to \realm.  
\end{theorem}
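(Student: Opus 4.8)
The plan is to exploit the fact that, under Assumption 2, the set of value-assignments to the numeric functions that satisfy the true numeric preconditions of $\lifta$ forms a convex region, together with the fact that \nsam's learned preconditions describe exactly the convex hull of points that are known to lie inside that region. Safety then reduces to the elementary observation that the convex hull is the \emph{smallest} convex set containing those points.

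First I would fix an action $\lifta$ and let $P^* \subseteq \mathbb{R}^{|X(\lifta)|}$ denote the set of value-vectors over the \pbfs in $X(\lifta)$ that satisfy the numeric preconditions of $\lifta$ according to \realm. By Assumption 2 these preconditions are a conjunction of linear inequalities, so $P^*$ is an intersection of half-spaces and is therefore convex. Next I would argue that every point collected in $DB_{pre}(\lifta)$ lies in $P^*$: the trajectories in $\mathcal{T}$ were produced by successfully executed plans, so for each transition $\tuple{s,a,s'}$ the action was applicable in $s$ according to \realm, which means $s$ satisfied the true numeric preconditions of $\lifta$, and hence the vector $\{s(x)\mid x\in X(\lifta)\}$ added to $DB_{pre}(\lifta)$ is a point of $P^*$.

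Then I would invoke minimality of the convex hull. Let $C=\text{ConvexHull}(DB_{pre}(\lifta))$ be the region bounded by the inequalities \nsam places in $\pre^{X}(\lifta)$; a state $s$ satisfies these learned numeric preconditions exactly when its restriction $\{s(x)\mid x\in X(\lifta)\}$ lies in $C$, since a point lies in a polytope iff it satisfies all the facet-defining inequalities. Because $C$ is the smallest convex set containing $DB_{pre}(\lifta)$, while $P^*$ is \emph{a} convex set containing $DB_{pre}(\lifta)$, we conclude $C\subseteq P^*$. Consequently, any state $s$ satisfying the numeric preconditions of $\lifta$ in $M_{\nsam}$ has its restriction in $C\subseteq P^*$ and therefore satisfies the numeric preconditions of $\lifta$ in \realm, which is exactly the claim.

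The main obstacle — really the only step where an assumption is essential — is establishing convexity of $P^*$ in the first step; it is precisely the linear-inequality restriction of Assumption 2 that supplies it, and the earlier impossibility corollaries (single two-variable inequalities, and disjunctions of univariate inequalities) are exactly the cases where the true feasible region fails to be convex, so that the convex-hull construction would no longer be contained in it and safety could break. A secondary point to handle carefully is the equivalence between ``satisfying the learned numeric preconditions'' and membership in $C$, so that the set containment $C\subseteq P^*$ transfers directly into satisfaction of the real preconditions; I expect this to be routine once the polytope-membership characterization is stated.
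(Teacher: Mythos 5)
Your proposal is correct and follows essentially the same route as the paper's proof: both establish convexity of the true precondition region from the linear-inequality assumption, note that every point in $DB_{pre}(\lifta)$ lies in that region because the trajectories record successful executions, and conclude containment of the learned convex hull in the true region (the paper phrases the final step via convex combinations staying inside a convex set, while you invoke minimality of the convex hull --- the same fact). Your explicit remark that membership in the hull is equivalent to satisfying its facet-defining inequalities is a minor point the paper leaves implicit, but it does not change the argument.
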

\begin{proof}

Let $s$ be a state in which $\lifta$ is applicable according to $M_{\nsam}$. 
Recall that we assumed the preconditions of every action in \realm are a conjunction of linear inequalities. 
Therefore, all points that satisfy them form a convex set.
Let $CH_{\realm}$ and $CH_{\nsam}$ be the convex hulls representing all points in which $\lifta$ is applicable in according to $\realm$ and $M_{\nsam}$, respectively. 
By construction, $CH_{\nsam}$ is the convex hull of all the points in $DB_{\pre}(\lifta)$. 
$DB_{\pre}(\lifta)$ is created from the states in which $\lifta$ was successfully applied (according to $\realm$). 
Therefore, we know that every point in $DB_{\pre}(\lifta)$ is within $CH_{\realm}$. 
Since $CH_{\realm}$ is a convex set, then every 
convex combination of points in it also lies in $CH_{\realm}$~\cite{lay2007convex}.
Consequently, every point in $CH_{\nsam}$ must also be in $CH_{\realm}$, and thus if $\lifta$ is applicable in a state $s$ according to $M_{\nsam}$
then it is also applicable in $s$ according to \realm.
\end{proof}


\subsubsection{Runtime}
The runtime of \nsam depends on the following factors: 
\begin{itemize}
    \item The number of lifted actions, denoted $N_A$. 
    \item The number of state transitions in the given trajectories, denoted $N_T$
    \item The number of \pbls that can be bound to each lifted action. We denote by $N_L$ the maximum of these values over all actions, i.e., $N_L=\max_\lifta |L(\lifta)|$.
    \item The number of \pbfs that can be bound to each lifted action. Similarly, $N_X$ denotes the maximum of these values over all actions. 
\end{itemize}
\begin{theorem}
The runtime complexity of the \nsam algorithm is: 
\[
    O((N_A+N_T)\cdot(N_X+N_L) + N_A\cdot\frac{N_{T}^{\lfloor N_X/2 \rfloor}}{{\lfloor N_X/2 \rfloor}!} + N_A\cdot({N_X}^3\cdot(N_T+N_X))
\]
which is linear in the number of lifted actions, \pbls, and state transitions, 
and exponential in the number of \pbfs. 
\label{thm:nsam-runtime}
\end{theorem}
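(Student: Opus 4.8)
The plan is to partition the algorithm into its three structural phases — the initialization loop (lines~\ref{alg:nsam-init-bool1}--\ref{alg:nsam-init-num}), the single pass over the trajectory data (lines~\ref{alg:nsam-sam-part}--\ref{alg:nsam-add-to-db2}), and the per-action learning loop (lines~\ref{alg:calc-num-independent-points}--\ref{regression-post-state}) — bound the cost of each phase separately, and then sum. Each of the three summands in the stated bound arises from one of these phases, with the first two phases jointly producing the first summand. First I would bound the initialization loop: it runs once per lifted action, hence $N_A$ times, and each iteration enumerates the \pbls bound to $\lifta$ to build $\pre^{F}(\lifta)$ (cost $O(N_L)$) and allocates the two numeric tables $DB_{pre}(\lifta), DB_{post}(\lifta)$ indexed by $X(\lifta)$ (cost $O(N_X)$), giving $O(N_A(N_L+N_X))$ overall.

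Next I would bound the data-collection loop, which executes once per state transition, i.e. $N_T$ iterations. In each iteration, applying the \sam inductive rules inspects only the \pbls relevant to the observed action, which is $O(N_L)$ work, and appending the pre- and post-state vectors to $DB_{pre}$ and $DB_{post}$ copies $O(N_X)$ values; the phase therefore costs $O(N_T(N_L+N_X))$. Adding this to the initialization phase yields the first summand $O((N_A+N_T)(N_X+N_L))$.

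The crux is the per-action learning loop, run $N_A$ times, inside which three subroutines appear. (i) The rank computation on $DB_{pre}(\lifta)$, an $N_T\times N_X$ matrix, costs $O(N_X^2 N_T)$ by Gaussian elimination and is dominated by the regression cost below. (ii) The convex hull of at most $N_T$ points in dimension $N_X$: here I would invoke the Upper Bound Theorem, which bounds the number of facets of such a hull — and hence the worst-case running time of the off-the-shelf hull routine — by $O(N_T^{\lfloor N_X/2 \rfloor}/\lfloor N_X/2 \rfloor!)$, giving the second summand after multiplying by $N_A$. (iii) The linear regression: one least-squares solve per output function (up to $N_X$ of them), where forming and solving the normal equations for a single output over $N_T$ samples with $N_X$ predictors costs $O(N_X^2(N_T+N_X))$; across all $N_X$ outputs this is $O(N_X^3(N_T+N_X))$, which multiplied by $N_A$ is the third summand. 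Summing the three phases and discarding dominated terms gives the claimed complexity, and reading off the exponents confirms linearity in $N_A$, $N_L$, and $N_T$ together with the sole exponential dependence on $N_X$ coming from the hull term.

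The main obstacle I expect is justifying the convex-hull summand precisely: the factorial denominator and the floor in the exponent are exactly the form of McMullen's Upper Bound Theorem for the maximum number of faces of a polytope, so the argument hinges on (a) citing that bound for the output size and (b) arguing that the chosen hull algorithm runs within a constant factor of its output size in the worst case. Care is also needed to confirm that the rank step, and any preprocessing (e.g.\ an $O(N_T\log N_T)$ sort) internal to the hull routine, are genuinely dominated by the two leading terms rather than contributing a separate additive term that would have to be absorbed.
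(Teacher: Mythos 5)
Your proposal is correct and takes essentially the same route as the paper's proof: the same three-phase decomposition with identical summands --- $O((N_A+N_T)\cdot(N_X+N_L))$ for initialization plus the pass over transitions, the hull term $O\bigl(N_A\cdot N_{T}^{\lfloor N_X/2 \rfloor}/{\lfloor N_X/2 \rfloor}!\bigr)$, and $O(N_A\cdot N_X^3\cdot(N_T+N_X))$ for the per-function least-squares solves. The only divergences are minor and in your favor: where the paper simply cites the worst-case complexity of Quickhull from \citet{barber1996quickhull}, you reconstruct that bound via the Upper Bound Theorem on facet counts plus output-sensitivity (which is exactly how the cited bound arises), and you explicitly account for the matrix-rank computation of line~\ref{alg:calc-num-independent-points} as dominated by the regression term --- a step the paper's proof silently omits.
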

Notice that the following runtime proof is based on our assumption that the domains are linear (assumptions 2 and 3 in Section~\ref{sec:assumptions}). 
To support polynomial domains, every computation using $N_X$ will increase exponentially with the polynomial degree $d$. 
\begin{proof}
\nsam starts by initializing the 
Boolean preconditions and effects 
and the datasets $DB_{\pre}(\lifta)$ and $DB_{post}(\lifta)$ for each action $\lifta\in A$. 
This initialization incurs $O(N_A\cdot(N_X+N_L))$. 
Then, for each state transition $\tuple{s,a,s'}$ in the input observations, \nsam applies \sam's inductive rules 
and collects the values in $s$ and in $s'$ of every \pbf bound to $a$. 
Applying \sam inductive rules requires $O(N_L)$. 
Thus, this step incurs $O(N_T\cdot(N_X+N_L))$. 
Next, \nsam computes for each action $\lifta\in A$ the convex hull for the preconditions and linear regression for the effects. 
\citet{barber1996quickhull} showed that the complexity of computing a convex hull, using the Quickhull algorithm, depends on the number of dimensions, which in our case is $N_X$. 
If $2 \leq N_X<4$ the runtime complexity of computing a convex hull is $O(N_{T}\cdot \log (N_{T}))$. 
If $N_X\geq 4$, the runtime complexity grows exponentially to $O(\frac{N_{T}^{\lfloor N_X/2 \rfloor}}{{\lfloor N_X/2 \rfloor}!})$. 
We denote the matrix defined from the values in the table $DB_{\pre}(\lifta)$ as $X$ and the column relevant to the evaluated \pbf in $DB_{post}(\lifta)$ as $y$.
Calculating least square linear regression is equivalent to performing $(XX^{-1})X'y$~\citep{watson1967linear}. 
Thus, running linear regression for every \pbf in $DB_{post}(\lifta)$ has a worst-case runtime complexity of $O({N_X}^2\cdot(N_T+N_X))$. 
The runtime complexity of computing all the numeric effects of a single action is $O({N_X}^3\cdot(N_T+N_X))$. 
Thus, this step incurs $O(N_A\cdot({N_X}^3\cdot(N_T+N_X))$.
Finally, the total runtime complexity of the \nsam algorithm is $O((N_A+N_T)\cdot(N_X+N_L) + N_A\cdot\frac{N_{T}^{\lfloor N_X/2 \rfloor}}{{\lfloor N_X/2 \rfloor}!} + N_A\cdot({N_X}^3\cdot(N_T+N_X))$.
\end{proof}
Theorem~\ref{thm:nsam-runtime} states that the runtime of \nsam is only linear in the number of lifted actions, \pbls, and state transitions,  
but it is exponential in the number of \pbfs bounded to an action. The impact of the number of state transitions on the runtime of \nsam is at most polynomial, where the degree of the polynomial is the number of \pbfs. 
More accurately, the degree of the polynomial is the number of \pbfs that are bound to the actions' preconditions. 
In most IPC domains, this number is very small. In the polynomial IPC domains, this number increases based on the number of monomials (See Table~\ref{tab:numereric-domains} for the exact numbers). 

\subsection{Removing Linear Dependencies}
The numeric functions in some domains may be linearly dependent.
For example, consider a bidirectional road and the numeric functions \textit{(distance ?from ?to)}. 
Given two cities, $A$ and $B$, the functions $(distance\; A\; B)$ and $(distance\; B\; A)$ are always equal. 
Another example arises when constant functions maintain the same value across all observations.
In such cases, applying the convex hull directly will fail, as the input becomes rank-deficient, whereas convex hull algorithms assume fully ranked input.
\nsam detects such cases and eliminates linearly dependent functions through linear regression, in a pre-processing step. 
Specifically, for each numeric function $x_i$ we attempt to fit a regression model using the remaining numeric functions as predictors.
Functions identified as linearly dependent are subsequently incorporated as additional equality preconditions and are excluded from the dataset used for convex hull construction.

\subsection{Limitations}
If the maximal number of affine independent points 
in $DB_{\pre}(\lifta)$ is less than $|X(\lifta)| + 1$, constructing a convex hull becomes infeasible, and \nsam cannot learn the preconditions of $\lifta$. 
Furthermore, if the number of linearly independent equations constructed by \nsam (Equation~\ref{eq:sys}) is less than $|X(\lifta)| + 1$, no unique solution exists, and \nsam cannot learn the effects of $\lifta$. 
Consequently, $\lifta$ will not be returned in the learned action model. 
These restrictions can hinder the algorithm's performance in domains with many numeric variables or when the available data is very limited or highly correlated.
This raises the question of whether this is an inherent limitation of the problem of learning a safe action model, or a limitation of the \nsam algorithm. 
Below, we show that it is the latter.

\begin{definition}[Strong Action Model]
Action model $M$ is \emph{stronger} than action model $M'$ 
if there exists a state $s$ and an action $a$ such that $a$ is applicable in $s$ according to $M$ but not according to $M'$.  
\label{def:strong}
\end{definition}
\noindent Having a strong action model is beneficial, as it allows solving more problems. 
For discrete domains, \sam was proven to be the strongest 
safe action model that can be learned with a given set of trajectories~\cite{juba2021safe}.\footnote{Technically, the proof in the paper applies to ESAM, but \sam and ESAM are equivalent in domains that satisfy the injective binding assumption.} 
The example below shows that \nsam does not have this property, i.e., it is possible to learn a \emph{stronger} safe action model than the one returned by \nsam using the same given set of trajectories. 

\begin{table}[ht]
\centering
\resizebox{0.3\textwidth}{!}{%
\begin{tabular}{@{}ccc@{}}
\toprule
\textbf{(x ?f1)} & \textbf{(x ?f2)} & \textbf{(cost )} \\ \midrule
2 & 0 & 1 \\
1 & 0 & 1 \\
11 & 0 & 0 \\ \bottomrule
\end{tabular}%
}
\caption{Example of $DB_{\pre}(\moveslow)$ for the action \moveslow\ in Farmland domain.}
\label{tab:example-dataset}
\end{table}




\begin{example}
    Consider the \moveslow\ action in the Farmland domain described in Example~\ref{ex:moveslow}, and assume \nsam is given trajectories where this action has been executed three times, resulting in $DB_{\pre}(\moveslow)$ containing the values displayed in Table~\ref{tab:example-dataset}. 
    \nsam requires a minimum of four affine-independent observations to learn the convex hull of the action's preconditions as $|X(\moveslow)|=3$.
    Since these observations include only three affine-independent observations, \nsam cannot learn the preconditions of the action and thus deems it unsafe.     
    Trivially, \moveslow\ can be safely applied in the observed states $\tuple{2,0,1}$, $\tuple{1,0,1}$, and $\tuple{11,0,0}$. 
    Moreover, we can identify additional sets of points in which \moveslow\ can be safely applied since the action's preconditions are conjunctions of linear inequalities. 
    Specifically, since the preconditions form a convex set, the line segment connecting any pair of points in which \moveslow\ was applied represents a state in which \moveslow is applicable (assuming the Boolean part is applicable, of course). 
    For example, \moveslow\ can be safely executed in the state $\tuple{1.5,0,1}$, as it lies along the line segment between the observed states $\tuple{2,0,1}$ and $\tuple{1,0,1}$. 
\label{ex:nsam-not-strong}
\end{example}

\section{The \algname Algorithm}

In this section, we introduce the \algname algorithm, an improved version of \nsam that overcomes the limitation described in Example~\ref{ex:nsam-not-strong}. 
It returns a safe action model that is stronger than \nsam and includes every observed action, even if it was only observed once. 





\subsection{Overview}
\algname resolves the problem of not having enough linearly-independent observations of an action by \emph{projecting} the available observations into a smaller dimensional space in which a convex hull of these observations can be created. 
We illustrate this concept using Figure~\ref{fig:GS-example}. 
Figure~\ref{fig:GS-example-pre-change} shows three 3D points representing three states where an action $a$ has been observed. 
3 points are not enough to construct a convex hull in a 3D space, so \algname projects these points to a 2D plane and computes a convex hull there, as shown in Figure~\ref{fig:GS-example-lower}.  

\begin{figure}[ht]
     \centering
     \begin{subfigure}[b]{0.45\columnwidth}
         \centering
         \includegraphics[width=\textwidth]{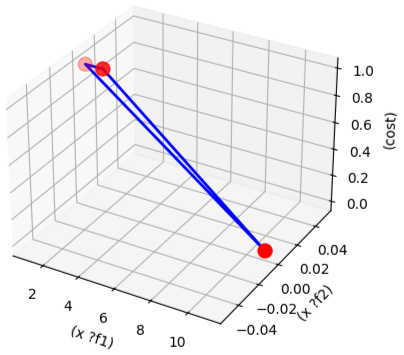}
         \caption{}
         \label{fig:GS-example-pre-change}
     \end{subfigure}
     \hfill
     \begin{subfigure}[b]{0.45\columnwidth}
         \centering
         \includegraphics[width=\textwidth]{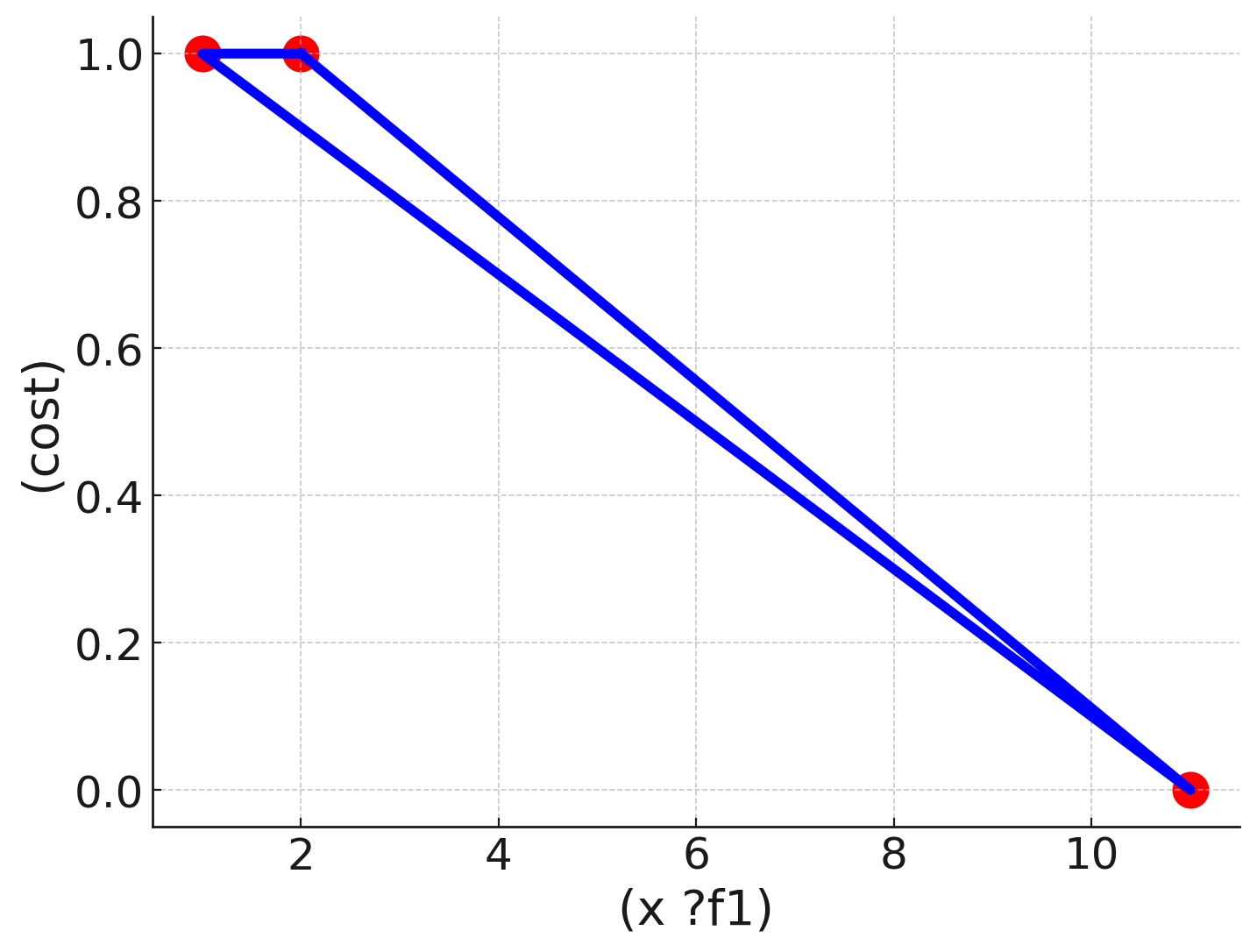}
         \caption{}
         \label{fig:GS-example-lower}
     \end{subfigure}     
     \caption{Graphical illustration of the observed points $(2, 0, 1), (1, 0, 1), (11, 0, 0)$ from Table~\ref{tab:example-dataset}. In Figure~\ref{fig:GS-example-pre-change}, the points are displayed on the $\mathbb{R}^3$ space, and in Figure~\ref{fig:GS-example-lower}, they are displayed as projections on a 2D plane with (x ?f2) being constant zero.}
     \label{fig:GS-example}
\end{figure}

Algorithm~\ref{alg:fnsam-algorithm} lists the pseudo-code of \algname. 
\algname starts by running \nsam (line~\ref{alg:apply-nsam}), which populates $\pre$, $\eff$ and the data structures $DB_{\pre}(\lifta)$ and $DB_{post}(\lifta)$ for every lifted action $\lifta$.\footnote{Technically, \nsam only returns the preconditions and effects, but in \algname we also require the 
$DB_{\pre}$ and $DB_{post}$ data structures that \nsam initializes.} 

For every lifted action $\lifta$ in $A_{\textit{unsafe}}$ 
it extracts the first observation vector, i.e., $v_0$, and shifts the vectors in $DB_{\pre}(\lifta)$ according to $v_0$ (line~\ref{subtract_first_sample}). 
The resulting set of vectors is called $Shifted_{v0}$. 
Then, \algname creates an orthonormal basis for the subspace spanned by the vectors 
in $Shifted_{v0}$.
The resulting basis is called $Basis$. 
Note that the dimension of the subspace spanned by $Basis$ may be smaller than $|X(\lifta)|+1$. 
Next, \algname projects all the vectors in $Shifted_{v0}$ to this subspace, and computes the convex hull of these projected vectors (line~\ref{projection}). 
The resulting convex hull is called $CH_{\textit{proj}}$.  
To compute the numeric preconditions of $\lifta$, \algname uses $CH_{\textit{proj}}$ 
and an orthonormal basis for the subspace that includes all points except those spanned by $DB_{\pre}(\lifta)$ (lines~\ref{find_base_func}-\ref{orthonormal_base}). 
The resulting numeric preconditions allow applying $\lifta$ in states where the relevant functions correspond to a point that is (1) in $span(Base)$, and (2) its projection onto $span(Base)$ is within $CH_{\textit{proj}}$. 
The numeric effects of $\lifta$ are learned using any linear regression method, as in \nsam, but allowing it to return solutions even if they are not unique (line~\ref{linear-regression-fnsam}). 
That is, any regression solution with an $R^2$ score of 1, can be used. 

The preconditions learning part of \algname is not trivial. Thus, we provide below a more detailed explanation of this process. 

\begin{algorithm}[tb]
\small
\caption{The \algname Algorithm}\label{alg:fnsam-algorithm}
\begin{algorithmic}[1]
\State \textbf{Input}: the observed trajectories $\mathcal{T}$ 
\State \textbf{Output}: a safe action model.
\State $\pre$, $\eff$, $A_{\textit{unsafe}}$, $DB_{pre}$, $DB_{post}$ $\gets$ NSAM($\mathcal{T}$) \label{alg:apply-nsam}
\For{$\lifta \in A_{\textit{unsafe}}$} 
    \State $v_0 \gets \{s_0(x)|x\in X(\lifta)\}$ from $DB_{\pre}(\lifta)$ \Comment{Get first observation vector} \label{alg:get-first-observation}
    \State $Shifted_{v0} \gets \{v_i - v_0 | \forall v_i \in DB_{\pre}(\lifta)\}$\label{subtract_first_sample}
    \State $Basis \gets$ FindBasis$(Shifted_{v0}, \emptyset)$\label{find_base_func}
    \State $Projected\gets \{project(s,Basis) | s\in Shifted_{v0}\}$ \label{projection}
    \State $CH_{proj} \gets $ ConvexHull$(Projected)$\label{ch}
    \State $CompBasis \gets$ FindBasis$(\mathbb{R}^{|X(a)|}, Basis)$ \label{orthonormal_base}
    \State $\pre^{X}(\lifta) \gets $ CreatePre($CH_{proj}$, $Basis$, $CompBasis$) 
    \State $\eff^{X}(\lifta) \gets $ LinearRegression$(DB_{pre}(\lifta),DB_{post}(\lifta))$\label{linear-regression-fnsam}
\EndFor
\State \Return  $(\pre,\eff)$
\end{algorithmic}
\end{algorithm}

\subsection{Learning Numeric Preconditions}
\label{sec:numeric-preconditions}

Learning the preconditions in \algname computes two sets of vectors, $Basis$, and $CompBasis$. 
The former is an orthonormal basis of the subspace spanned by the vectors in $DB_{\pre}$, and the latter is an orthonormal basis of the complementing subspace (called $CompBasis$ in Alg.~\ref{alg:fnsam-algorithm}). 
To find these bases, we first introduce the $FindBasis$ auxiliary function, which is based on the classical Gram-Schmidt Process (GSP)~\citep{leon2013gram}. 
GSP transforms a set of linearly independent vectors into an orthonormal set. 
$FindBasis$ receives $points$ and $basisVecs$, a set of vectors which may be linearly dependent, and a set of vectors $basisVecs$ respectively.
The function returns an orthonormal set of vectors $NVec$ such that $NVec \cup basisVecs$ spans the space defined by $points$. 


\begin{algorithm}[htb]
\caption{Find Orthonormal Basis}\label{alg:gs-algorithm}
\begin{algorithmic}[1]
\State {\bf Function} FindBasis($points, basisVecs$)
\State \textbf{Output}: Orthonormal basis $NVec$
\State $Vec \gets basisVecs$\label{init-find-base}
\State $NVec \gets \emptyset$ 
\For{$p \in points$}
    \State $p_{proj} \gets p - \sum_{v\in Vec} \frac{p\times v}{(\norm{v}{2})^2}\cdot v$
\If{$|p_{proj}| > 0$} \label{epsilon-diff} 
    \State $Vec \gets Vec \cup \{p_{proj}\}$
    \State $NVec \gets NVec \cup \{\frac{p_{proj}}{\norm{p_{proj}}{2}}\}$ 
\EndIf
\EndFor
\State \Return $NVec$
\end{algorithmic}
\end{algorithm}

The pseudo-code for $FindBasis$ is listed in Algorithm~\ref{alg:gs-algorithm}. 
The variable \( Vec \) is initialized with \( basisVecs \) and is iteratively expanded with vectors orthogonal to all previously added vectors until \( Vec \) spans the space of the input points. $FindBasis$ iterates over every input vector \( p \in points \), applying the same vector manipulation as GSP. 
If the projection of \( p \) onto the intermediate basis ($Vec$) is not a zero vector, we add it to $Vec$. 
Otherwise, it means $p$ is already spanned by $Vec$, and we do not add it to $Vec$. 
The variable \( NVec \) represents the orthonormal vectors generated from the vectors added to \( Vec \) (excluding \( basisVec \)) by applying \( L_2 \)-normalization. 
This ensures \( NVec \) forms an orthonormal set orthogonal to \( basisVec \). 
Note that since \( Vec \) spans all the vectors in $points$ (due to the GSP), the union \( NVec \cup basisVec\) also spans them. 

To learn the numeric preconditions of an action $\lifta$, 
\algname computes $Basis$ by calling $FindBasis(DB_{\pre}(\lifta, \emptyset))$ 
and then computes $CompBasis$ by calling $FindBasis(DB_{pre}(\mathbb{R}^{|X(\lifta)|}, Basis)$. 
Then, it projects $DB_{pre}(\lifta)$ to the subspace spanned by $Basis$ and
computes the convex hull $CH_{proj}$ of the projected vectors (line~\ref{ch}). 
Next, \algname creates preconditions for $\lifta$ that ensure it is applicable only in states that (1) are spanned by $Basis$, 
and (2) their projections on the subspace spanned by $Basis$ are within $CH_{proj}$. 
To achieve (1), we create a PDDL precondition that verifies the dot product of a state $s$ with every vector in $CompBasis$ is zero. 
To achieve (2), we add a PDDL precondition for every facet of $CH_{proj}$. 

\begin{example}
    In our example, $v_0=\tuple{2,0,1}$. Subtracting $v_0$ from the points in $DB_{\pre}(\moveslow)$, results in the matrix
    \[
    Shifted_{v0}=\begin{bmatrix}
    0 & 0 & 0 \\
    -1 & 0 & 0 \\
    9 & 0 & -1
    \end{bmatrix}
\]
    The orthonormal set returned from the $FindBasis$ is:
\[ Basis = \begin{bmatrix}
    -1 & 0 & 0 \\
    0 & 0 & -1 
    \end{bmatrix}
\]
and $CompBasis=[0,1,0]$.
The result of $\{project(s,Basis) | s\in Shifted_{v0}\}$ is a 2-D matrix with new variables, denoted $x''$ and $y''$ as follows:
\[
\begin{bmatrix}
    0 & 0 \\
    1 & 0 \\
    -9 & 1
\end{bmatrix}
\]
Finally, the inequalities representing the facets of the convex hull $CH_{proj}$ (line~\ref{projection}) are as follows:
$-y'' \leq 0$, $-0.11x'' -0.99y'' \leq 0$ and $0.10x''+0.99y'' \leq 0.10$. 
\end{example}
\label{example_projection_and_calculation}

\subsection{Translating the Preconditions to PDDL}

The conditions derived in Example~\ref{example_projection_and_calculation} are not initially formatted in a way that can be parsed by a planning algorithm. 
To generate the preconditions in PDDL format, \algname first constructs strings representing the subtraction of each numeric function from its corresponding value in $v_0$. 
For the \moveslow\ action, \algname generates the strings ``\texttt{(- (x ?f1) 2)}'', ``\texttt{(x ?f2)}'', and ``\texttt{(- (cost) 1)}''.

Subsequently, \algname constructs strings that represent the functions projected onto $Basis$. 
This step involves multiplying each function by the appropriate coefficient from the corresponding vector in $Basis$ and forming a linear combination. 
In our example, the PDDL representation of $Basis$ consists of the two vectors ``\texttt{(* (- (x ?f1) 2) -1)}'' and ``\texttt{(* (- (cost) 1) -1)}''. 
The complementary basis, $CompBasis$, is incorporated as additional equality conditions --- ``\texttt{(= (x ?f2) 0)}''.

Next, \algname generates the inequalities defining the convex hull hyperplanes. 
Recall, in Example~\ref{example_projection_and_calculation} \algname created the inequalities $-y'' \leq 0$, $-0.11x'' - 0.99y'' \leq 0$, and $0.10x'' + 0.99y'' \leq 0.10$.
These inequalities are translated into the following PDDL strings:
``\texttt{(<= (* (* (- (cost) 1) -1) -1) 0)}'', 
``\texttt{(<= (+ (* (* (- (x ?f1) 2) -1) -0.11) (* (* (- (cost) 1) -1) -0.99)) 0)}'', and
``\texttt{(<= (+ (* (* (- (x ?f1) 2) -1) 0.1) (* (* (- (cost) 1) -1) 0.99)) 0.10)}'', respectively.

The final output of the PDDL translation of the preconditions is presented in Figure~\ref{list:farmland-move-slow-preconditions}.


\begin{figure}[ht]
\begin{center}
\centering
\begingroup
    \fontsize{10pt}{10pt}\selectfont
    \begin{Verbatim}[commandchars=\\\{\}]
            (and (<= (* (* (- (cost ) 1) -1) -1) 0)
            (<=(+ (* (* (- (x ?f1) 2) -1) -0.11) 
                (* (* (- (cost ) 1) -1) -0.99)) 0)
            (<= (+ (* (* (- (x ?f1) 2) -1) 0.1) 
                (* (* (- (cost ) 1) -1) 0.99)) 0.1)
            (= (x ?f2) 0)) 
    \end{Verbatim}
\endgroup
\vspace{-0.3cm}
\caption{The learned numeric preconditions for \moveslow .}
\vspace{-0.5cm}
\label{list:farmland-move-slow-preconditions}
\end{center}
\end{figure}

Next, we provide proofs for the algorithm's theoretical properties.

\subsection{Correctness}
First, we prove that the algorithm will run correctly on any valid input. This is not trivial because the number of projected vectors $Projected$ may not be enough to compute a convex hull in the subspace spanned by $Basis$.  

A necessary and sufficient condition for computing the convex hull of a set \( D \) of \( d \)-dimensional points is that \( D \) must contain at least \( d+1 \) affine independent points (denoted as \( D' \)). I.e., no point in \( D' \) can be expressed as an affine combination of the other points. 
Thus, it is sufficient to show that $Projected$ includes at least $dim(span(Projected))+1$ vectors. 
This is true by construction since it includes the zero vector and additional $|Basis|$ linearly-independent vectors.

\subsection{Safety}

Next, we prove that the preconditions and effects learned by \algname are safe. 

\begin{lemma} \label{th:effects-safe}
    For each lifted action, $\lifta\in A_{\textit{unsafe}}$ constructed in Algorithm~\ref{alg:fnsam-algorithm}, 
    the preconditions, $\pre(\lifta)$, are guaranteed to be safe, 
    and for each applicable state $s$, the effects $\eff(\lifta)$ are guaranteed to be the exact effects as observed in the real unknown action model.
\begin{proof}
    Given an action $\lifta$, recall that $X(\lifta)$ is the set of \pbfs bound to $\lifta$.
    Since $\pre^{X}(\lifta)$ contains equality constraints, we know that any point permitted according to the preconditions is in the linear subspace of the original points.
    Thus, it is sufficient to prove that the projected preconditions are safe in the lower-dimensional space, but this results directly from the preconditions being a convex hull.

    We denote $\psi$ as one of the infinite number of solutions to the linear equations selected by \algname, and $\psi^*$ as the real function defining the effects.
    Since $\psi$ is a solution to the set of equations defined by $DB_{\pre}(\lifta)$ then $\forall v_i\in DB_{\pre}(\lifta): f(v_i)=f^*(v_i)$.
    Let $s''$ be a state such that \algname deems applicable.
    Since $s''$ is spanned by the base created from the original points, there exists a $W\in\mathbb{R}^{|X(\lifta)|}$ such that
    $W\times DB_{\pre}(\lifta)=s''$ and $f(s'')=f^*(W\times DB_{\pre}(\lifta))=f^*(s'')$. 

    \noindent Overall, \algname ensures the acquisition of safe preconditions and effects, generating action models that are safe.
    
\end{proof}
\end{lemma}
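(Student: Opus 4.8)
The plan is to reduce \algname's lower-dimensional construction back to statements about convex and affine combinations of the observed pre-states $DB_{\pre}(\lifta)=\{v_0,\dots,v_m\}$, and then apply convexity of the real feasible region (for preconditions) and affine interpolation (for effects).

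\textbf{Preconditions.} I would first show that the set of states accepted by $\pre^{X}(\lifta)$ equals the convex hull $\mathrm{conv}(\{v_i\})$ of the observed points. The equality constraints built from $CompBasis$ force any accepted state $s$ to satisfy $(s-v_0)\cdot c = 0$ for every $c\in CompBasis$; since $CompBasis$ is an orthonormal basis of the orthogonal complement of $\mathrm{span}(Basis)$, this is exactly the condition $s-v_0\in\mathrm{span}(Basis)$. Restricted to that subspace, expressing a vector in the orthonormal coordinates of $Basis$ is an isometry, so the facet inequalities of $CH_{proj}$ evaluated on the projection of $s-v_0$ are equivalent to $s-v_0\in\mathrm{conv}(\{v_i-v_0\})$. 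Because translation commutes with the convex hull, this is the same as $s\in\mathrm{conv}(\{v_i\})$. Hence every state deemed applicable by \algname is a convex combination of observed pre-states. Each $v_i$ lies in $CH_{\realm}$ because it is the pre-state of a successful execution under \realm, and $CH_{\realm}$ is convex by the linear-inequality precondition assumption; therefore the same argument as in Theorem~\ref{preconds-from-convex-safe} shows $s\in CH_{\realm}$, i.e., $\lifta$ is applicable at $s$ according to \realm.

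\textbf{Effects.} Each learned numeric effect is an affine map $f(s)=w_0+\sum_{x'\in X(\lifta)}w_{x'}s(x')$ whose weights, being a regression solution with $R^2=1$, satisfy $f(v_i)=f^*(v_i)$ for every observed $v_i$, where $f^*$ is the true effect (affine by the linear-effects assumption). The key step is that $f-f^*$ is affine and vanishes on all $v_i$, hence vanishes on the entire affine hull of $\{v_i\}$. By the preconditions argument, every applicable state $s$ lies in $\mathrm{conv}(\{v_i\})$, which is contained in that affine hull, so $f(s)=f^*(s)$. Applying this coordinate-wise to every numeric variable, and invoking the safety of the Boolean part already guaranteed by \sam, shows that executing $\lifta$ in $s$ under the model learned by \algname yields precisely $a_{\realm}(s)$. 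Notably, this requires only exact agreement on the observations, so it is insensitive to which of the (possibly many) regression solutions \algname returns.

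The main obstacle is the precondition step, specifically verifying that the projection-and-hull construction carves out exactly $\mathrm{conv}(\{v_i\})$ rather than a larger region. The two delicate points are (i) that the $CompBasis$ equality constraints pin accepted states to the correct affine subspace $v_0+\mathrm{span}(Basis)$, which depends on orthonormality of $CompBasis$ and the shift by $v_0$, and (ii) that the $Basis$-coordinate projection is an isometry on $\mathrm{span}(Basis)$, so that facet inequalities transfer between projected and original coordinates without distortion. Once the accepted region is identified with $\mathrm{conv}(\{v_i\})$, both safety claims reduce to convexity of $CH_{\realm}$ and the affine-interpolation property, which are routine.
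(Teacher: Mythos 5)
Your proposal is correct and follows the same two-part decomposition as the paper's proof (equality constraints pin accepted states to the affine subspace of the observations; the projected convex hull handles the rest; effects follow from exact agreement on the observed points), but your justification of the effects half is genuinely different and, in fact, repairs a gap in the paper's own wording. The paper argues that for an applicable state $s''$ there exists $W\in\mathbb{R}^{|X(\lifta)|}$ with $W\times DB_{\pre}(\lifta)=s''$ and concludes $f(s'')=f^*(W\times DB_{\pre}(\lifta))=f^*(s'')$; as literally written this step is invalid, because the learned and true effects are \emph{affine} maps (they carry an intercept $w_0$ per Equation~\eqref{eq:sys}), and agreement on the $v_i$ propagates only through \emph{affine} combinations, not arbitrary linear ones. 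Your argument --- that $f-f^*$ is affine, vanishes on every $v_i$, and hence vanishes on the entire affine hull $v_0+\mathrm{span}(\{v_i-v_0\})$, which contains every accepted state --- is the correct way to close this, and it dovetails with your shift-by-$v_0$ analysis of the preconditions. On the precondition side you also prove something stronger than the paper does: the paper only argues containment (accepted states lie in a safe region), whereas you identify the accepted set \emph{exactly} as $\mathrm{conv}(\{v_i\})$, using orthonormality of $CompBasis$ to characterize the equality constraints and the isometry of $Basis$-coordinates to transfer the facet inequalities without distortion. This exact characterization is not needed for safety alone, but it buys you the containment $\mathrm{conv}(\{v_i\})\subseteq\mathrm{affinehull}(\{v_i\})$ that the effects step rests on, and it is essentially the content of the paper's separate optimality result (Theorem~\ref{th:opt}) obtained for free. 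Both delicate points you flag --- orthonormality of $CompBasis$ relative to $Basis$, and the isometry claim --- are exactly where the paper's terse proof relies on unstated facts, so your more explicit treatment is an improvement rather than a detour.
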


\subsection{Runtime}
Next, we provide runtime analysis of the \algname algorithm. 
Recall, $N_A,N_T,N_X,N_L$, are the number of lifted actions, state transitions, and the maximum number of \pbfs and \pbls bounded to an action across all actions.
\begin{theorem}
The runtime complexity of the \algname algorithm is:
\[
    O((N_A+N_T)\cdot(N_X+N_L) + N_A\cdot N_T \cdot N_X^2 + N_A\cdot\frac{N_{T}^{\lfloor N_X/2 \rfloor}}{{\lfloor N_X/2 \rfloor}!} + N_A\cdot({N_X}^3\cdot(N_T+N_X))
\]
which is linear in the number of lifted actions, \pbls, and state transitions, 
and exponential in the number of \pbfs. 
\end{theorem}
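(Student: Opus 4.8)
The plan is to charge the runtime of \algname to two sources: the single call to \nsam on line~\ref{alg:apply-nsam}, whose cost is already furnished by Theorem~\ref{thm:nsam-runtime}, and the loop over $A_{\textit{unsafe}}$, which performs the projection-based precondition learning. Since $|A_{\textit{unsafe}}|\le N_A$, it suffices to bound the per-action cost of one iteration of this loop and multiply by $N_A$; I then show that every term it produces either already appears in the \nsam bound or equals the single new term $N_A\cdot N_T\cdot N_X^2$.

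First I would fix a lifted action $\lifta$ and walk through the loop body line by line. Reading $v_0$ (line~\ref{alg:get-first-observation}) costs $O(N_X)$, and forming $Shifted_{v0}$ (line~\ref{subtract_first_sample}) costs $O(N_T\cdot N_X)$, as $DB_{\pre}(\lifta)$ holds at most $N_T$ vectors of dimension $N_X$. The decisive steps are the two $FindBasis$ calls (Algorithm~\ref{alg:gs-algorithm}, lines~\ref{find_base_func} and~\ref{orthonormal_base}) and the projection (line~\ref{projection}). In $FindBasis$, each input vector is orthogonalized against a partial basis of at most $N_X$ vectors, and each orthogonalization is an $O(N_X)$ dot-product-and-subtract; hence one vector costs $O(N_X^2)$. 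With up to $N_T$ inputs when computing $Basis$ and up to $N_X$ standard-basis inputs when computing $CompBasis$, these calls cost $O(N_T\cdot N_X^2)$ and $O(N_X^3)$, respectively, and projecting the $N_T$ shifted vectors onto $Basis$ likewise costs $O(N_T\cdot N_X^2)$. This is the origin of the new term.

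Next I would dispatch the two remaining heavy operations. The convex hull on line~\ref{ch} runs on at most $N_T$ points lying in a subspace of dimension at most $N_X$, so by the Quickhull bound of~\citet{barber1996quickhull} it inherits exactly the $O(N_{T}^{\lfloor N_X/2 \rfloor}/{\lfloor N_X/2 \rfloor}!)$ cost charged in the proof of Theorem~\ref{thm:nsam-runtime}, and the regression on line~\ref{linear-regression-fnsam} inherits the per-action $O(N_X^3\cdot(N_T+N_X))$ cost. Summing the per-action costs $O(N_T\cdot N_X^2)+O(N_X^3)+O(N_{T}^{\lfloor N_X/2 \rfloor}/{\lfloor N_X/2 \rfloor}!)+O(N_X^3\cdot(N_T+N_X))$ and multiplying by $N_A$ reproduces the three \nsam terms together with the new $N_A\cdot N_T\cdot N_X^2$ term, which, combined with the \nsam runtime, yields the claimed bound.

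The main obstacle I anticipate is the unlabeled $CreatePre$ call: it emits an equality constraint for each of the $O(N_X)$ vectors of $CompBasis$ and an inequality for each facet of $CH_{proj}$, with every constraint translated back through $Basis$ into the $N_X$ original functions. I would argue that the equality constraints cost $O(N_X^2)$, and that the number of facets to be enumerated is already bounded by the convex-hull output charged on line~\ref{ch}, so that emitting and translating them introduces no asymptotic term beyond those already counted. Finally, assembling $\pre$ and $\eff$ is linear in their size and is absorbed by the initialization term $O((N_A+N_T)\cdot(N_X+N_L))$, which completes the accounting.
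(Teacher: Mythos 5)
Your proposal is correct and follows essentially the same route as the paper's proof: both decompose the cost into the single \nsam call (charged via Theorem~\ref{thm:nsam-runtime}) plus a per-action accounting of the $A_{\textit{unsafe}}$ loop, charging shifting at $O(N_T\cdot N_X)$, the Gram--Schmidt-based $FindBasis$ and projection at $O(N_T\cdot N_X^2)$, the hull at the Quickhull bound $O(N_{T}^{\lfloor N_X/2 \rfloor}/\lfloor N_X/2 \rfloor!)$, the PDDL translation at $O(N_X^2\cdot N_T)$, and regression at $O(N_X^3\cdot(N_T+N_X))$. Your extra care with the $O(N_X^3)$ cost of the $CompBasis$ call and the facet enumeration in $CreatePre$ is slightly more explicit than the paper's treatment but is absorbed by the same terms, so the arguments are substantively identical.
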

\begin{proof}
    Applying the \nsam algorithm has a runtime complexity of $O((N_A+N_T)\cdot(N_X+N_L) + N_A\cdot\frac{N_{T}^{\lfloor N_X/2 \rfloor}}{{\lfloor N_X/2 \rfloor}!} + N_A\cdot({N_X}^3\cdot(N_T+N_X))$. 
    For each lifted action $\lifta \in A_{\textit{unsafe}}$ \algname performs the following:
    It shifts all the vectors in $DB_{\pre}(\lifta)$ according to the first observation $v_0$.
    The complexity of shifting the observations in $DB_{\pre}(\lifta)$ is $O(N_X \cdot N_T)$.
    Next, it applies $FindBasis$ to find $Basis$ and $CompBasis$. 
    The runtime complexity of applying $FindBasis$ is the same runtime complexity of applying GSP, i.e., $O(N_T \cdot N_X^2)$.
    Applying the convex hull algorithm on the projected points is exponential in the number of dimensions of $Basis$.
    In the worst case $Basis$ contains $|X(\alpha)|$ vectors, 
    resulting in a runtime complexity of $O(\frac{N_{T}^{\lfloor N_X/2 \rfloor}}{{\lfloor N_X/2 \rfloor}!})$.
    Converting the preconditions to their PDDL format involves subtracting each \pbf by the matching value in $v_0$ and then creating the linear combinations using $Basis$.
    The runtime complexity for this process is $O(N_X+N_X\cdot N_X\cdot N_T)=O(N_X^2\cdot N_T)$.
    Finally, running linear regression has the same runtime complexity as before --- $O({N_X}^3\cdot(N_T+N_X))$.
    Thus, the total runtime complexity of \algname is $O((N_A+N_T)\cdot(N_X+N_L) + N_A\cdot N_T \cdot N_X^2 + N_A\cdot\frac{N_{T}^{\lfloor N_X/2 \rfloor}}{{\lfloor N_X/2 \rfloor}!} + N_A\cdot({N_X}^3\cdot(N_T+N_X))$
\end{proof}

\subsection{Optimality}

We now show that \algname is \emph{optimal} by establishing that no alternative method for learning \textit{safe} numeric action models 
can return a stronger action model (according to Definition~\ref{def:strong}) than the one returned by \algname (given the same input). 


\begin{theorem}[Optimality] \label{th:opt}
    Let $\hat{M}$ be the model generated by \algname given the set of trajectories $\mathcal{T}$, and let $\realm$ be the real action model. 
    For every model $M$ that is safe w.r.t. $\realm$ created by the same set of trajectories $\mathcal{T}$, 
    and every state $s$ and grounded action $a$, if $a$ is applicable in $s$ according to $M$ then it is also applicable in $s$ according to $\hat{M}$.     
\end{theorem}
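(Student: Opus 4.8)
The plan is to reduce the statement to a claim about applicability regions: show that the region in which $\hat{M}$ deems $a$ applicable contains the region of \emph{any} safe model $M$ learnable from $\mathcal{T}$. Since applicability requires satisfying both the Boolean and the numeric preconditions, I would treat these two parts separately and then recombine. For the Boolean part I would simply invoke the optimality of \sam~\citep{juba2021safe}: because \algname inherits its Boolean preconditions from \sam, and \sam is the strongest safe Boolean action model learnable from $\mathcal{T}$, the Boolean preconditions of any safe $M$ are at least as restrictive as those of $\hat{M}$. All the real work is therefore in the numeric preconditions. Throughout I would read ``a safe model created by $\mathcal{T}$'' as a model output by an algorithm that guarantees safety for \emph{every} real model consistent with $\mathcal{T}$, which is the only meaningful reading since the learner never observes \realm.

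Next I would characterize $\hat{M}$'s numeric applicability region exactly. Fixing a grounded action $a$ with lifted counterpart $\lifta$, let $P = DB_{\pre}(\lifta)$ be the observed pre-state vectors over $X(\lifta)$. By the correctness argument and Lemma~\ref{th:effects-safe}, \algname's numeric preconditions accept a point iff it lies in the affine hull of $P$ (enforced by the equality constraints coming from $CompBasis$) \emph{and} its projection lies inside the convex hull of the projected points (enforced by the facets of $CH_{proj}$). Together these conditions say precisely that the point lies in $\mathrm{conv}(P)$; the same holds when $\lifta$ was already safe for \nsam, since there the convex hull is computed directly. Hence the numeric applicability region of $\hat{M}$ is exactly $\mathrm{conv}(P)$, and it suffices to prove that the numeric part $s^X$ of any state $s$ accepted by a safe $M$ lies in $\mathrm{conv}(P)$.

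The heart of the proof is an indistinguishability argument. Suppose, for contradiction, that a safe $M$ accepts $s$ but $s^X \notin \mathrm{conv}(P)$. Since $\mathrm{conv}(P)$ is closed and convex, the strict separating hyperplane theorem~\citep{lay2007convex} yields a single linear inequality $\langle w, x\rangle \le c$ satisfied by every point of $P$ yet violated by $s^X$. I would then construct an alternative real model $\realm'$ that agrees with \realm on all effects and all Boolean preconditions, but whose numeric precondition for $\lifta$ is exactly $\langle w, x\rangle \le c$. Because every observed pre-state already satisfies the Boolean preconditions, induces the observed effects, and satisfies $\langle w, \cdot\rangle \le c$, the trajectories $\mathcal{T}$ are equally consistent with $\realm'$ as with \realm. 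A safe learner seeing only $\mathcal{T}$ cannot distinguish the two, so it must return a model safe with respect to both; but $M$ accepts $s$ while $a$ is inapplicable at $s$ under $\realm'$, contradicting the safety of $M$. Thus $s^X \in \mathrm{conv}(P)$.

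Combining the parts: if $a$ is applicable in $s$ according to $M$, then $s$ satisfies $M$'s Boolean preconditions, hence $\hat{M}$'s (by \sam optimality), and $s^X\in\mathrm{conv}(P)$, hence $s$ satisfies $\hat{M}$'s numeric preconditions; therefore $app_{\hat{M}}(a,s)$, as required. I expect the main obstacle to be the careful construction and justification of $\realm'$ --- in particular, arguing that a single separating inequality suffices even when $\mathrm{conv}(P)$ is lower-dimensional, so that a point $s^X$ lying off the affine hull of $P$ is still excluded by one inequality that retains all of $P$, and verifying rigorously that $\realm'$ genuinely reproduces $\mathcal{T}$ so that the indistinguishability step is sound.
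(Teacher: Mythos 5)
Your proof is correct and takes essentially the same route as the paper's: both argue the contrapositive by exhibiting an alternative real action model, consistent with $\mathcal{T}$, whose numeric precondition is a single linear inequality satisfied by every observed pre-state in $DB_{\pre}(\lifta)$ but violated at $s$, so that any model accepting $a$ in $s$ cannot be safe under the indistinguishability reading you (rightly) make explicit. The only differences are minor: the paper obtains the separating inequality directly as the violated facet of $CH_{proj}$ (or the one-sided relaxation of a violated $CompBasis$ equality, which is how it handles the lower-dimensional case you worried about) rather than invoking the separating hyperplane theorem on $\mathrm{conv}(DB_{\pre}(\lifta))$, and it leaves the Boolean part implicit, which you dispatch explicitly via the prior optimality result for \sam.
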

\begin{proof}
Let $a^*$ be an action that is not applicable in a state $s^*$ according to $\hat{M}$. 
We will show that if $a^*$ is applicable in $s^*$ according to another action model $M$, then $M$ is not safe, from which the theorem follows immediately. 
Let $v_s$ be the vector $v_s=\{s(x)|x\in s\}$, 
and let $c$ be the numeric expression representing a precondition of the action $a^*$. 
Consequently, $c(v_s)$ is the result of assigning every numeric function its appropriate value according to $v_s$.
If $a^*$ is not applicable in $s^*$ according to $\hat{M}$, then $s^*$ violates one of the preconditions of $a^*$:
if it is an equality precondition, i.e., $c(v_{s^*}) = 0$ then then either $c(v_{s^*})>0$ or $c(v_{s^*})<0$.
But notice, \algname only adds the equality precondition if $c(v_{s'}) = 0$ for all $s'$ appearing in the example trajectories.
Thus, there exists a safe action model, consistent with the input trajectories, in which $a^*$ has the precondition $c(v_s)\leq 0$ or $c(v_s)\geq 0$.
Thus, $a$ is not applicable in some state $s^*$ for a possible action model; hence, $M$ is unsafe. 

Similarly, if we have $c(v_{s^*})>0$ for one of the convex hull's facets $c(v_s) \leq 0$, then since all of the states $s'$ appearing in example state transitions satisfied $c(v_{s'}) \leq 0$ (or else the convex hull would not have a facet $c(v_s)\leq 0$) there is an action model in which $a^*$ has the precondition $c(v_s)\leq 0$ that is consistent with all of the example trajectories. Therefore, again, $M$ is not safe.
\end{proof}

\section{Experimental Results}

We conducted an experimental evaluation to evaluate the performance of \algname and \nsam on a diverse set of numeric planning domains. The code and the datasets are available at \url{https://github.com/SPL-BGU/numeric-sam}.  

\subsection{Baseline and Domains}

As a baseline, we compared our algorithms to \pmn~\citep{segura2021discovering}. 
We used the version of \pmn supplied by the authors and publicly available at \url{https://github.com/Leontes/PlanMiner}.
We observed that the runtime of \pmn can be prohibitively long, and thus we restricted the learning process to end within 10 minutes. 
If, for a given experiment iteration the process did not end within the restricted time or resulted with an error, we treated the iteration as if no model was learned.

We experimented on the following domains: Depot, Driverlog (two versions, one with linear and one with polynomial preconditions and effects, denoted as ``L'' and ``P'', respectively), Rovers, Satellite, and Zenotravel from the 3rd International Planning Competition (IPC3)~\citep{long20033rd}; Farmland and Sailing, from Scala et al.~\cite{scala2016heuristics}; Counters, from IPC 2023~\cite{scala2020subgoaling,taitler20242023}; and  Minecraft from Benyamin et al.~\citep{benyamin2023model}. 
Both Driverlog-P and Zenotravel domains contain preconditions and effects that are polynomial combinations of \pbfs. 

Table~\ref{tab:numereric-domains} provides information about the experimented domains. 
The ``Domain'' column represents the name of the experimented domain, and columns ``$|A|$'', ``$|A_X|$'', ``$|F|$'', and ``$|X|$'' represent the total number of lifted actions, the number of lifted actions with numeric preconditions and effects, and the number of lifted fluents and numeric functions in the domains, respectively. 
The columns ``$avg$ $|\pre^{X}|$'' and ``$avg$ $|\eff^{X}|$'' represent the average number of numeric preconditions and effects per action in the domain. 
The numbers in the parentheses represent the maximal and minimal number of numeric preconditions and effects in an action, respectively.
The columns ``$max$ \pbls'' and ``$max$ \pbfs'' represent the maximal number of \pbls and \pbfs in an action in the domain (this is relevant to both preconditions and effects). 
In the polynomial domain, these values represent the number of parameter-bound monomials up to the degree of the domain's polynomial. 
For example, in the Zenotravel domain, the maximal number of \pbfs in an action in the domain is 9, thus the number of parameter-bound monomials is $9 + {9\choose2} + 9=54$. 

To calculate the convex hull, we used an implementation of the Quickhull~\cite{barber1996quickhull} algorithm called QHull~\cite{barber2011qhull}. 
Qhull is restricted in the number of dimensions its data is permitted to have. Specifically, the algorithm cannot learn convex hulls for shapes with more than eight dimensions.\footnote{Qhull issues --- \url{http://www.qhull.org/html/qh-code.htm#performance}} 
This occurred only in
the Driverlog-P and Zenotravel domains, which include more than eight parameter-bound monomials. 
To address this, we provided \algname and \nsam the set of monomials used in each action (in both the preconditions and the effects). 
We did not provide their composition structure, i.e., their sign or the coefficients multiplying the monomials.
We denote this domain knowledge as Relevant-Functions (RF). The maximal number of monomials used in the learning process after using the RF is denoted in parentheses in the column $max$ \pbfs of Table~\ref{tab:numereric-domains}.
\begin{table}[H]
\centering
\resizebox{\textwidth}{!}{%
\begin{tabular}{@{}lcccccccc@{}}
\toprule
\multicolumn{4}{c}{} &  & $avg$ & $avg$ & $max$ & $max$ \\
Domain & $|A|$ & $|A_X|$ & $|F|$ & $|X|$ & \multicolumn{1}{c}{$|\pre^{X}|$} & \multicolumn{1}{c}{$|\eff^{X}|$} & \pbls & \pbfs \\ \midrule
\textbf{Counters} & 4 & 4 & 0 & 3 & 1.0 (1,1) & 1.0 (1,1) & 0 & 3 \\
\textbf{Depot} & 5 & 4 & 6 & 4 & 0.2 (1,0) & 0.8 (1,0) & 16 & 4 \\
\textbf{Driverlog-L} & 6 & 2 & 6 & 4 & 0.0 (0,0) & 0.3 (1,0) & 20 & 6 \\
\textbf{Farmland} & 2 & 2 & 1 & 2 & 1.0 (1,1) & 2.5 (3,2) & 4 & 3 \\
\textbf{Minecraft} & 6 & 6 & 1 & 6 & 1.6 (3,1) & 2.3 (3,2) & 2 & 6 \\
\textbf{Sailing} & 8 & 8 & 1 & 3 & 0.5 (4,0) & 1.3 (2,0) & 2 & 3 \\
\textbf{Satellite} & 5 & 2 & 8 & 6 & 0.4 (1,0) & 0.8 (2,0) & 16 & 6 \\
\textbf{Rovers} & 10 & 9 & 26 & 2 & 0.9 (1,0) & 0.9 (1,0) & 88 & 2 \\ \midrule
\textbf{Driverlog-P} & 6 & 3 & 6 & 4 & 0.0 (0,0) & 0.8 (2,0) & 20 & 20 (3) \\
\textbf{Zenotravel} & 5 & 5 & 2 & 8 & 0.8 (2,0) & 1.4 (2,1) & 6 & 54 (5) \\ \bottomrule
\end{tabular}%
}
\caption{The domains used in the experiments. }
\label{tab:numereric-domains}
\end{table}


\subsection{Experimental Setup}
\label{sec:experimental-setup}

We created our dataset of training trajectories and test problems by generating random problems using the IPC problem generator~\citep{seipp-et-al-zenodo2022}. 
We also created our own generators for Farmland, Counters, and Sailing domains. 
Finally, we used the generator created by \citet{benyamin2023model} for the Minecraft domain.
Then, we used state-of-the-art numeric planners, specifically ENHSP~\citep{scala2016heuristics} and Metric-FF~\citep{hoffmann2003metric}, 
to solve the generated problems, and only the solved problems were used as our dataset.
We used a total of 100 problems to create trajectories, which were then used as input in our experiments.
We split the dataset in an 80:20 ratio for the train and test sets. 
This process was repeated five times as a part of a five-fold cross-validation, and all the results were averaged over the five folds. 
All experiments were run on a SLURM CPU cluster with a 32 GB RAM limit. 

\subsection{Evaluation Metrics}
\label{sec:metrics}
We evaluated two aspects of the learned models: correctness and effectiveness. 
Correctness means similarity between the learned model and the real action model, 
while effectiveness means the ability of the learned model to enable finding correct plans. 

\subsubsection{Model Correctness Metrics}
Following prior work on action model learning~\citep{aineto2019learning}, we measured the precision and recall of the Boolean preconditions and effects of the learned action models. 
This is done by measuring the rate of the redundant and missing fluents in the actions' preconditions and effects according to the learned action model, i.e., $\hat{M}$, compared to \realm.
Formally, we define the precision and recall of an action's preconditions as follows:
\begin{align*}
    P^{syn}_{\pre}(\lifta) &= \frac{|\pre_{\realm}(\lifta)\cap \pre_{\hat{M}}(\lifta)|}{|\pre_{\hat{M}}(\lifta)|} \\
    R^{syn}_{\pre}(\lifta) &= \frac{|\pre_{\realm}(a)\cap \pre_{\hat{M}}(\lifta)|}{|\pre_{\realm}(\lifta)|} 
\end{align*}
The precision and recall for the action's effects are calculated in the same way.
If an action $\lifta$ was not learned, we set $P^{syn}_{\pre}(\lifta)=0$ and $R^{syn}_{\pre}(\lifta)=1$ as if it includes every \pbl. 
Similarly, we assume that no effect was learned for $\lifta$, thus $P^{syn}_{\eff}(\lifta)=1$ and $R^{syn}_{\eff}(\lifta)=0$.


Relying solely on these measures may be misleading, as two action models may differ \emph{syntactically} yet still enable solving the same set of problems.  
Specifically, having additional preconditions for an action will reduce an action’s preconditions' precision but may not necessarily hinder its applicability. 
In addition, it is not clear how to correctly measure the precision and recall of \emph{numeric} preconditions and effects. 
To address these limitations, we followed prior work and also measured the \textit{semantic} precision and recall for the preconditions~\citep{mordoch2023learning,le2024learning}. We describe these metrics below and refer to the precision and recall metrics defined above as the \emph{syntactic} precision and recall. 

First, we created a dataset that includes our test set trajectories and trajectories created from executing 200 random actions on each test set problem. 
To measure both precision and recall of the preconditions, we ensured that 25\% of these random actions are inapplicable according to \realm. 
Then, for each state transition $\tuple{s,a,s'}$ in these trajectories, we checked if this transition is valid according to the learned action model using VAL~\citep{howey2004val}.
If a transition was deemed applicable according to the learned action model, it was added to $app_{\hat{M}}(\lifta,s)$. Similarly, if a transition was deemed applicable according to the real action model (\realm), then it was added to $app_{\realm}(\lifta, s)$.
Using $app_{\hat{M}}(a)$ and $app_{\realm}(a)$, we measured the \emph{semantic} precision and recall of the learned actions' preconditions, 
denoted $P^{sem}_{\pre}(\lifta)$ and $R^{sem}_{\pre}(\lifta)$, respectively, as follows:
\begin{align*}\small
    P^{sem}_{\pre}(\lifta) &= \sum_{s}\frac{|app_{\realm}(\lifta,s)\cap app_{\hat{M}}(\lifta,s)|}{|app_{\hat{M}}(\lifta,s)|}  \\
    R^{sem}_{\pre}(\lifta) &= \sum_{s}\frac{|app_{\realm}(\lifta,s)\cap app_{\hat{M}}(\lifta,s)|}{|app_{\realm}(\lifta,s)|}
\end{align*}
Similar to the syntactic metrics case, if an action $\lifta$ was not learned, we consider it as if it is always inapplicable, setting $R^{sem}_{\pre}(\lifta)=0$ and $P^{sem}_{\pre}(\lifta)=1$.

The MSE value represents the difference in states resulting from the numeric effects of the actions in the learned domains compared with the respective effects according to \realm.
We measured the MSE of the numeric effects by comparing for every state $s$ and action $a$ in our evaluation dataset, the states $s'_{\hat{M}}=a_{\hat{M}}(s)$ and $s'_{\realm}=a_{\realm}(s)$.
Let $N$ be the number of numeric functions in the state $s'_{\realm}$, i.e., $N=|\{x_i| x_i\in s'_{\realm}\}|$.
The MSE calculated from applying the action $a$ on a state $s$ is:
\[
\text{MSE}(s,a) = \frac{1}{N}\sum_{\{x_i| x_i\in s'_{\realm}\}} (s'_{\hat{M}}(x_i)-s'_{\realm}(x_i))^2
\]
The MSE of an action is the average over all the states in the dataset where $a$ was applied.
To perform a symmetric evaluation, we measured the MSE only on states where both the learned action model and \realm claimed the tested action is applicable. 
Thus, if an action was not learned, its default MSE value is zero.

\subsubsection{Model Effectiveness Metrics}
To assess the effectiveness of the learned model in handling previously unseen scenarios, we used the \emph{coverage} metric.
Coverage is the percentage of test set problems successfully solved using the learned action model.  
To compute this metric, we used a portfolio of two solvers, Metric-FF~\cite{hoffmann2003metric} and ENHSP~\cite{scala2016interval}, both restricted to solving each planning problem within 30 minutes.\footnote{This time restriction is common in planning competitions.}
For each planning problem in the test set, we first tried to solve it using Metric-FF, and if it failed or experienced a timeout, we transitioned to solving it with ENHSP. 
The resulting plans were validated using VAL. 

Both planners --- Metric-FF and ENHSP --- have many configurations and parameters. 
As we are not interested in optimal planning but in coverage, we configured both planners to run on a satisfiable configuration.
Specifically, in our experiments, we configured ENHSP to run with Greedy Best First Search with the MRP heuristic, helpful actions, and helpful transitions ("sat-hmrphj" configuration).
Metric FF was configured to run with Enforced Hill Climbing (EHC) search strategy and then BFS, 
which is the standard configuration for the planner. 
Both planners, as well as VAL, were configured to support a numeric threshold of 0.1 in their calculations.

\subsection{Results: Model Correctness}


For each algorithm and domain, we report our model correctness metrics as functions of the number of input trajectories until convergence, i.e., until no change was observed in the statistics when given more input trajectories. Additionally, table entries that exhibited no change compared to the preceding number of input trajectories were omitted.

Consider first the results obtained for the syntactic metrics, presented in  
Tables~\ref{tab:driverlog-l_and_minecraft_syntactic}, \ref{tab:sailing-sat-rover_syntactic}, and \ref{tab:depots-farmland-driverlog-p-zeno_syntactic}.
The column $|T|$ indicates the number of trajectories given as input for the learning algorithm.
Columns $P^{syn}_{\pre},R^{syn}_{\pre},P^{syn}_{\eff}$, and $R^{syn}_{\eff}$ represent the syntactic precision and recall of preconditions and effects, respectively.
Since the Counters domain contains no fluents, a syntactic comparison was irrelevant. 
Thus, we discuss this domain only within the semantic evaluation.
Additionally, the \pmn algorithm consistently encountered exceptions in the Minecraft domain, and therefore its results are marked as N/A.



We observed that in all domains \algname has $P^{syn}_{\pre}$ and $R^{syn}_{\eff}$ values comparable or better than \nsam. 
For example, in the Sailing domain (Table~\ref{tab:sailing-sat-rover_syntactic}) \algname consistently outperforms \nsam in both $P^{syn}_{\pre}$ and $R^{syn}_{\eff}$ with 13\% and 12\% difference respectively when using 80 input trajectories.
These results are to be expected, as actions that are deemed unsafe according to \nsam are not added to the output action model which lowers the $P^{syn}{\pre}$ and $R^{syn}{\eff}$ values.
Conversely, \algname learns every action it observes, even if the action was observed once.

Because \algname and \nsam are safe, their $R^{syn}_{\pre}$ and $P^{syn}_{\eff}$ values must be equal to one.
\pmn, however, does not provide safety guarantees, resulting in comparable and in some domains lower $P^{syn}_{\eff}$ and $R^{syn}_{\pre}$ values.
For example, in the Farmland domain (Table~\ref{tab:depots-farmland-driverlog-p-zeno_syntactic}), the $R^{syn}_{\pre}$ value for \pmn are always zero since it constantly failed to learn the Boolean precondition for the actions. When we investigated the low $P^{syn}_{\eff}$ values we noticed that the algorithm learned the wrong effects for the actions it observed, e.g., the \moveslow\ action has \emph{only} numeric effects but \pmn deduced that the action has two Boolean effects as well --- $(adj\; ?param\_0\; ?param\_1)$ and $(adj\; ?param\_1\; ?param\_0)$. 

We observe that in most domains, the syntactic precision of the preconditions ($P^{syn}_{\pre}$) of  \nsam and \algname is lower than \pmn.
For example, in the Driverlog-L domain (Table~\ref{tab:driverlog-l_and_minecraft_syntactic}),
for both \nsam and \algname the $P^{syn}_{\pre}$ value is at most 0.54 whereas \pmn achieves $P^{syn}_{\pre}$ values of 0.86.
Lower $P^{syn}_{\pre}$ values indicate the addition of redundant preconditions, which are not included in the original domain, by \nsam and \algname.
The additional preconditions could be the result of either, 
(1) mutex conditions added due to safety or 
(2) the algorithms did not observe enough trajectories, resulting in preconditions that are too strict.
A prominent example of mutex conditions can be observed in the \textit{driver-truck} action of the Driverlog-L domain.
To drive the truck, the truck needs to be in the location \textit{?loc-from}, i.e., it cannot be in the destination location --- \textit{?loc-to}.
Subsequently, both \nsam and \algname add the precondition $(not\; (at\; ?truck\; ?loc-to))$ which lowers the $P^{syn}_{\pre}$ value.

However, there are domains where \nsam and \algname add additional preconditions that are indeed redundant and affect the actions' applicability. 
The Rovers domain (Table~\ref{tab:sailing-sat-rover_syntactic}) is such an example.
In this domain, using 80 input trajectories, both \nsam and \algname achieved $P^{syn}_{\pre}$ value of 0.54 whereas \pmn had $P^{syn}_{\pre}$ of 0.71. 
This results from the domain having significantly more fluents than other domains, i.e., 26 lifted fluents and 88 \pbls matching some of the domain's actions (see Table~\ref{tab:numereric-domains}). 
Specifically, we observed that the \textit{communicate\_rock\_data} action included redundant preconditions such as $(equipped\_for\_soil\_analysis \; ?r)$. 
This additional literal might prevent a rover that is not equipped to execute the soil analysis from communicating the results, even though it should be able to.
Another relevant example is the \textit{drive-truck} action in the Driverlog-L domain. 
Since \nsam and \algname both observed the action always being executed when the truck is not empty, they both added the literal $(not\; (empty\; ?truck))$ as a precondition. 
This precondition prevents the action from being applied when the truck is empty.

In the Satellite domain (Table~\ref{tab:sailing-sat-rover_syntactic}) with up to 40 trajectories \pmn achieves results comparable to both \nsam and \algname and even outperforms both in $P^{syn}_{\pre}$. 
However, when using 50 trajectories, \pmn's performance drops drastically to zero in both the $P^{syn}_{\pre}$ and $R^{syn}_{\eff}$.
This results from the algorithm being stuck on deducing the relevant \pbfs for the action \textit{take\_image}, and after a ten-minute timeout, the process was terminated. 
We assume that since the action contains five \pbfs it is relatively harder to learn and the algorithm does not scale well when given additional data under these conditions.
In these cases, we assume that the model contains all the \pbls as preconditions and no effects, thus, $P^{syn}_{\pre}$ and $R^{syn}_{\eff}$ are both equal to zero. 

The Minecraft domain (Table~\ref{tab:driverlog-l_and_minecraft_syntactic}) is the only domain where the $P^{syn}_{\pre}$ is always zero. 
This results from the fact that all actions in this domain do not include Boolean preconditions. 
However, all actions are observed when the goal fluent, i.e., $(have\_pogo\_stick)$, is false. 
Thus, both \nsam and \algname incorrectly add the literal $(not\; (have\_pogo\_stick))$ as a precondition, resulting in a $P^{syn}_{\pre}$ value of zero.

\begin{table}[H]
\centering
\resizebox{\textwidth}{!}{%
\begin{tabular}{ccccccccccc}
\hline
\multicolumn{11}{c}{\textbf{Driverlog-L}} \\ \hline
\textbf{} & \multicolumn{3}{c}{\textbf{$P^{syn}_{\pre}$}} & \multicolumn{2}{c}{\textbf{$P^{syn}_{\eff}$}} & \multicolumn{2}{c}{\textbf{$R^{syn}_{\pre}$}} & \multicolumn{3}{c}{\textbf{$R^{syn}_{\eff}$}} \\ \midrule
\textbf{$|T|$} & \textbf{\nsam} & \textbf{\algname} & \textbf{\pmn} & \textbf{\begin{tabular}[c]{@{}l@{}}\nsam\\ \algname\end{tabular}} & \textbf{\pmn} & \textbf{\begin{tabular}[c]{@{}l@{}}\nsam\\ \algname\end{tabular}} & \textbf{\pmn} & \textbf{\nsam} & \textbf{\algname} & \textbf{\pmn} \\
1 & 0.00 & 0.36 & \textbf{0.50} & 1.00 & 1.00 & 1.00 & 1.00 & 0.00 & \textbf{0.70} & 0.53 \\
2 & 0.05 & \textbf{0.47} & 0.00 & 1.00 & 1.00 & 1.00 & 1.00 & 0.17 & \textbf{0.90} & 0.00 \\
3 & 0.24 & \textbf{0.49} & \textbf{0.52} & 1.00 & 1.00 & 1.00 & 1.00 & 0.53 & \textbf{0.97} & 0.57 \\
4 & 0.32 & \textbf{0.50} & 0.34 & 1.00 & 1.00 & 1.00 & 1.00 & 0.70 & \textbf{0.97} & 0.37 \\
5 & 0.45 & 0.53 & 0.54 & 1.00 & 1.00 & 1.00 & 1.00 & 0.90 & \textbf{1.00} & 0.60 \\
6 & 0.48 & 0.53 & \textbf{0.72} & 1.00 & 1.00 & 1.00 & 1.00 & 0.93 & \textbf{1.00} & 0.80 \\
7 & 0.50 & 0.53 & \textbf{0.72} & 1.00 & 1.00 & 1.00 & 1.00 & 0.97 & \textbf{1.00} & 0.80 \\
8 & 0.50 & 0.53 & \textbf{0.90} & 1.00 & 1.00 & 1.00 & 1.00 & 0.97 & \textbf{1.00} & \textbf{1.00} \\
20 & 0.54 & 0.54 & \textbf{0.90} & 1.00 & 1.00 & 1.00 & 1.00 & 1.00 & 1.00 & 1.00 \\
80 & 0.54 & 0.54 & \textbf{0.72} & 1.00 & 1.00 & 1.00 & 1.00 & \textbf{1.00} & \textbf{1.00} & 0.80 \\ \hline
\multicolumn{11}{c}{\textbf{Minecrat}} \\ \hline
\textbf{} & \multicolumn{3}{c}{\textbf{$P^{syn}_{\pre}$}} & \multicolumn{2}{c}{\textbf{$P^{syn}_{\eff}$}} & \multicolumn{2}{c}{\textbf{$R^{syn}_{\pre}$}} & \multicolumn{3}{c}{\textbf{$R^{syn}_{\eff}$}} \\ \midrule
\textbf{$|T|$} & \textbf{\nsam} & \textbf{\algname} & \textbf{\pmn} & \textbf{\begin{tabular}[c]{@{}l@{}}\nsam\\ \algname\end{tabular}} & \textbf{\pmn} & \textbf{\begin{tabular}[c]{@{}l@{}}\nsam\\ \algname\end{tabular}} & \textbf{\pmn} & \textbf{\nsam} & \textbf{\algname} & \textbf{\pmn} \\
1 & 0.00 & 0.00 & N/A & 1.00  & N/A & 1.00 & N/A & 0.00 & \textbf{0.73} & N/A \\
2 & 0.00 & 0.00 & N/A & 1.00  & N/A & 1.00 & N/A & 0.00 & \textbf{0.77} & N/A \\
4 & 0.00 & 0.00 & N/A & 1.00  & N/A & 1.00 & N/A & 0.00 & \textbf{0.83} & N/A \\
6 & 0.00 & 0.00 & N/A & 1.00  & N/A & 1.00 & N/A & 0.17 & \textbf{0.83} & N/A \\
7 & 0.00 & 0.00 & N/A & 1.00  & N/A & 1.00 & N/A & 0.23 & \textbf{0.83} & N/A \\
8 & 0.00 & 0.00 & N/A & 1.00  & N/A & 1.00 & N/A & 0.47 & \textbf{0.83} & N/A \\
9 & 0.00 & 0.00 & N/A & 1.00  & N/A & 1.00 & N/A & 0.73 & \textbf{0.87} & N/A \\
10 & 0.00 & 0.00 & N/A & 1.00  & N/A & 1.00 & N/A & 0.83 & \textbf{0.90} & N/A \\
20 & 0.00 & 0.00 & N/A & 1.00  & N/A & 1.00 & N/A & 0.83 & \textbf{0.97} & N/A \\
30 & 0.00 & 0.00 & N/A & 1.00  & N/A & 1.00 & N/A & 0.93 & \textbf{1.00} & N/A \\
40 & 0.00 & 0.00 & N/A & 1.00  & N/A & 1.00 & N/A & 0.97 & \textbf{1.00} & N/A \\ \bottomrule
\end{tabular}%
}
\caption{The syntactic precision and recall evaluation results of the actions as a function of the number of trajectories for the Driverlog-L and Minecraft domains.}
\label{tab:driverlog-l_and_minecraft_syntactic}
\end{table}

\begin{table}[H]
\centering
\resizebox{\textwidth}{!}{%
\begin{tabular}{ccccccccccc}
\hline
\multicolumn{11}{c}{\textbf{Sailing}} \\ \hline
\textbf{} & \multicolumn{3}{c}{\textbf{$P^{syn}_{\pre}$}} & \multicolumn{2}{c}{\textbf{$P^{syn}_{\eff}$}} & \multicolumn{2}{c}{\textbf{$R^{syn}_{\pre}$}} & \multicolumn{3}{c}{\textbf{$R^{syn}_{\eff}$}} \\ \midrule
\textbf{$|T|$} & \textbf{\nsam} & \textbf{\algname} & \textbf{\pmn} & \textbf{\begin{tabular}[c]{@{}l@{}}\nsam\\ \algname\end{tabular}} & \textbf{\pmn} & \textbf{\begin{tabular}[c]{@{}l@{}}\nsam\\ \algname\end{tabular}} & \textbf{\pmn} & \textbf{\nsam} & \textbf{\algname} & \textbf{\pmn} \\
1 & 0.30 & 0.33 & \textbf{0.45} & 1.00 & 1.00 & 1.00 & 1.00 & 0.33 & \textbf{0.45} & \textbf{0.45} \\
2 & 0.53 & 0.65 & \textbf{0.78} & 1.00 & 1.00 & 1.00 & 1.00 & 0.65 & \textbf{0.78} & \textbf{0.78} \\
3 & 0.55 & 0.73 & \textbf{0.85} & 1.00 & 1.00 & 1.00 & 1.00 & 0.68 & \textbf{0.85} & \textbf{0.85} \\
4 & 0.60 & 0.73 & \textbf{0.85} & 1.00 & 1.00 & 1.00 & 1.00 & 0.73 & \textbf{0.85} & \textbf{0.85} \\
6 & 0.60 & 0.78 & \textbf{0.90} & 1.00 & 1.00 & 1.00 & 1.00 & 0.73 & \textbf{0.90} & \textbf{0.90} \\
7 & 0.60 & 0.80 & \textbf{0.93} & 1.00 & 1.00 & 1.00 & 1.00 & 0.73 & \textbf{0.93} & \textbf{0.93} \\
8 & 0.60 & 0.83 & \textbf{0.95} & 1.00 & 1.00 & 1.00 & 1.00 & 0.73 & \textbf{0.95} & \textbf{0.95} \\
9 & 0.63 & 0.83 & \textbf{0.95} & 1.00 & 1.00 & 1.00 & 1.00 & 0.75 & \textbf{0.95} & \textbf{0.95} \\
20 & 0.63 & 0.88 & \textbf{1.00} & 1.00 & 1.00 & 1.00 & 1.00 & 0.75 & \textbf{1.00} & \textbf{1.00} \\
30 & 0.65 & 0.88 & \textbf{1.00} & 1.00 & 1.00 & 1.00 & 1.00 & 0.78 & \textbf{1.00} & \textbf{1.00} \\
40 & 0.68 & 0.88 & \textbf{1.00} & 1.00 & 1.00 & 1.00 & 1.00 & 0.80 & \textbf{1.00} & \textbf{1.00} \\
50 & 0.70 & 0.88 & \textbf{1.00} & 1.00 & 1.00 & 1.00 & 1.00 & 0.83 & \textbf{1.00} & \textbf{1.00} \\
60 & 0.75 & 0.88 & \textbf{1.00} & 1.00 & 1.00 & 1.00 & 1.00 & 0.88 & \textbf{1.00} & \textbf{1.00} \\ \hline
\multicolumn{11}{c}{\textbf{Satellite}} \\ \hline
\textbf{} & \multicolumn{3}{c}{\textbf{$P^{syn}_{\pre}$}} & \multicolumn{2}{c}{\textbf{$P^{syn}_{\eff}$}} & \multicolumn{2}{c}{\textbf{$R^{syn}_{\pre}$}} & \multicolumn{3}{c}{\textbf{$R^{syn}_{\eff}$}} \\ \midrule
\textbf{$|T|$} & \textbf{\nsam} & \textbf{\algname} & \textbf{\pmn} & \textbf{\begin{tabular}[c]{@{}l@{}}\nsam\\ \algname\end{tabular}} & \textbf{\pmn} & \textbf{\begin{tabular}[c]{@{}l@{}}\nsam\\ \algname\end{tabular}} & \textbf{\pmn} & \textbf{\nsam} & \textbf{\algname} & \textbf{\pmn} \\
1 & 0.16 & 0.57 & \textbf{1.00} & 1.00 & 1.00 & 1.00 & 1.00 & 0.27 & \textbf{0.93} & \textbf{0.93} \\
2 & 0.41 & 0.59 & \textbf{1.00} & 1.00 & 1.00 & 1.00 & 1.00 & 0.67 & \textbf{0.95} & \textbf{0.95} \\
3 & 0.61 & 0.61 & \textbf{1.00} & 1.00 & 1.00 & 1.00 & 1.00 & 0.99 & \textbf{0.99} & \textbf{0.99} \\
5 & 0.63 & 0.63 & \textbf{1.00} & 1.00 & 1.00 & 1.00 & 1.00 & 1.00 & 1.00 & 1.00 \\
30 & 0.63 & 0.63 & \textbf{0.80} & 1.00 & 1.00 & 1.00 & 1.00 & \textbf{1.00} & \textbf{1.00} & 0.80 \\
40 & \textbf{0.63} & \textbf{0.63} & 0.40 & 1.00 & 1.00 & 1.00 & 1.00 & \textbf{1.00} & \textbf{1.00} & 0.40 \\
50 & \textbf{0.63} & \textbf{0.63} & 0.00 & 1.00 & 1.00 & 1.00 & 1.00 & \textbf{1.00} & \textbf{1.00} & 0.00 \\ \hline
\multicolumn{11}{c}{\textbf{Rovers}} \\ \hline
\textbf{} & \multicolumn{3}{c}{\textbf{$P^{syn}_{\pre}$}} & \multicolumn{2}{c}{\textbf{$P^{syn}_{\eff}$}} & \multicolumn{2}{c}{\textbf{$R^{syn}_{\pre}$}} & \multicolumn{3}{c}{\textbf{$R^{syn}_{\eff}$}} \\ \midrule
\textbf{$|T|$} & \textbf{\nsam} & \textbf{\algname} & \textbf{\pmn} & \textbf{\begin{tabular}[c]{@{}l@{}}\nsam\\ \algname\end{tabular}} & \textbf{\pmn} & \textbf{\begin{tabular}[c]{@{}l@{}}\nsam\\ \algname\end{tabular}} & \textbf{\pmn} & \textbf{\nsam} & \textbf{\algname} & \textbf{\pmn} \\
1 & 0.01 & \textbf{0.22} & \textbf{0.00} & 1.00 & 1.00 & 1.00 & 1.00 & 0.02 & \textbf{0.64} & 0.00 \\
2 & 0.03 & \textbf{0.29} & \textbf{0.11} & 1.00 & 1.00 & \textbf{1.00} & 1.00 & 0.08 & \textbf{0.72} & 0.15 \\
3 & 0.05 & 0.30 & \textbf{0.44} & 1.00 & 1.00 & \textbf{1.00} & 0.99 & 0.14 & \textbf{0.72} & 0.61 \\
4 & 0.16 & 0.34 & \textbf{0.57} & 1.00 & 1.00 & \textbf{1.00} & 0.99 & 0.37 & 0.75 & \textbf{0.76} \\
5 & 0.22 & 0.36 & \textbf{0.59} & \textbf{1.00} & 0.98 & \textbf{1.00} & 0.97 & 0.45 & 0.75 & \textbf{0.76} \\
6 & 0.29 & 0.40 & \textbf{0.61} & \textbf{1.00} & 0.94 & \textbf{1.00} & 0.94 & 0.54 & \textbf{0.76} & \textbf{0.76} \\
7 & 0.30 & 0.42 & \textbf{0.64} & \textbf{1.00} & 0.98 & \textbf{1.00} & 0.98 & 0.54 & \textbf{0.76} & \textbf{0.76} \\
8 & 0.31 & 0.43 & \textbf{0.65} & \textbf{1.00} & 0.96 & \textbf{1.00} & 0.96 & 0.55 & \textbf{0.76} & \textbf{0.76} \\
9 & 0.32 & 0.43 & \textbf{0.65} & \textbf{1.00} & 0.94 & \textbf{1.00} & 0.94 & 0.55 & \textbf{0.76} & \textbf{0.76} \\
10 & 0.32 & 0.43 & \textbf{0.65} & \textbf{1.00} & 0.94 & \textbf{1.00} & 0.94 & 0.56 & \textbf{0.76} & \textbf{0.76} \\
20 & 0.41 & 0.48 & \textbf{0.70} & \textbf{1.00} & 0.98 & \textbf{1.00} & 0.98 & 0.63 & \textbf{0.76} & \textbf{0.76} \\
30 & 0.46 & 0.49 & \textbf{0.68} & \textbf{1.00} & 0.94 & \textbf{1.00} & 0.94 & 0.72 & \textbf{0.76} & \textbf{0.76} \\
40 & 0.51 & 0.51 & \textbf{0.69} & \textbf{1.00} & 0.94 & \textbf{1.00} & 0.94 & 0.76 & 0.76 & 0.76 \\
50 & 0.54 & 0.54 & \textbf{0.69} & \textbf{1.00} & 0.94 & \textbf{1.00} & 0.94 & 0.76 & 0.76 & 0.76 \\
60 & 0.54 & 0.54 & \textbf{0.73} & 1.00 & 1.00 & 1.00 & 1.00 & 0.76 & 0.76 & 0.76 \\
70 & 0.54 & 0.54 & \textbf{0.70} & \textbf{1.00} & 0.94 & \textbf{1.00} & 0.94 & 0.76 & 0.76 & 0.76 \\
80 & 0.55 & 0.55 & \textbf{0.72} & \textbf{1.00} & 0.98 & \textbf{1.00} & 0.98 & 0.76 & 0.76 & 0.76 \\ \bottomrule
\end{tabular}%
}
\caption{The syntactic precision and recall evaluation results of the actions as a function of the number of trajectories for the Sailing, Satellite, and Rovers domains.}
\label{tab:sailing-sat-rover_syntactic}
\end{table}

\begin{table}[H]
\centering
\resizebox{\textwidth}{!}{%
\begin{tabular}{ccccccccccc}
\hline
\multicolumn{11}{c}{\textbf{Depots}} \\ \hline
\textbf{} & \multicolumn{3}{c}{\textbf{$P^{syn}_{\pre}$}} & \multicolumn{2}{c}{\textbf{$P^{syn}_{\eff}$}} & \multicolumn{2}{c}{\textbf{$R^{syn}_{\pre}$}} & \multicolumn{3}{c}{\textbf{$R^{syn}_{\eff}$}} \\ \midrule
\textbf{$|T|$} & \textbf{\nsam} & \textbf{\algname} & \textbf{\pmn} & \textbf{\begin{tabular}[c]{@{}l@{}}\nsam\\ \algname\end{tabular}} & \textbf{\pmn} & \textbf{\begin{tabular}[c]{@{}l@{}}\nsam\\ \algname\end{tabular}} & \textbf{\pmn} & \textbf{\nsam} & \textbf{\algname} & \textbf{\pmn} \\
1 & 0.53 & 0.53 & \textbf{0.97} & 1.00 & 1.00 & 1.00 & 1.00 & 1.00 & 1.00 & 1.00 \\ \hline
\multicolumn{11}{c}{\textbf{Farmland}} \\ \hline
\textbf{} & \multicolumn{3}{c}{\textbf{$P^{syn}_{\pre}$}} & \multicolumn{2}{c}{\textbf{$P^{syn}_{\eff}$}} & \multicolumn{2}{c}{\textbf{$R^{syn}_{\pre}$}} & \multicolumn{3}{c}{\textbf{$R^{syn}_{\eff}$}} \\ \midrule
\textbf{$|T|$} & \textbf{\nsam} & \textbf{\algname} & \textbf{\pmn} & \textbf{\begin{tabular}[c]{@{}l@{}}\nsam\\ \algname\end{tabular}} & \textbf{\pmn} & \textbf{\begin{tabular}[c]{@{}l@{}}\nsam\\ \algname\end{tabular}} & \textbf{\pmn} & \textbf{\nsam} & \textbf{\algname} & \textbf{\pmn} \\
1 & 0.00 & \textbf{0.25} & 0.00 & \textbf{1.00} & 0.50 & \textbf{1.00} & 0.50 & 0.00 & \textbf{0.50} & \textbf{0.50} \\
2 & 0.25 & \textbf{0.30} & 0.00 & \textbf{1.00} & 0.40 & \textbf{1.00} & 0.40 & 0.50 & \textbf{0.60} & \textbf{0.60} \\
3 & 0.30 & \textbf{0.35} & 0.00 & \textbf{1.00} & 0.30 & \textbf{1.00} & 0.30 & 0.60 & \textbf{0.70} & \textbf{0.70} \\
4 & 0.30 & \textbf{0.50} & 0.00 & \textbf{1.00} & 0.00 & \textbf{1.00} & 0.00 & 0.60 & \textbf{1.00} & \textbf{1.00} \\
5 & 0.40 & \textbf{0.50} & 0.00 & \textbf{1.00} & 0.00 & \textbf{1.00} & 0.00 & 0.80 & \textbf{1.00} & \textbf{1.00} \\
6 & 0.45 & \textbf{0.50} & 0.00 & \textbf{1.00} & 0.00 & \textbf{1.00} & 0.00 & 0.90 & \textbf{1.00} & \textbf{1.00} \\
7 & \textbf{0.50} & \textbf{0.50} & 0.00 & \textbf{1.00} & 0.00 & \textbf{1.00} & 0.00 & 1.00 & 1.00 & 1.00 \\ \hline
\multicolumn{11}{c}{\textbf{Driverlog-P}} \\ \hline
\textbf{} & \multicolumn{3}{c}{\textbf{$P^{syn}_{\pre}$}} & \multicolumn{2}{c}{\textbf{$P^{syn}_{\eff}$}} & \multicolumn{2}{c}{\textbf{$R^{syn}_{\pre}$}} & \multicolumn{3}{c}{\textbf{$R^{syn}_{\eff}$}} \\ \midrule
\textbf{$|T|$} & \textbf{\nsam} & \textbf{\algname} & \textbf{\pmn} & \textbf{\begin{tabular}[c]{@{}l@{}}\nsam\\ \algname\end{tabular}} & \textbf{\pmn} & \textbf{\begin{tabular}[c]{@{}l@{}}\nsam\\ \algname\end{tabular}} & \textbf{\pmn} & \textbf{\nsam} & \textbf{\algname} & \textbf{\pmn} \\
1 & 0.06 & 0.09 & \textbf{0.17} & 1.00 & 1.00 & 1.00 & 1.00 & 0.17 & \textbf{0.23} & \textbf{0.23} \\
2 & 0.06 & 0.32 & \textbf{0.54} & 1.00 & 1.00 & 1.00 & 1.00 & 0.17 & \textbf{0.63} & \textbf{0.63} \\
3 & 0.06 & 0.49 & \textbf{0.66} & 1.00 & 1.00 & 1.00 & 1.00 & 0.17 & \textbf{0.93} & 0.73 \\
4 & 0.11 & 0.54 & \textbf{0.90} & 1.00 & 1.00 & 1.00 & 1.00 & 0.23 & \textbf{1.00} & \textbf{1.00} \\
5 & 0.33 & 0.54 & \textbf{0.90} & 1.00 & 1.00 & 1.00 & 1.00 & 0.57 & \textbf{1.00} & \textbf{1.00} \\
6 & 0.45 & 0.54 & \textbf{0.90} & 1.00 & 1.00 & 1.00 & 1.00 & 0.80 & \textbf{1.00} & \textbf{1.00} \\
8 & 0.49 & 0.54 & \textbf{0.90} & 1.00 & 1.00 & 1.00 & 1.00 & 0.93 & \textbf{1.00} & \textbf{1.00} \\
9 & 0.54 & 0.54 & \textbf{0.72} & 1.00 & 1.00 & 1.00 & 1.00 & \textbf{1.00} & \textbf{1.00} & 0.80 \\
10 & 0.54 & 0.54 & \textbf{0.90} & 1.00 & 1.00 & 1.00 & 1.00 & 1.00 & 1.00 & 1.00 \\
50 & 0.54 & 0.54 & \textbf{0.72} & 1.00 & 1.00 & 1.00 & 1.00 & \textbf{1.00} & \textbf{1.00} & 0.80 \\
60 & 0.54 & 0.54 & \textbf{0.90} & 1.00 & 1.00 & 1.00 & 1.00 & 1.00 & 1.00 & 1.00 \\
80 & \textbf{0.54} & \textbf{0.54} & 0.18 & 1.00 & 1.00 & 1.00 & 1.00 & \textbf{1.00} & \textbf{1.00} & 0.20 \\ \hline
\multicolumn{11}{c}{\textbf{Zenotravel}} \\ \hline
\textbf{} & \multicolumn{3}{c}{\textbf{$P^{syn}_{\pre}$}} & \multicolumn{2}{c}{\textbf{$P^{syn}_{\eff}$}} & \multicolumn{2}{c}{\textbf{$R^{syn}_{\pre}$}} & \multicolumn{3}{c}{\textbf{$R^{syn}_{\eff}$}} \\ \midrule
\textbf{$|T|$} & \textbf{\nsam} & \textbf{\algname} & \textbf{\pmn} & \textbf{\begin{tabular}[c]{@{}l@{}}\nsam\\ \algname\end{tabular}} & \textbf{\pmn} & \textbf{\begin{tabular}[c]{@{}l@{}}\nsam\\ \algname\end{tabular}} & \textbf{\pmn} & \textbf{\nsam} & \textbf{\algname} & \textbf{\pmn} \\
1 & 0.00 & 0.57 & \textbf{0.80} & 1.00 & 1.00 & 1.00 & 1.00 & 0.00 & \textbf{0.80} & \textbf{0.80} \\
2 & 0.00 & 0.61 & \textbf{0.88} & 1.00 & 1.00 & 1.00 & 1.00 & 0.00 & \textbf{0.88} & \textbf{0.88} \\
3 & 0.00 & 0.67 & \textbf{1.00} & 1.00 & 1.00 & 1.00 & 1.00 & 0.00 & \textbf{1.00} & \textbf{1.00} \\
5 & 0.20 & 0.67 & \textbf{1.00} & 1.00 & 1.00 & 1.00 & 1.00 & 0.32 & \textbf{1.00} & \textbf{1.00} \\
6 & 0.33 & 0.67 & \textbf{1.00} & 1.00 & 1.00 & 1.00 & 1.00 & 0.52 & \textbf{1.00} & \textbf{1.00} \\
7 & 0.35 & 0.67 & \textbf{1.00} & 1.00 & 1.00 & 1.00 & 1.00 & 0.56 & \textbf{1.00} & \textbf{1.00} \\
8 & 0.37 & 0.67 & \textbf{1.00} & 1.00 & 1.00 & 1.00 & 1.00 & 0.60 & \textbf{1.00} & \textbf{1.00} \\
9 & 0.41 & 0.67 & \textbf{1.00} & 1.00 & 1.00 & 1.00 & 1.00 & 0.64 & \textbf{1.00} & \textbf{1.00} \\
10 & 0.53 & 0.67 & \textbf{1.00} & 1.00 & 1.00 & 1.00 & 1.00 & 0.76 & \textbf{1.00} & \textbf{1.00} \\
20 & 0.57 & 0.67 & \textbf{1.00} & 1.00 & 1.00 & 1.00 & 1.00 & 0.80 & \textbf{1.00} & \textbf{1.00} \\
80 & 0.65 & 0.67 & \textbf{1.00} & 1.00 & 1.00 & 1.00 & 1.00 & 0.96 & \textbf{1.00} & \textbf{1.00} \\ \bottomrule
\end{tabular}%
}
\caption{The syntactic precision and recall evaluation results of the actions as a function of the number of trajectories for the Depots, Farmland, Driverlog-P, and Zenotravel domains.}
\label{tab:depots-farmland-driverlog-p-zeno_syntactic}
\end{table}

Next, consider the semantic precision and recall results shown in Tables~\ref{tab:counters-depots-semantic}-\ref{tab:rovers-driverlog-p-semantic}.
The column $|T|$ represents the number of input trajectories provided to the learning algorithms, and the columns $P^{sem}_{\pre}$, $R^{sem}_{\pre}$, and $MSE$  represent semantic precision and recall of the preconditions, and MSE of the effects, respectively.

Similar to the syntactic evaluation, \algname performs comparably to \nsam, and in certain domains achieves higher $R^{sem}_{\pre}$ values.
For example, in the Rovers domain (Table~\ref{tab:rovers-driverlog-p-semantic}), \algname displays higher $R^{sem}_{\pre}$ values up to 40 input trajectories. 
Specifically, when using fewer than 40 trajectories, \algname achieves $R^{sem}_{\pre}$ values up to 9\% higher than those obtained by \nsam.
This indicates that \algname learned more actions than \nsam, thus increasing their applicability across more states.
Similar trends can be observed in the Zenotravel domain (Table~\ref{tab:sailing-satellite-zenotravel-semantic}) where \algname achieves 56\% higher $R^{sem}_{\pre}$ than \nsam when using 4 input trajectories.

\pmn displays consistently higher $R^{sem}_{\pre}$ values in all the domains it succeeded in learning compared to both \nsam and \algname.
For example, in the Depots domain (Table~\ref{tab:counters-depots-semantic}) after two trajectories, the actions learned using \pmn were applicable in all the evaluated states. 
In contrast, actions learned by \nsam and \algname were applicable in at most 36\% of the evaluated states.
This performance gap arises because \algname and \nsam learn preconditions based on convex hull methods, whereas \pmn learns simpler numeric preconditions involving fewer comparisons.

Because \algname and \nsam provide safety guarantees, their $P^{sem}_{\pre}$ values must always equal to one.
\pmn however, is not safe, resulting in its actions being applicable in states where they are not applicable according to \realm.
In the Farmland domain (Table~\ref{tab:driverlog-l-farmland-minecrat-semantic}), with 80 input trajectories, \pmn's $R^{sem}_{\pre}$ is equal to 0.90, the highest value among the learning algorithms.
However, \pmn achieves a $P^{sem}_{\pre}$ value of 0.57, the lowest among all compared algorithms. 
This indicates that in 43\% of states where actions should not be applicable (according to \realm), \pmn incorrectly considered these actions applicable.
To demonstrate, in the \moveslow\ action, \pmn did not learn the precondition $(>=\; (x\; ?f1)\; 1)$, and in some experiments \pmn could not learn any of the numeric preconditions for the action.

Finally, the MSE of both \nsam and \algname is always one in all the experimented domains. 
This results from the effects being polynomial equations that can be learned using linear regression. 
Conversely, \pmn learns action effects via symbolic regression, an evolutionary algorithm inherently prone to errors.
In some domains, such as the Zenotravel domain (Table~\ref{tab:sailing-satellite-zenotravel-semantic}), the $MSE$ values for \pmn are extremely large, with values surpassing $10^3$.
Specifically, the correct numeric effect for the \textit{refuel} action is $(assign\; (fuel\; ?a)\ (capacity\; ?a))$, whereas \pmn incorrectly learned this effect as $(increase\; (fuel\; ?a)\; (capacity\; ?a))$.
This alteration causes the states resulting from applying the action to differ by the value of $(fuel\; ?a)$ before the change.

\begin{table}[H]
\centering
\resizebox{0.75\textwidth}{!}{%
\begin{tabular}{@{}cccccccc@{}}
\toprule
\multicolumn{8}{c}{\textbf{Counters}} \\ \hline
 & \multicolumn{2}{c}{\textbf{$P^{sem}_{\pre}$}} & \multicolumn{3}{c}{\textbf{$R^{sem}_{\pre}$}} & \multicolumn{2}{c}{\textbf{$MSE$}} \\ \midrule
\textbf{$|T|$} & \textbf{\begin{tabular}[c]{@{}l@{}}\nsam\\ \algname\end{tabular}} & \textbf{\pmn} & \textbf{\nsam} & \textbf{\algname} & \textbf{\pmn} & \textbf{\begin{tabular}[c]{@{}l@{}}\nsam\\ \algname\end{tabular}} & \textbf{\pmn} \\
1 & \textbf{1.00} & 0.99 & 0.01 & 0.01 & \textbf{0.50} & \textbf{0.00} & 0.04 \\
2 & \textbf{1.00} & 0.97 & 0.06 & 0.06 & \textbf{0.75} & \textbf{0.00} & 0.11 \\
3 & \textbf{1.00} & 0.97 & 0.13 & 0.13 & \textbf{0.60} & 0.00 & 0.00 \\
4 & \textbf{1.00} & 0.91 & 0.29 & 0.29 & \textbf{0.85} & 0.00 & 0.00 \\
5 & \textbf{1.00} & 0.92 & 0.31 & 0.32 & \textbf{0.70} & 0.00 & 0.00 \\
6 & \textbf{1.00} & 0.86 & 0.33 & 0.33 & \textbf{0.95} & 0.00 & 0.00 \\
7 & \textbf{1.00} & 0.92 & 0.35 & 0.35 & \textbf{0.55} & 0.00 & 0.00 \\
8 & \textbf{1.00} & 0.89 & 0.37 & 0.37 & \textbf{0.60} & 0.00 & 0.00 \\
9 & \textbf{1.00} & 0.93 & 0.37 & 0.38 & \textbf{0.40} & 0.00 & 0.00 \\
10 & \textbf{1.00} & 0.93 & 0.38 & 0.39 & \textbf{0.40} & 0.00 & 0.00 \\
20 & \textbf{1.00} & 0.94 & 0.46 & 0.47 & \textbf{0.80} & \textbf{0.00} & 0.04 \\
30 & \textbf{1.00} & 0.87 & 0.52 & 0.53 & \textbf{0.95} & 0.00 & 0.00 \\
40 & \textbf{1.00} & 0.84 & 0.57 & 0.57 & \textbf{1.00} & 0.00 & 0.00 \\
50 & \textbf{1.00} & 0.83 & 0.61 & 0.61 & \textbf{1.00} & 0.00 & 0.00 \\
60 & \textbf{1.00} & 0.83 & 0.63 & 0.63 & \textbf{1.00} & 0.00 & 0.00 \\
80 & \textbf{1.00} & 0.83 & 0.64 & 0.64 & \textbf{1.00} & 0.00 & 0.00 \\ \hline
\multicolumn{8}{c}{\textbf{Depots}} \\ \hline
 & \multicolumn{2}{c}{\textbf{$P^{sem}_{\pre}$}} & \multicolumn{3}{c}{\textbf{$R^{sem}_{\pre}$}} & \multicolumn{2}{c}{\textbf{$MSE$}} \\ \midrule
\textbf{$|T|$} & \textbf{\begin{tabular}[c]{@{}l@{}}\nsam\\ \algname\end{tabular}} & \textbf{\pmn} & \textbf{\nsam} & \textbf{\algname} & \textbf{\pmn} & \textbf{\begin{tabular}[c]{@{}l@{}}\nsam\\ \algname\end{tabular}} & \textbf{\pmn} \\
1 & 1.00 & 1.00 & 0.06 & 0.06 & \textbf{0.99} & 0.00 & 0.00 \\
2 & 1.00 & 1.00 & 0.10 & 0.10 & \textbf{1.00} & 0.00 & 0.00 \\
3 & 1.00 & 1.00 & 0.13 & 0.13 & \textbf{1.00} & 0.00 & 0.00 \\
4 & 1.00 & 1.00 & 0.17 & 0.17 & \textbf{1.00} & 0.00 & 0.00 \\
5 & 1.00 & 1.00 & 0.18 & 0.18 & \textbf{1.00} & 0.00 & 0.00 \\
6 & 1.00 & 1.00 & 0.19 & 0.19 & \textbf{1.00} & 0.00 & 0.00 \\
7 & 1.00 & 1.00 & 0.21 & 0.21 & \textbf{1.00} & 0.00 & 0.00 \\
9 & 1.00 & 1.00 & 0.22 & 0.22 & \textbf{1.00} & 0.00 & 0.00 \\
10 & 1.00 & 1.00 & 0.23 & 0.23 & \textbf{1.00} & 0.00 & 0.00 \\
20 & 1.00 & 1.00 & 0.27 & 0.27 & \textbf{1.00} & 0.00 & 0.00 \\
30 & 1.00 & 1.00 & 0.30 & 0.30 & \textbf{1.00} & 0.00 & 0.00 \\
40 & 1.00 & 1.00 & 0.32 & 0.32 & \textbf{1.00} & 0.00 & 0.00 \\
50 & 1.00 & 1.00 & 0.34 & 0.34 & \textbf{1.00} & 0.00 & 0.00 \\
60 & 1.00 & 1.00 & 0.35 & 0.35 & \textbf{1.00} & 0.00 & 0.00 \\
70 & 1.00 & 1.00 & 0.36 & 0.36 & \textbf{1.00} & 0.00 & 0.00 \\ \bottomrule
\end{tabular}%
}
\caption{Comparison of the average semantic performance for \nsam, \algname and \pmn for the Counters and Depots domains.}
\label{tab:counters-depots-semantic}
\end{table}

\begin{table}[H]
\centering
\resizebox{0.75\textwidth}{!}{%
\begin{tabular}{@{}cccccccc@{}}
\toprule
\multicolumn{8}{c}{\textbf{Driverlog-L}} \\ \hline
 & \multicolumn{2}{c}{\textbf{$P^{sem}_{\pre}$}} & \multicolumn{3}{c}{\textbf{$R^{sem}_{\pre}$}} & \multicolumn{2}{c}{\textbf{$MSE$}} \\ \midrule
\textbf{$|T|$} & \textbf{\begin{tabular}[c]{@{}l@{}}\nsam\\ \algname\end{tabular}} & \textbf{\pmn} & \textbf{\nsam} & \textbf{\algname} & \textbf{\pmn} & \textbf{\begin{tabular}[c]{@{}l@{}}\nsam\\ \algname\end{tabular}} & \textbf{\pmn} \\
1 & 1.00 & 1.00 & 0.00 & \textbf{0.01} & 0.00 & 0.00 & 0.00 \\
3 & 1.00 & 1.00 & 0.00 & 0.01 & \textbf{0.40} & 0.00 & 0.00 \\
4 & 1.00 & 1.00 & 0.01 & 0.02 & \textbf{0.20} & 0.00 & 0.00 \\
5 & 1.00 & 1.00 & 0.02 & 0.03 & \textbf{0.60} & 0.00 & 0.00 \\
6 & 1.00 & 1.00 & 0.03 & 0.04 & \textbf{0.80} & 0.00 & 0.00 \\
7 & 1.00 & 1.00 & 0.04 & 0.05 & \textbf{0.80} & 0.00 & 0.00 \\
8 & 1.00 & 1.00 & 0.05 & 0.05 & \textbf{1.00} & 0.00 & 0.00 \\
9 & 1.00 & 1.00 & 0.05 & 0.06 & \textbf{1.00} & 0.00 & 0.00 \\
10 & 1.00 & 1.00 & 0.06 & 0.06 & \textbf{1.00} & 0.00 & 0.00 \\
20 & 1.00 & 1.00 & 0.10 & 0.10 & \textbf{1.00} & 0.00 & 0.00 \\
30 & 1.00 & 1.00 & 0.15 & 0.15 & \textbf{1.00} & 0.00 & 0.00 \\
40 & 1.00 & 1.00 & 0.23 & 0.23 & \textbf{1.00} & 0.00 & 0.00 \\
50 & 1.00 & 1.00 & 0.35 & 0.35 & \textbf{1.00} & 0.00 & 0.00 \\
60 & 1.00 & 1.00 & 0.40 & 0.40 & \textbf{1.00} & 0.00 & 0.00 \\
70 & 1.00 & 1.00 & 0.45 & 0.45 & \textbf{1.00} & 0.00 & 0.00 \\
80 & \textbf{1.00} & 1.00 & 0.52 & 0.52 & \textbf{0.80} & 0.00 & 0.00 \\ \hline
\multicolumn{8}{c}{\textbf{Farmland}} \\ \hline
 & \multicolumn{2}{c}{\textbf{$P^{sem}_{\pre}$}} & \multicolumn{3}{c}{\textbf{$R^{sem}_{\pre}$}} & \multicolumn{2}{c}{\textbf{$MSE$}} \\ \midrule
\textbf{$|T|$} & \textbf{\begin{tabular}[c]{@{}l@{}}\nsam\\ \algname\end{tabular}} & \textbf{\pmn} & \textbf{\nsam} & \textbf{\algname} & \textbf{\pmn} & \textbf{\begin{tabular}[c]{@{}l@{}}\nsam\\ \algname\end{tabular}} & \textbf{\pmn} \\
1 & \textbf{1.00} & 0.95 & 0.00 & 0.01 & \textbf{0.44} & 0.00 & 0.00 \\
2 & \textbf{1.00} & 0.85 & 0.03 & 0.03 & \textbf{0.60} & 0.00 & 0.00 \\
3 & \textbf{1.00} & 0.83 & 0.07 & 0.07 & \textbf{0.30} & \textbf{0.00} & 0.04 \\
4 & \textbf{1.00} & 0.54 & 0.10 & 0.11 & \textbf{1.00} & 0.00 & 0.00 \\
5 & \textbf{1.00} & 0.54 & 0.16 & 0.16 & \textbf{1.00} & 0.00 & 0.00 \\
6 & \textbf{1.00} & 0.55 & 0.17 & 0.17 & \textbf{0.90} & \textbf{0.00} & 0.01 \\
7 & \textbf{1.00} & 0.55 & 0.19 & 0.19 & \textbf{0.90} & \textbf{0.00} & 0.01 \\
8 & \textbf{1.00} & 0.62 & 0.21 & 0.21 & \textbf{0.90} & \textbf{0.00} & 0.01 \\
9 & \textbf{1.00} & 0.62 & 0.21 & 0.21 & \textbf{0.90} & 0.00 & 0.00 \\
10 & \textbf{1.00} & 0.69 & 0.21 & 0.21 & \textbf{0.80} & \textbf{0.00} & 0.01 \\
20 & \textbf{1.00} & 0.64 & 0.25 & 0.25 & \textbf{0.82} & 0.00 & 0.00 \\
30 & \textbf{1.00} & 0.57 & 0.30 & 0.30 & \textbf{0.92} & 0.00 & 0.00 \\
40 & \textbf{1.00} & 0.57 & 0.32 & 0.32 & \textbf{0.92} & 0.00 & 0.00 \\
50 & \textbf{1.00} & 0.58 & 0.32 & 0.32 & \textbf{0.90} & 0.00 & 0.00 \\
60 & \textbf{1.00} & 0.57 & 0.33 & 0.33 & \textbf{0.90} & 0.00 & 0.00 \\
70 & \textbf{1.00} & 0.57 & 0.36 & 0.36 & \textbf{0.90} & 0.00 & 0.00 \\
80 & \textbf{1.00} & 0.57 & 0.35 & 0.35 & \textbf{0.90} & 0.00 & 0.00 \\ \hline
\multicolumn{8}{c}{\textbf{Minecraft}} \\ \hline
 & \multicolumn{2}{c}{\textbf{$P^{sem}_{\pre}$}} & \multicolumn{3}{c}{\textbf{$R^{sem}_{\pre}$}} & \multicolumn{2}{c}{\textbf{$MSE$}} \\ \midrule
\textbf{$|T|$} & \textbf{\begin{tabular}[c]{@{}l@{}}\nsam\\ \algname\end{tabular}} & \textbf{\pmn} & \textbf{\nsam} & \textbf{\algname} & \textbf{\pmn} & \textbf{\begin{tabular}[c]{@{}l@{}}\nsam\\ \algname\end{tabular}} & \textbf{\pmn} \\
1 & \textbf{1.00} & N/A & 0.00 & 0.00 & N/A & 0.00 & N/A \\
20 & \textbf{1.00} & N/A & 0.01 & 0.01 & N/A & 0.00 & N/A \\
40 & \textbf{1.00} & N/A & 0.02 & 0.02 & N/A & 0.00 & N/A \\
60 & \textbf{1.00} & N/A & 0.03 & 0.03 & N/A & 0.00 & N/A \\ \bottomrule
\end{tabular}%
}
\caption{Comparison of the average semantic performance for \nsam, \algname and \pmn for the Driverlog-L, Farmland, and Minecraft domains.}
\label{tab:driverlog-l-farmland-minecrat-semantic}
\end{table}

\begin{table}[H]
\centering
\resizebox{0.75\textwidth}{!}{%
\begin{tabular}{@{}cccccccc@{}}
\toprule
\multicolumn{8}{c}{\textbf{Sailing}} \\ \hline
 & \multicolumn{2}{c}{\textbf{$P^{sem}_{\pre}$}} & \multicolumn{3}{c}{\textbf{$R^{sem}_{\pre}$}} & \multicolumn{2}{c}{\textbf{$MSE$}} \\ \midrule
\textbf{$|T|$} & \textbf{\begin{tabular}[c]{@{}l@{}}\nsam\\ \algname\end{tabular}} & \textbf{\pmn} & \textbf{\nsam} & \textbf{\algname} & \textbf{\pmn} & \textbf{\begin{tabular}[c]{@{}l@{}}\nsam\\ \algname\end{tabular}} & \textbf{\pmn} \\
1 & \textbf{1.00} & 0.90 & 0.07 & 0.08 & \textbf{0.45} & 0.00 & 0.00 \\
2 & \textbf{1.00} & 0.90 & 0.17 & 0.17 & \textbf{0.77} & 0.00 & 0.00 \\
3 & \textbf{1.00} & 0.90 & 0.20 & 0.20 & \textbf{0.83} & 0.00 & 0.00 \\
4 & \textbf{1.00} & 0.90 & 0.22 & 0.22 & \textbf{0.82} & 0.00 & 0.00 \\
5 & \textbf{1.00} & 0.90 & 0.23 & 0.23 & \textbf{0.82} & 0.00 & 0.00 \\
6 & \textbf{1.00} & 0.90 & 0.24 & 0.25 & \textbf{0.88} & 0.00 & 0.00 \\
7 & \textbf{1.00} & 0.90 & 0.25 & 0.25 & \textbf{0.92} & 0.00 & 0.00 \\
8 & \textbf{1.00} & 0.90 & 0.26 & 0.26 & \textbf{0.95} & 0.00 & 0.00 \\
9 & \textbf{1.00} & 0.90 & 0.26 & 0.26 & \textbf{0.94} & 0.00 & 0.00 \\
10 & \textbf{1.00} & 0.90 & 0.26 & 0.27 & \textbf{0.95} & 0.00 & 0.00 \\
20 & \textbf{1.00} & 0.90 & 0.29 & 0.29 & \textbf{1.00} & 0.00 & 0.00 \\
60 & \textbf{1.00} & 0.90 & 0.31 & 0.31 & \textbf{1.00} & 0.00 & 0.00 \\ \hline
\multicolumn{8}{c}{\textbf{Satellite}} \\ \hline
 & \multicolumn{2}{c}{\textbf{$P^{sem}_{\pre}$}} & \multicolumn{3}{c}{\textbf{$R^{sem}_{\pre}$}} & \multicolumn{2}{c}{\textbf{$MSE$}} \\ \midrule
\textbf{$|T|$} & \textbf{\begin{tabular}[c]{@{}l@{}}\nsam\\ \algname\end{tabular}} & \textbf{\pmn} & \textbf{\nsam} & \textbf{\algname} & \textbf{\pmn} & \textbf{\begin{tabular}[c]{@{}l@{}}\nsam\\ \algname\end{tabular}} & \textbf{\pmn} \\
1 & 1.00 & 1.00 & 0.00 & 0.00 & \textbf{1.00} & 0.00 & 0.00 \\
4 & 1.00 & 1.00 & 0.01 & 0.01 & \textbf{0.99} & 0.00 & 0.00 \\
5 & 1.00 & 1.00 & 0.02 & 0.02 & \textbf{0.99} & 0.00 & 0.00 \\
8 & 1.00 & 1.00 & 0.02 & 0.03 & \textbf{1.00} & 0.00 & 0.00 \\
9 & 1.00 & 1.00 & 0.03 & 0.03 & \textbf{1.00} & 0.00 & 0.00 \\
20 & 1.00 & 1.00 & 0.06 & 0.07 & \textbf{1.00} & 0.00 & 0.00 \\
30 & 1.00 & 1.00 & 0.08 & 0.08 & \textbf{0.80} & \textbf{0.00} & 5.85 \\
40 & 1.00 & 1.00 & 0.09 & 0.09 & \textbf{0.40} & \textbf{0.00} & 19.43 \\
50 & 1.00 & 1.00 & \textbf{0.10} & \textbf{0.10} & 0.00 & \textbf{0.00} & 31.93 \\
60 & 1.00 & 1.00 & \textbf{0.12} & \textbf{0.12} & 0.00 & \textbf{0.00} & 31.93 \\
80 & 1.00 & 1.00 & \textbf{0.13} & \textbf{0.13} & 0.00 & \textbf{0.00} & 31.93 \\ \hline
\multicolumn{8}{c}{\textbf{Zenotravel}} \\ \hline
 & \multicolumn{2}{c}{\textbf{$P^{sem}_{\pre}$}} & \multicolumn{3}{c}{\textbf{$R^{sem}_{\pre}$}} & \multicolumn{2}{c}{\textbf{$MSE$}} \\ \midrule
\textbf{$|T|$} & \textbf{\begin{tabular}[c]{@{}l@{}}\nsam\\ \algname\end{tabular}} & \textbf{\pmn} & \textbf{\nsam} & \textbf{\algname} & \textbf{\pmn} & \textbf{\begin{tabular}[c]{@{}l@{}}\nsam\\ \algname\end{tabular}} & \textbf{\pmn} \\
1 & 1.00 & 1.00 & 0.00 & 0.26 & \textbf{0.48} & 0.00 & 14105475.87 \\
2 & 1.00 & 1.00 & 0.00 & 0.50 & \textbf{0.80} & \textbf{0.00} & 28090.52 \\
3 & \textbf{1.00} & 0.99 & 0.00 & 0.53 & \textbf{0.84} & \textbf{0.00} & 34213.81 \\
4 & \textbf{1.00} & 0.99 & 0.00 & 0.56 & \textbf{0.96} & \textbf{0.00} & 26099.49 \\
5 & \textbf{1.00} & 0.99 & 0.25 & 0.56 & \textbf{0.88} & \textbf{0.00} & 31208.53 \\
6 & \textbf{1.00} & 0.99 & 0.41 & 0.57 & \textbf{0.88} & \textbf{0.00} & 30444.20 \\
7 & \textbf{1.00} & 0.99 & 0.43 & 0.58 & \textbf{0.84} & \textbf{0.00} & 32842.21 \\
8 & \textbf{1.00} & 0.99 & 0.44 & 0.58 & \textbf{0.92} & \textbf{0.00} & 26726.17 \\
9 & \textbf{1.00} & 0.99 & 0.48 & 0.59 & \textbf{0.96} & \textbf{0.00} & 24403.07 \\
10 & \textbf{1.00} & 0.98 & 0.57 & 0.59 & \textbf{1.00} & \textbf{0.00} & 22159.24 \\
20 & \textbf{1.00} & 0.98 & 0.62 & 0.62 & \textbf{1.00} & \textbf{0.00} & 22159.24 \\
30 & \textbf{1.00} & 0.99 & 0.62 & 0.62 & \textbf{0.96} & \textbf{0.00} & 23980.00 \\
40 & \textbf{1.00} & 0.98 & 0.63 & 0.63 & \textbf{1.00} & \textbf{0.00} & 22159.24 \\
50 & \textbf{1.00} & 0.99 & 0.64 & 0.64 & \textbf{0.88} & \textbf{0.00} & 28673.23 \\
60 & \textbf{1.00} & 0.98 & 0.64 & 0.64 & \textbf{1.00} & \textbf{0.00} & 22159.24 \\
70 & \textbf{1.00} & 0.99 & 0.64 & 0.65 & \textbf{0.96} & \textbf{0.00} & 24256.44 \\
80 & \textbf{1.00} & 0.99 & 0.65 & 0.65 & \textbf{0.96} & \textbf{0.00} & 24177.78 \\ \bottomrule
\end{tabular}%
}
\caption{Comparison of the average semantic performance for \nsam, \algname and \pmn for the Sailing, Satellite, and Zenotravel domains.}
\label{tab:sailing-satellite-zenotravel-semantic}
\end{table}

\begin{table}[H]
\centering
\resizebox{0.75\textwidth}{!}{%
\begin{tabular}{@{}cccccccc@{}}
\toprule
\multicolumn{8}{c}{\textbf{Rovers}} \\ \hline
 & \multicolumn{2}{c}{\textbf{$P^{sem}_{\pre}$}} & \multicolumn{3}{c}{\textbf{$R^{sem}_{\pre}$}} & \multicolumn{2}{c}{\textbf{$MSE$}} \\ \midrule
\textbf{$|T|$} & \textbf{\begin{tabular}[c]{@{}l@{}}\nsam\\ \algname\end{tabular}} & \textbf{\pmn} & \textbf{\nsam} & \textbf{\algname} & \textbf{\pmn} & \textbf{\begin{tabular}[c]{@{}l@{}}\nsam\\ \algname\end{tabular}} & \textbf{\pmn} \\
1 & 1.00 & 1.00 & 0.00 & \textbf{0.02} & 0.00 & 0.00 & 0.00 \\
2 & 1.00 & 1.00 & 0.01 & 0.08 & \textbf{0.09} & \textbf{0.00} & 0.14 \\
3 & 1.00 & 1.00 & 0.03 & 0.12 & \textbf{0.40} & \textbf{0.00} & 0.14 \\
4 & 1.00 & 1.00 & 0.06 & 0.14 & \textbf{0.55} & \textbf{0.00} & 0.09 \\
5 & 1.00 & 1.00 & 0.10 & 0.17 & \textbf{0.62} & \textbf{0.00} & 0.06 \\
6 & 1.00 & 1.00 & 0.17 & 0.24 & \textbf{0.75} & 0.00 & 0.00 \\
7 & 1.00 & 1.00 & 0.18 & 0.26 & \textbf{0.76} & 0.00 & 0.00 \\
8 & 1.00 & 1.00 & 0.19 & 0.28 & \textbf{0.80} & 0.00 & 0.00 \\
9 & 1.00 & 1.00 & 0.19 & 0.28 & \textbf{0.82} & 0.00 & 0.00 \\
10 & 1.00 & 1.00 & 0.20 & 0.29 & \textbf{0.82} & 0.00 & 0.00 \\
20 & 1.00 & 1.00 & 0.27 & 0.34 & \textbf{0.83} & \textbf{0.00} & 2.33 \\
30 & 1.00 & 1.00 & 0.35 & 0.37 & \textbf{0.84} & \textbf{0.00} & 5.17 \\
40 & 1.00 & 1.00 & 0.39 & 0.39 & \textbf{0.89} & \textbf{0.00} & 6.15 \\
50 & 1.00 & 1.00 & 0.43 & 0.43 & \textbf{0.86} & \textbf{0.00} & 11.03 \\
70 & 1.00 & 1.00 & 0.44 & 0.44 & \textbf{0.87} & \textbf{0.00} & 9.61 \\
80 & 1.00 & 1.00 & 0.44 & 0.44 & \textbf{0.86} & \textbf{0.00} & 10.82 \\ \hline
\multicolumn{8}{c}{\textbf{Driverlog-P}} \\ \hline
 & \multicolumn{2}{c}{\textbf{$P^{sem}_{\pre}$}} & \multicolumn{3}{c}{\textbf{$R^{sem}_{\pre}$}} & \multicolumn{2}{c}{\textbf{$MSE$}} \\ \midrule
\textbf{$|T|$} & \textbf{\begin{tabular}[c]{@{}l@{}}\nsam\\ \algname\end{tabular}} & \textbf{\pmn} & \textbf{\nsam} & \textbf{\algname} & \textbf{\pmn} & \textbf{\begin{tabular}[c]{@{}l@{}}\nsam\\ \algname\end{tabular}} & \textbf{\pmn} \\
1 & 1.00 & 1.00 & 0.17 & \textbf{0.20} & 0.03 & 0.00 & 0.00 \\
2 & 1.00 & 1.00 & 0.17 & \textbf{0.40} & 0.30 & \textbf{0.00} & 0.04 \\
3 & 1.00 & 1.00 & 0.17 & \textbf{0.60} & \textbf{0.47} & \textbf{0.00} & 103.74 \\
4 & 1.00 & 1.00 & 0.23 & 0.64 & \textbf{0.80} & \textbf{0.00} & 56.45 \\
5 & 1.00 & 1.00 & 0.41 & 0.66 & \textbf{0.93} & \textbf{0.00} & 0.02 \\
6 & 1.00 & 1.00 & 0.53 & 0.63 & \textbf{1.00} & \textbf{0.00} & 0.01 \\
7 & 1.00 & 1.00 & 0.59 & 0.65 & \textbf{1.00} & \textbf{0.00} & 0.01 \\
8 & 1.00 & 1.00 & 0.59 & 0.65 & \textbf{1.00} & \textbf{0.00} & 0.01 \\
9 & 1.00 & 1.00 & 0.60 & 0.60 & \textbf{0.77} & \textbf{0.00} & 0.01 \\
10 & 1.00 & 1.00 & 0.60 & 0.60 & \textbf{0.97} & \textbf{0.00} & 0.02 \\
20 & 1.00 & 1.00 & 0.62 & 0.62 & \textbf{1.00} & \textbf{0.00} & 0.01 \\
30 & 1.00 & 1.00 & 0.65 & 0.65 & \textbf{1.00} & \textbf{0.00} & 0.01 \\
40 & 1.00 & 1.00 & 0.69 & 0.69 & \textbf{1.00} & \textbf{0.00} & 0.01 \\
50 & 1.00 & 1.00 & 0.73 & 0.73 & \textbf{0.80} & \textbf{0.00} & 0.01 \\
60 & 1.00 & 1.00 & 0.74 & 0.74 & \textbf{1.00} & \textbf{0.00} & 0.01 \\
80 & 1.00 & 1.00 & \textbf{0.75} & \textbf{0.75} & 0.20 & 0.00 & 0.00 \\ \hline
\end{tabular}%
}
\caption{Comparison of the average semantic performance for \nsam, \algname and \pmn for the Rovers and Driverlog-P domains.}
\label{tab:rovers-driverlog-p-semantic}
\end{table}

\subsection{Results: Model Effectiveness}
Figures~\ref{fig:problem-solved1}–\ref{fig:problem-solved2} show the coverage results,
i.e., the average percentage of test set problems \textit{solved} and \textit{validated}, as a function of the number of input trajectories for \nsam, \algname, and \pmn.
On average, \algname performs as well as \nsam, and in the Rovers and Zenotravel domains, \algname highly outperforms \nsam when given a few trajectories.
In the Rovers domain, given between 10 and 30 input trajectories, domains learned by \algname solved approximately 3\%–16\% more problems than those learned by \nsam.
In the Zenotravel domain, this trend is even more profound: domains learned by \algname solved 75\% of the test set problems, whereas those learned by \nsam failed to solve any.

When comparing \nsam and \algname against \pmn, we observed mixed results.
In the domains Depots, Driverlog (L and P), and Rovers, \pmn mostly outperforms \nsam and \algname.
Conversely, in the Counters, Farmland, Sailing, and Zenotravel domains, \pmn’s coverage rates are significantly lower than those of \algname and \nsam.

In the Counters, Farmland, Sailing, and Rovers domains, the domains learned by \pmn resulted in a high percentage of inapplicable plans.
Table~\ref{tab:inapplicable-plan-rates} presents the average percentage of the inapplicable plans created using the domains learned by \pmn for each of the experimented domains. 
We note that since \pmn could not learn a domain for the Minecraft experiments the domain was omitted. 
In contrast to \pmn, neither \nsam nor \algname produced any inapplicable plans, consistent with their safety guarantees.

In the Zenotravel domain, the planners deemed between 95\%-100\% of the test set problems to be unsolvable using the domain learned by \pmn. 
This can be explained by errors observed in the preconditions and effects learned by the \pmn algorithm (Table~\ref{tab:sailing-satellite-zenotravel-semantic}).

Generally, domains with more complex numeric preconditions and effects resulted in poorer performance for \pmn.
In the Satellite domain, \pmn's learning process timed out when given 50 or more trajectories, and the algorithm could not provide a domain.
Even more severely, in the Minecraft domain --- the only domain featuring more than one numeric \pbf in (see Table~\ref{tab:numereric-domains} columns ``$avg$ $|\pre^{X}|$'' and ``$avg$ $|\eff^{X}|$'') --- \pmn raised exceptions and was unable to learn the action model given any number of trajectories.

\begin{table}[H]
    \centering
    \resizebox{\textwidth}{!}{%
        \begin{tabular}{@{}ccccccccc@{}}
        \toprule
            Domain& Counters & Depots & Driverlog L+P & Farmland & Sailing & Rovers & Satellite  & Zenotravel \\ \midrule
           \%Inapplicable & 31 & 4 & 0 & 75 & 100 & 36 & 1  & 1 \\ \bottomrule
        \end{tabular}%
    }
    \caption{Average percent of the test set problems that were deemed inapplicable for the \pmn algorithm for each of the experimented domains.}
    \label{tab:inapplicable-plan-rates}
\end{table}


\begin{figure}[H]
     \centering
     \begin{subfigure}[b]{0.44\textwidth}
         \centering
         \caption{Counters}
         \includegraphics[width=\columnwidth]{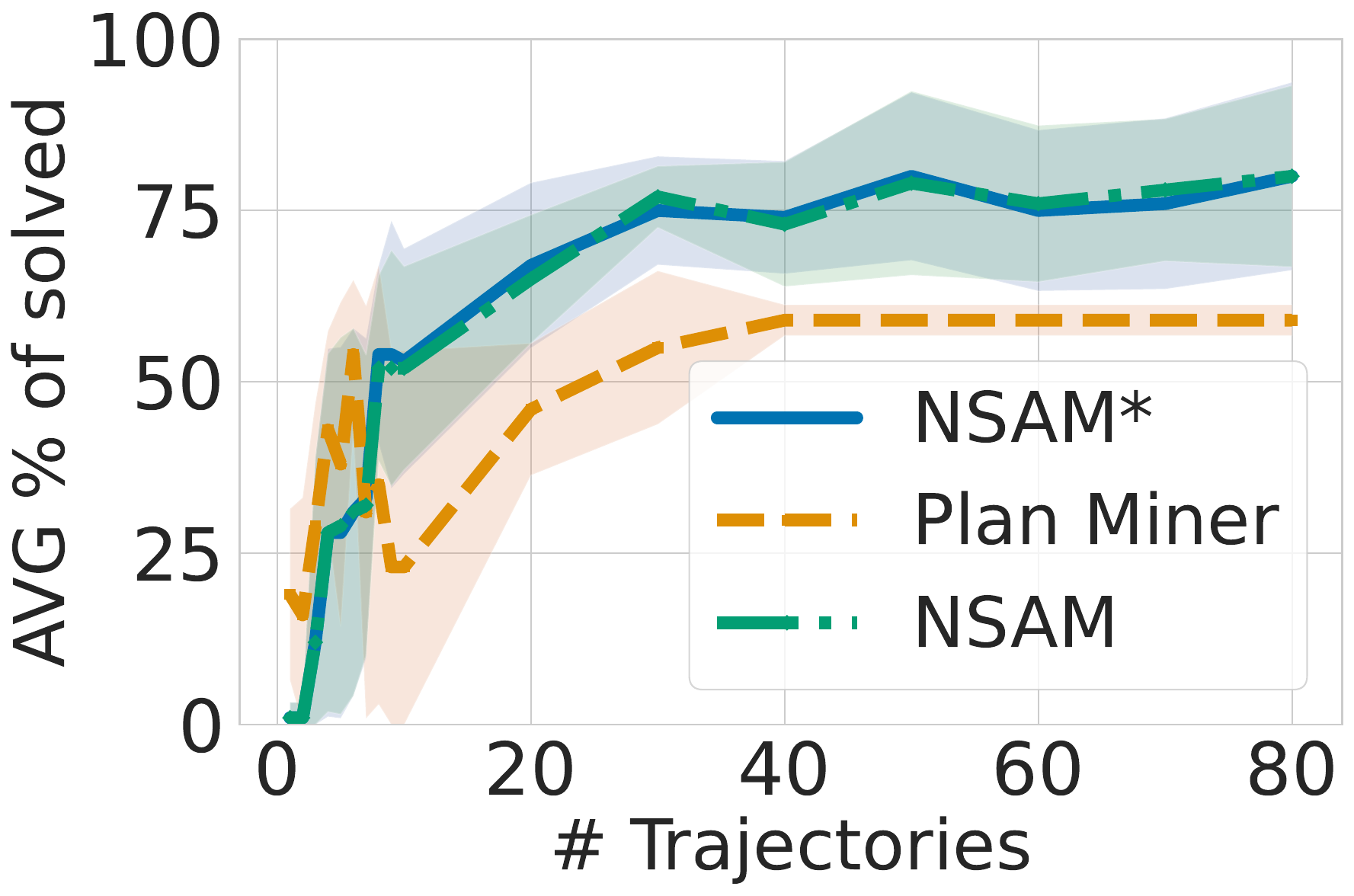}
     \end{subfigure}
      \begin{subfigure}[b]{0.44\textwidth}
         \centering
         \caption{Depots}
         \includegraphics[width=\columnwidth]{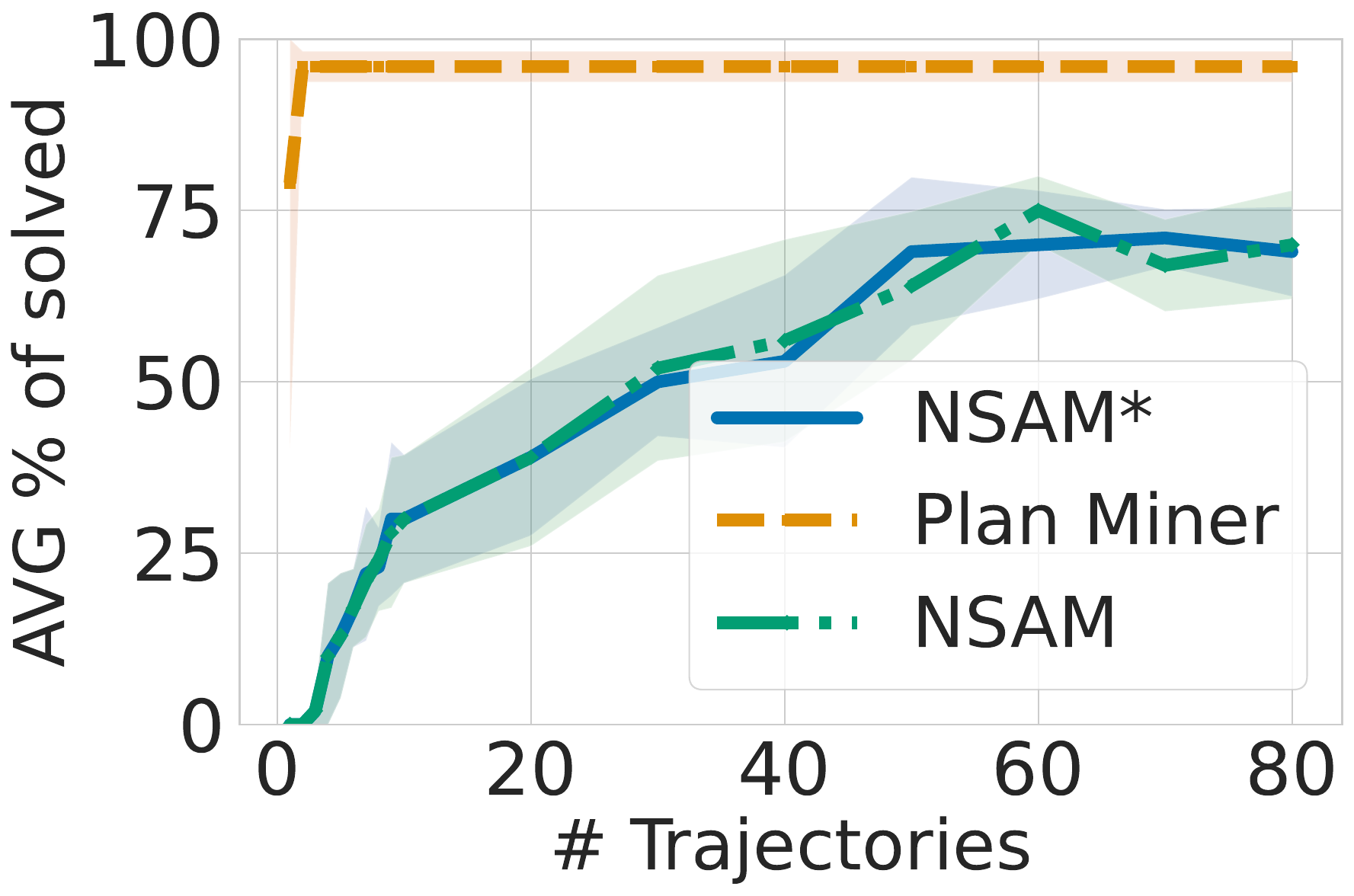}
     \end{subfigure}
       \begin{subfigure}[b]{0.44\textwidth}
         \centering
         \caption{Driverlog-L}
         \includegraphics[width=\columnwidth]{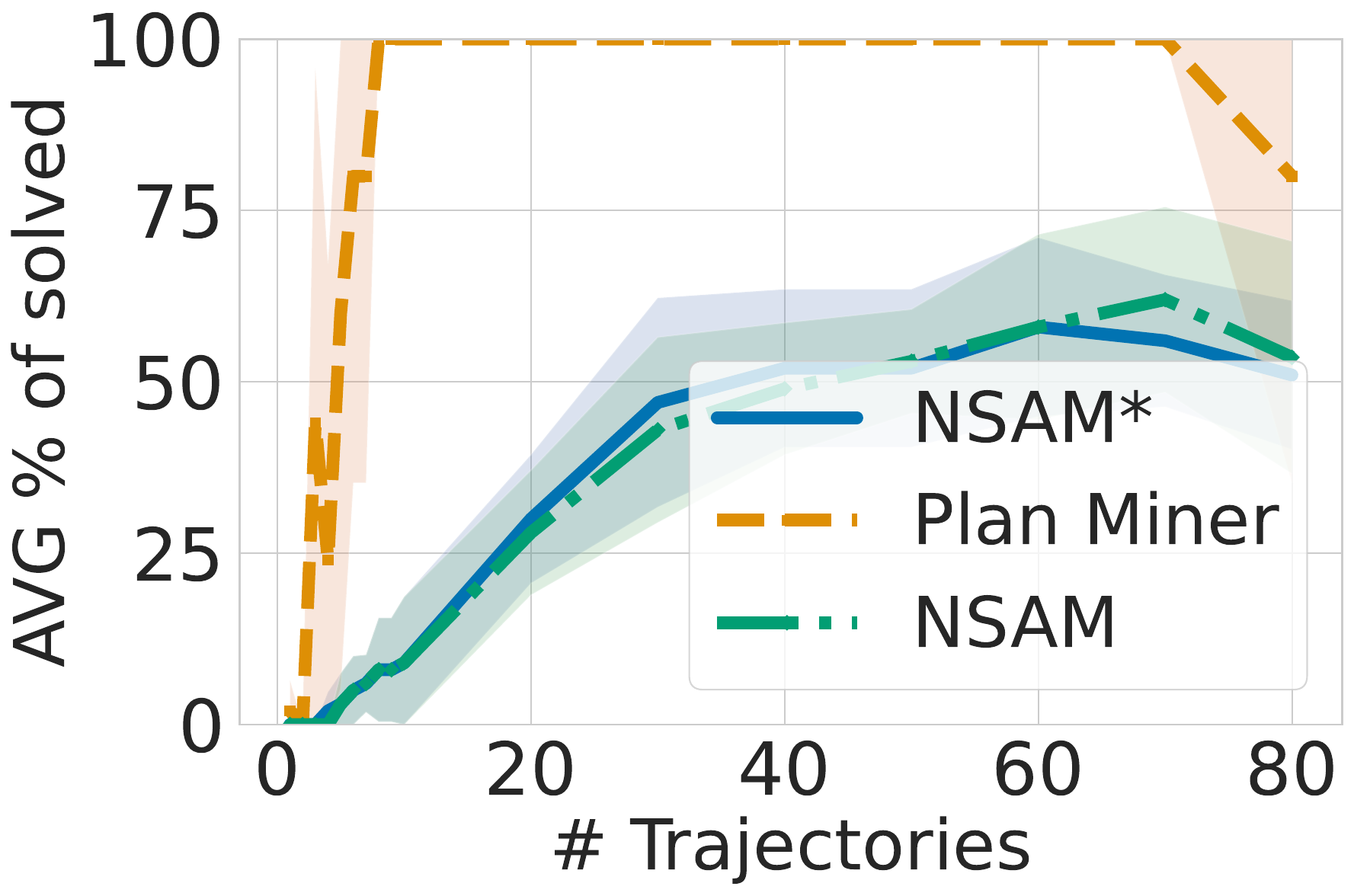}
     \end{subfigure}
    \begin{subfigure}[b]{0.44\textwidth}
         \centering
         \caption{Farmland}
         \includegraphics[width=\columnwidth]{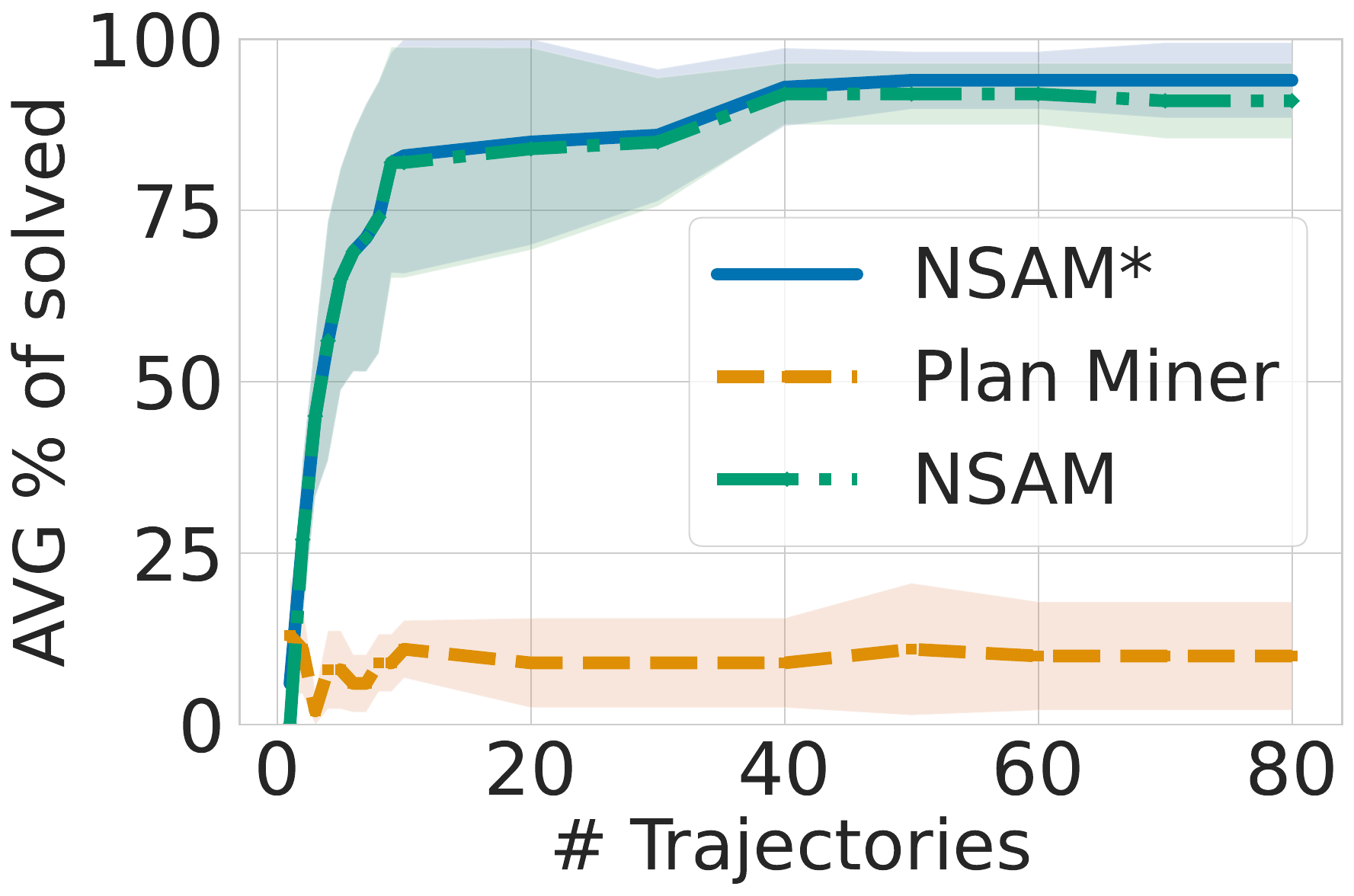}
     \end{subfigure}
    \begin{subfigure}[b]{0.44\textwidth}
         \centering
         \caption{Minecraft}
         \includegraphics[width=\columnwidth]{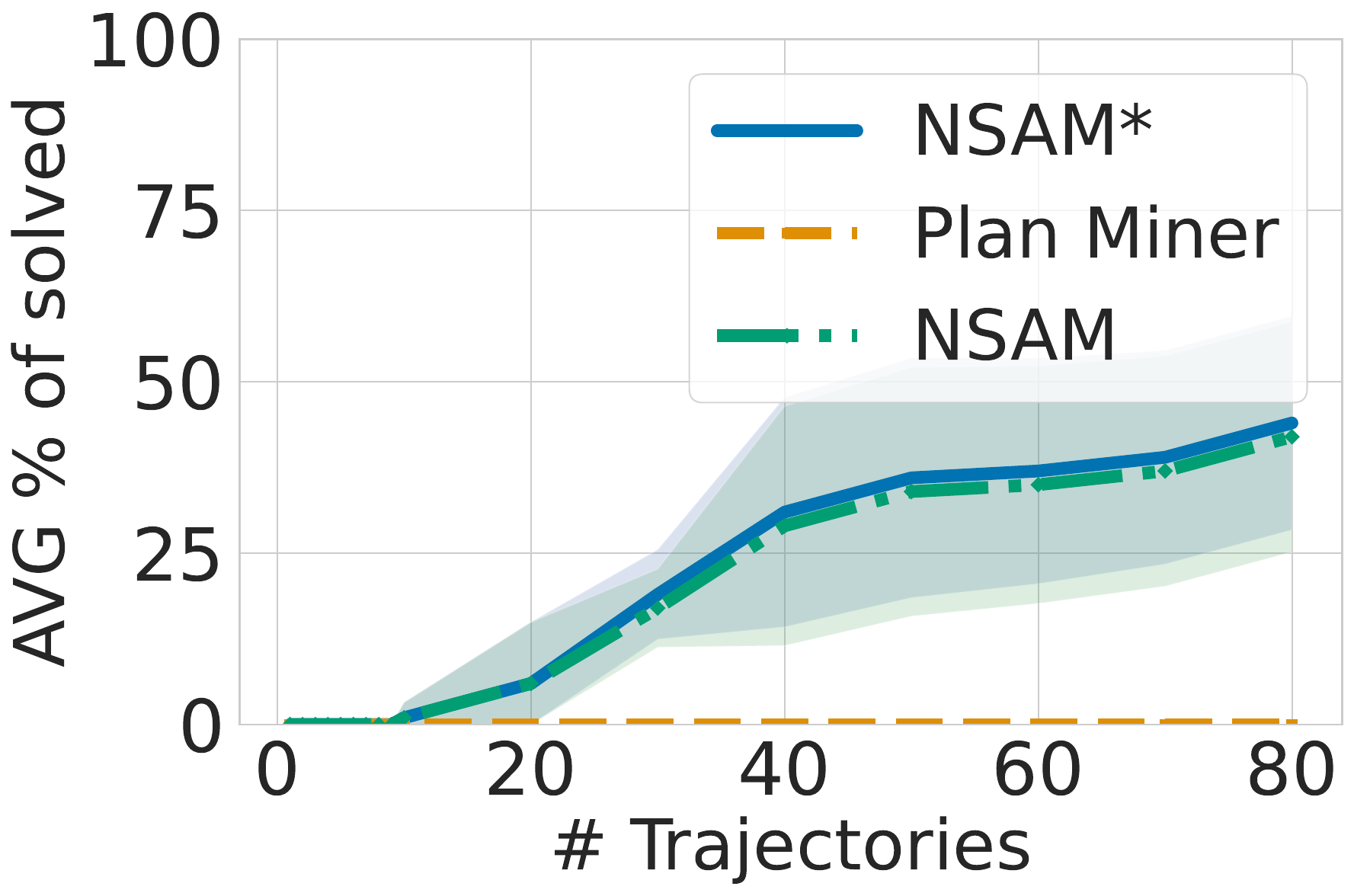}
     \end{subfigure}
     \begin{subfigure}[b]{0.44\textwidth}
         \centering
         \caption{Sailing}
         \includegraphics[width=\columnwidth]{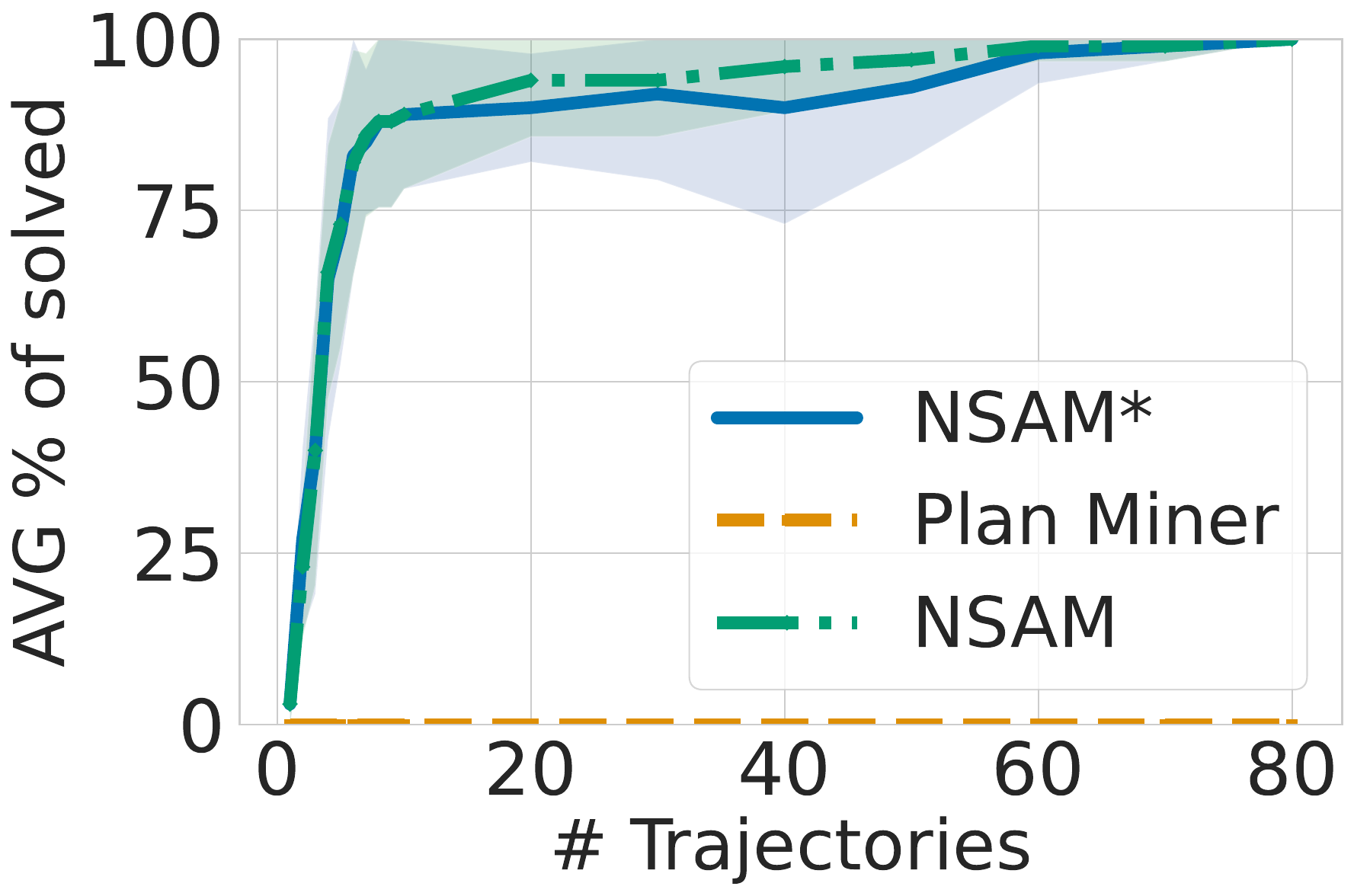}
     \end{subfigure}
    \caption{Comparison of the coverage rates as a function of the number of input trajectories. The orange dashed line shows \pmn, the blue line represents \algname, and the green dashed line denotes \nsam. The figure presents the results for the Counters, Depots, Driverlog-L, Farmland, Minecraft and Sailing domains.}
    \label{fig:problem-solved1}   
    \vspace{-0.3cm}     
\end{figure}

\begin{figure}[H]
     \centering
     \begin{subfigure}[b]{0.44\textwidth}
         \centering
         \caption{Rovers}
         \includegraphics[width=\columnwidth]{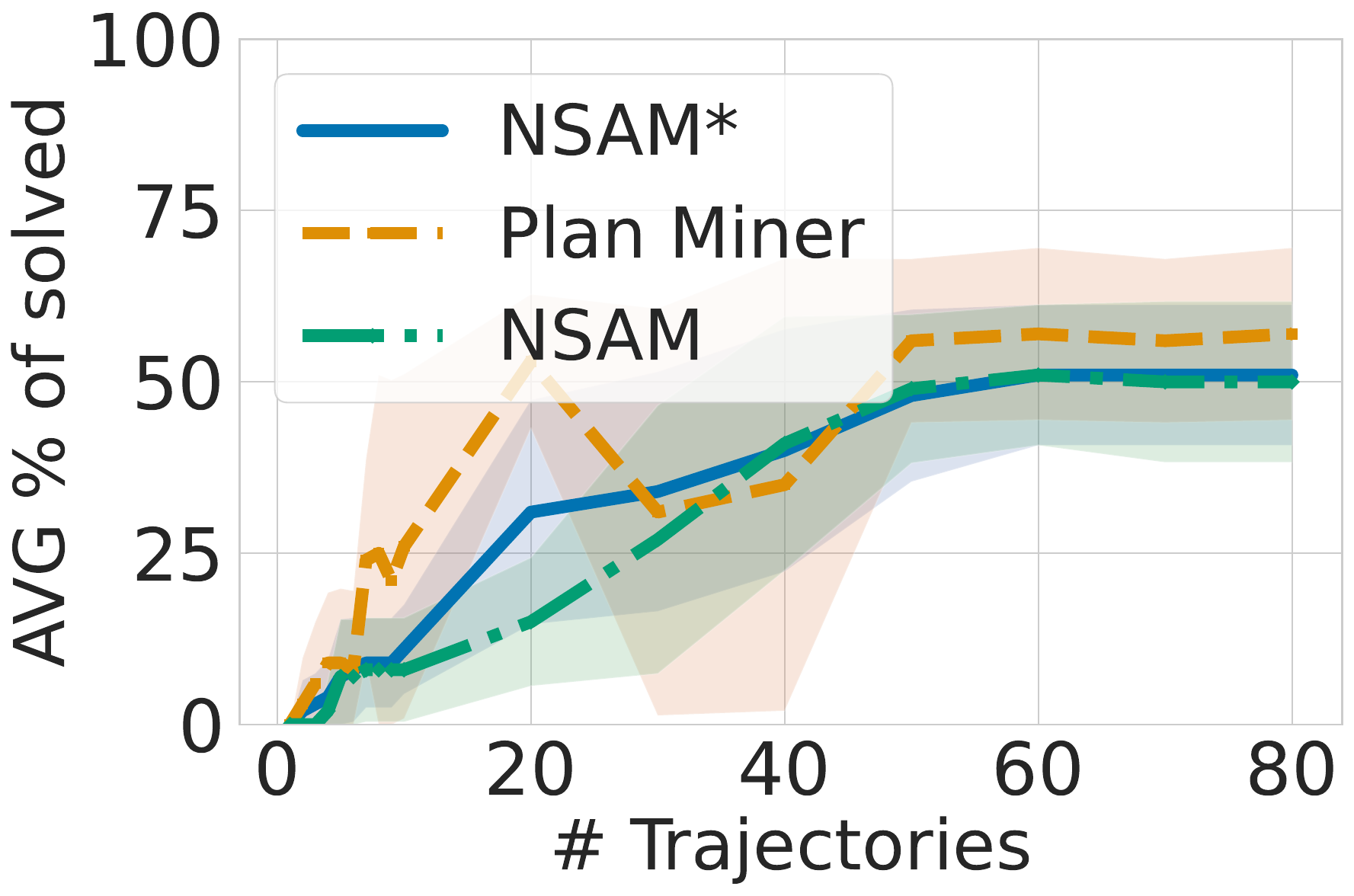}
     \end{subfigure}
      \begin{subfigure}[b]{0.44\textwidth}
         \centering
         \caption{Satellite}
         \includegraphics[width=\columnwidth]{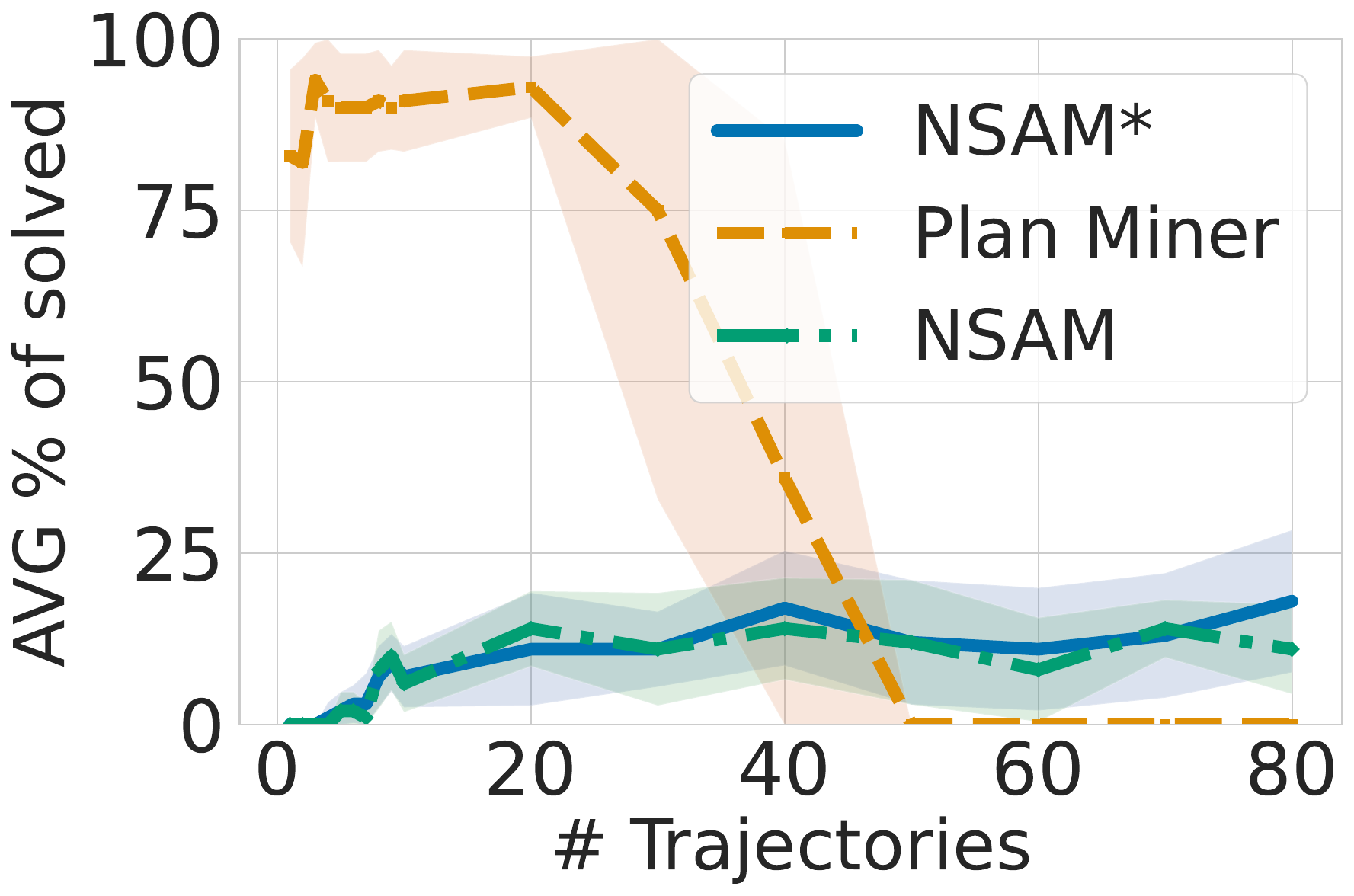}
     \end{subfigure}
       \begin{subfigure}[b]{0.44\textwidth}
         \centering
         \caption{Driverlog-P}
         \includegraphics[width=\columnwidth]{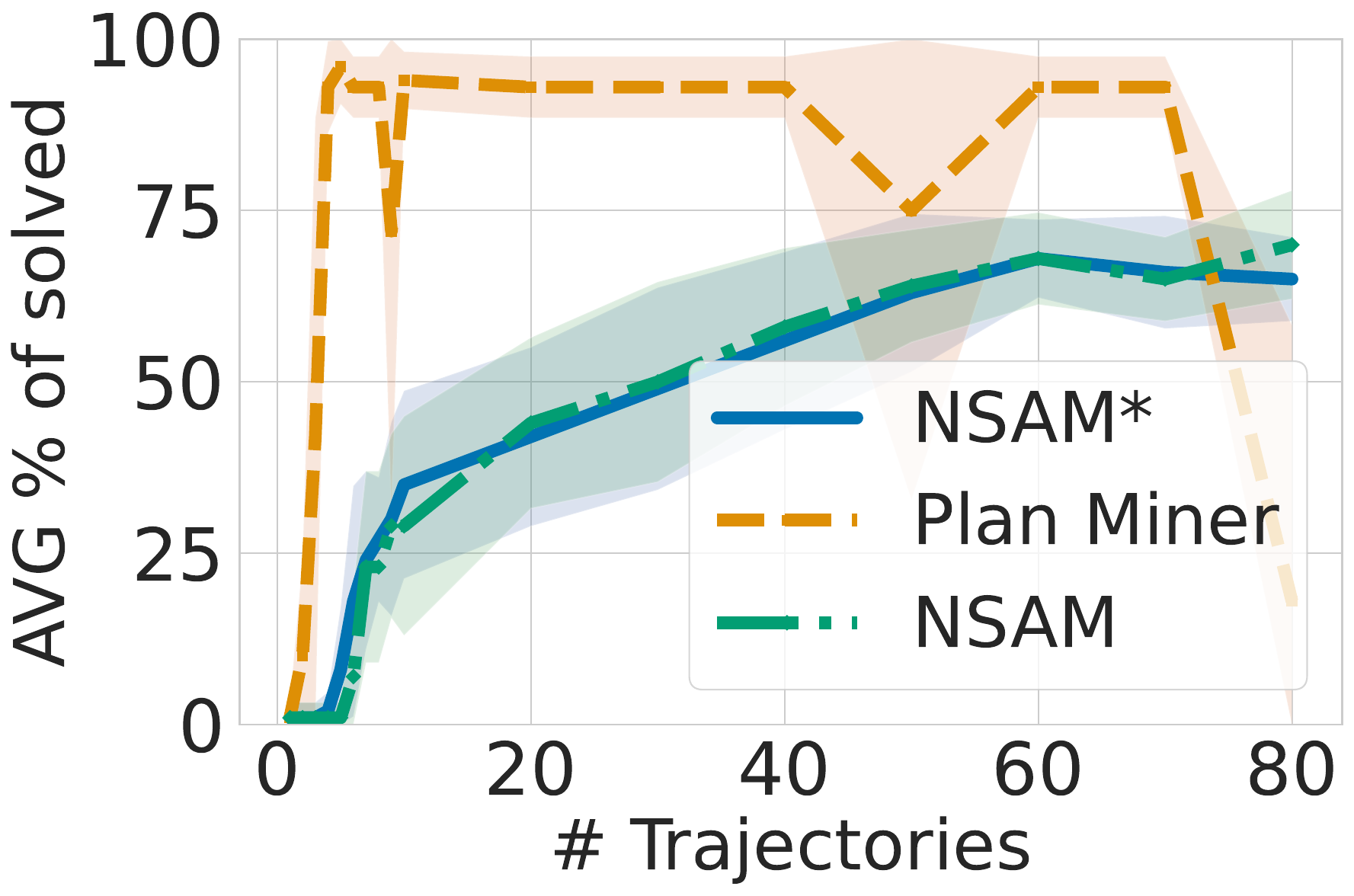}
     \end{subfigure}
    \begin{subfigure}[b]{0.44\textwidth}
         \centering
         \caption{Zenotravel}
         \includegraphics[width=\columnwidth]{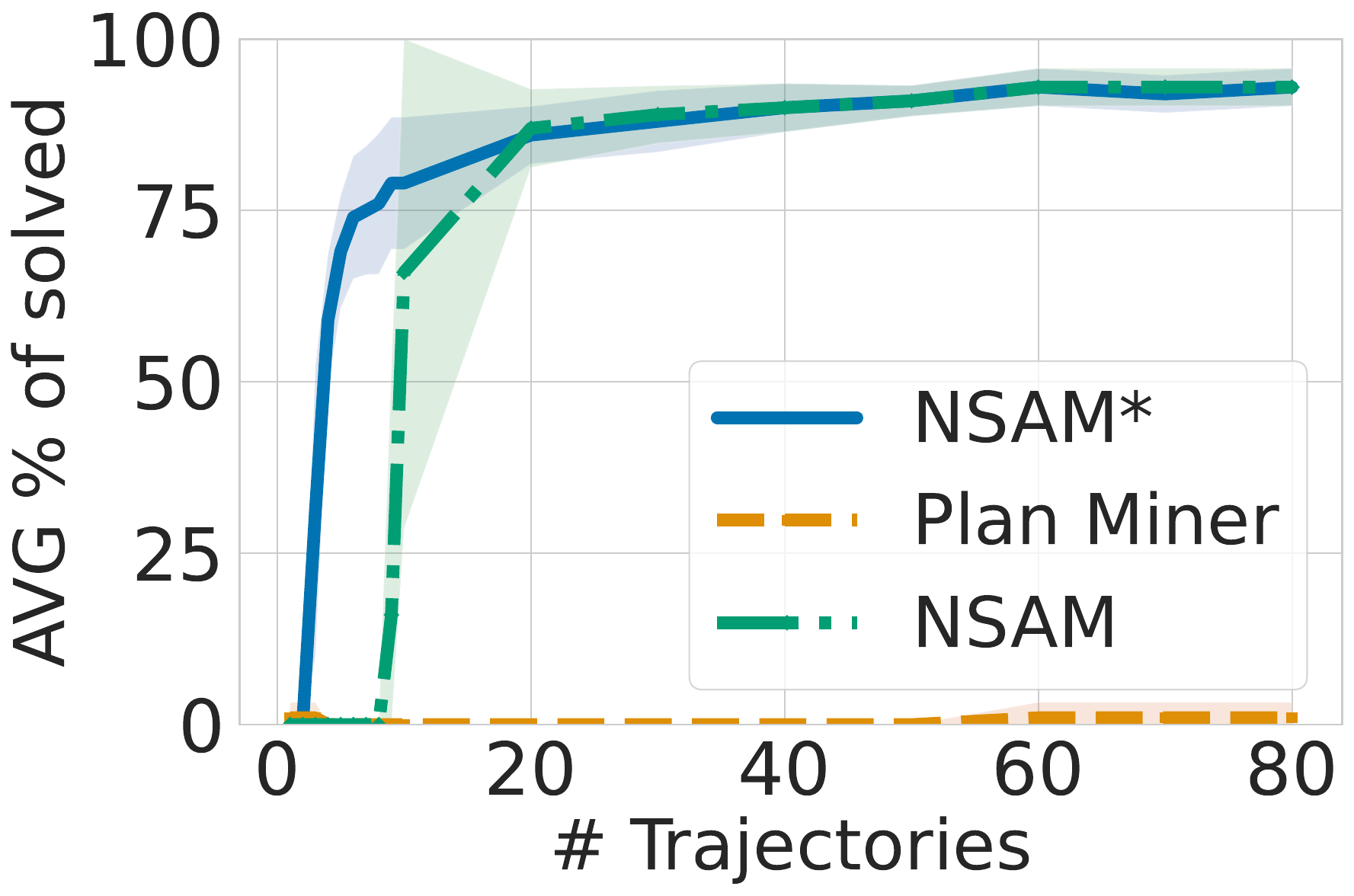}
     \end{subfigure}
    \caption{Comparison of the coverage rates as a function of the number of input trajectories. The orange dashed line shows \pmn, the blue line represents \algname, and the green dashed line denotes \nsam. The figure presents the results for the Rovers, Satellite, Driverlog-P, and Zenotravel domains.}
    \label{fig:problem-solved2}   
    \vspace{-0.3cm}     
\end{figure}

\section{Discussion: Numeric Imprecision}
\label{sec:numeric-precision}
So far, we assumed the learning process does not introduce any numerical error. 
This assumption is not correct when performing numeric calculations on digital computers. 
In this section, we discuss the implications of numeric impressions on the behavior and properties of \nsam . 

There are multiple ways to define and bound the numeric impressions introduced by the search. In the remainder of this section, we consider the following, intuitive, bound on the numeric imprecision introduced by the effects, which is based on the $L_\infty$ distance between the real effects and the ones learned. 

\begin{definition}[Bounded Effects Imprecision]
For a given $\epsilon>0$, the bounded effects imprecision assumption states that for every action $a$, state $s$, and action model $M$ returned by \nsam, it holds that
$|a_\realm(s)-a_M(s)|_\infty\leq \epsilon$.
\end{definition}
The Bounded Effects Imprecision (\bei) assumption means that for every grounded function $f$, the difference between $f(a_\realm(s))$ and $f(a_M(s))$ is at most $\epsilon$. 
Clearly, under the \bei the action model $M$, returned by \nsam is not safe, since plans generated with it may be inapplicable or not achieve their goal.

\begin{definition}[The In-Boundary assumption]
Let $\ell$ be an upper bound on the size of plans in the domain, and let $CH_\realm(a)$ be the convex hull defined by the preconditions of action $a$ according to action model $\realm$. 
A set of trajectories satisfies the In-Boundary assumption if for every action $a$ and every state $s$ in which $a$ is applied in the given set of trajectories, it holds that the $L_\infty$ distance between $s$ and $CH_\realm(a)$ is at least $\ell\cdot\epsilon$. 
\end{definition}

Under the In-Boundary and \bei assumptions, we are guaranteed that plans generated by the action model learned by \nsam are safe, in the sense that they are also applicable according to the real action model.
Even under these assumptions, safety is only guaranteed if the goal does not include numeric functions. Ensuring a goal with numeric equality is impossible, due to the numeric imprecision. 
To examine the impact of numeric imprecision, we conducted experiments on the Zenotravel domain, configuring numeric computations to a fixed decimal accuracy. Our previously presented experiments used calculations precise up to four decimal digits; here, we evaluated additional accuracy with one, two, and eight decimal digits. 
Figure~\ref{fig:zenotracel-numeric-accuracy} illustrates the coverage rates as a function of the number of input trajectories for each numeric accuracy setting when using the \algname algorithm. Specifically, the blue solid line, orange dashed line, green solid line, and pink dashed line represent domains configured with one, two, four, and eight decimal digits, respectively. 

As detailed in Section~\ref{sec:experimental-setup}, both the planning algorithms and VAL employed a numeric tolerance of 0.1.
Given that numeric errors accumulate with increased plan length, domains configured with two or fewer decimal digits were expected to show lower coverage compared to more precise configurations. Interestingly, increasing numeric precision from four to eight decimal digits resulted in no noticeable improvement. This suggests that numeric precision beyond four decimal digits does not meaningfully impact the domain's ability to solve problems.

\begin{figure}[H]
    \centering
    \includegraphics[width=0.65\linewidth]{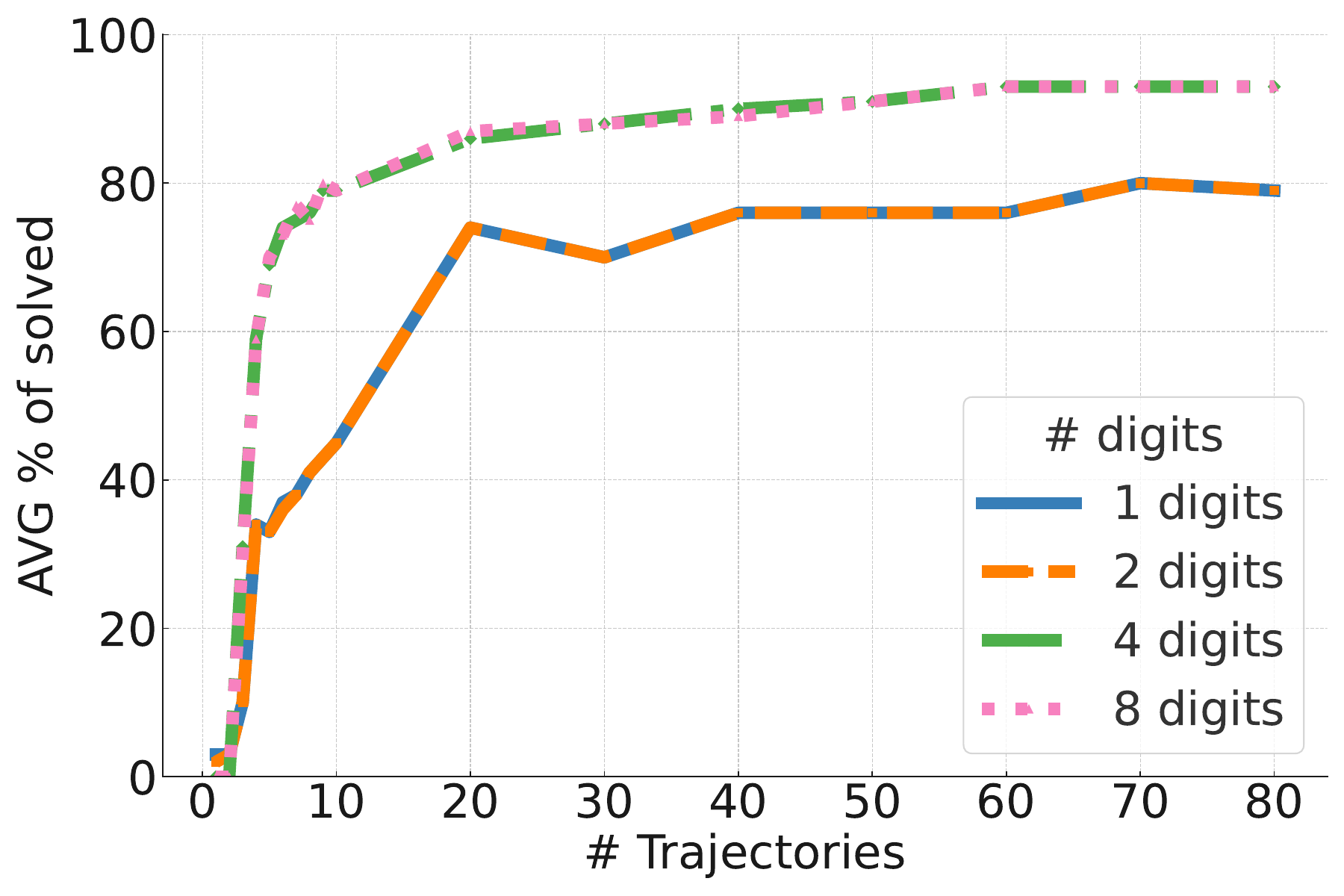}
    \caption{Zenotravel Domain's coverage rates when configured different numbers of decimal digits.}
    \label{fig:zenotracel-numeric-accuracy}
\end{figure}

\section{Conclusion and Future Work}
We analyzed the problem of learning a safe numeric action model and 
identified a set of reasonable assumptions that allow learning such a model. 
Then, we presented the \nsam algorithm, which under this set of assumptions is guaranteed to learn a safe action model for numeric planning domains. 
We showed that the worst-case sample complexity of \nsam does not scale gracefully, but it works well on standard benchmarks. 
A critical limitation of \nsam is that it requires observing for each lifted action $\lifta$ at least $|X(\lifta)|+1$ affine-independent observations to learn the action.
If the input observations contain fewer independent samples, the \nsam considers the action unsafe, and it will always be inapplicable.
To overcome this limitation, we presented \algname, an enhancement to the \nsam algorithm that can learn applicable action models with as little as one observation of every action. 
We provided theoretical guarantees to \algname, proving it is an optimal approach to learning numeric action models while still maintaining the safety property. 
Our experimental results show that \algname can learn domains faster, and has an overall comparable performance to \nsam, and in two domains, \algname highly outperforms \nsam.
Comparison of \nsam and \algname to \pmn showed that in domains containing more numeric preconditions and effects, both \nsam and \algname outperform \pmn.
We also provided a discussion on the implications of numeric imprecision during the action model learning process.
In future work, we intend to extend our approach to support online numeric action model learning.
We also intend to explore the effects of reducing some of the safety guarantees and their effect on the resulting learned action models.

\section*{Acknowledgments}
This work was supported by the Israel Science Foundation (ISF)
grant \#909/23, by Israel's Ministry of Innovation, Science and Technology (MOST) grant \#1001706842, in collaboration with Israel National Road Safety Authority and Netivei Israel, and by BSF grant \#2024614, all awarded to Shahaf Shperberg.
This work was also supported by ISF grant \#1238/23 and BSF \#2018684 grant to Roni Stern. 
This work was partially performed while Brendan Juba was at the Simons Institute for the Theory of Computing, and partially funded by NSF awards IIS-1908287, IIS-1939677, and IIS-1942336 to Brendan Juba.

\bibliographystyle{elsarticle-harv}
\bibliography{aij}
\end{document}